\icmltitlerunning{Margins, Shrinkage, and Boosting}
\def\R{\mathbb R}
\def\bG{\mathbb G}
\def\bL{\mathbb L}
\def\cH{\mathcal H}
\def\cL{\mathcal L}
\def\cM{\mathcal M}
\def\cO{\mathcal O}
\def\cX{\mathcal X}
\def\sfP{\mathsf P}
\def\im{\textup{im}}
\def\bfe{\textbf e}
\newcommand{\1}{\mathds{1}}
\newcommand\alqubT[2]{\alpha_{#1}^{\textup{Q}}(#2)}
\newcommand\aloptT[2]{\alpha_{#1}^{\textup{O}}(#2)}
\newcommand\alwolT[2]{\alpha_{#1}^{\textup{W}}(#2)}
\newcommand\aladaT[2]{\alpha_{#1}^{\textup{A}}(#2)}
\newcommand\alqub[1]{\alqubT{#1}{\nu}}
\newcommand\alopt[1]{\aloptT{#1}{\nu}}
\newcommand\alwol[1]{\alwolT{#1}{\nu}}
\newcommand\alada[1]{\aladaT{#1}{\nu}}
\newcommand{\argmax}{\operatornamewithlimits{arg\,max}}
\numberwithin{equation}{section}
\declaretheorem[numberlike=equation]{theorem}
\declaretheorem[numberlike=theorem]{lemma}
\declaretheorem[numberlike=theorem]{proposition}
\declaretheoremstyle[%
qed={\ensuremath\Diamond}]{remstyle}
\declaretheorem[numberlike=theorem,style=remstyle]{definition}
\begin{document}

\twocolumn[
\icmltitle{Margins, Shrinkage, and Boosting}

\icmlauthor{Matus Telgarsky}{mtelgars@cs.ucsd.edu\hspace{-0.495cm}}
\icmladdress{Department of Computer Science and Engineering, UCSD,
9500 Gilman Drive, La Jolla, CA 92093-0404}

\icmlkeywords{margins, shrinkage, boosting}

\vskip 0.3in
]

\begin{abstract}
    This manuscript shows that AdaBoost and its immediate variants can produce
    approximate maximum margin classifiers simply by scaling step size choices with
    a fixed small constant.  In this way, when the unscaled step size is an
    optimal choice, these results provide guarantees for Friedman's empirically
    successful ``shrinkage'' procedure for gradient boosting
    \citep{friedman_gradient_boosting}.
    Guarantees are
    also provided for a variety of other step sizes,
    affirming the intuition
    that increasingly regularized line searches provide improved margin
    guarantees.  The results hold for the exponential loss and similar losses,
    most notably the logistic loss.
\end{abstract} 

\section{Introduction}

AdaBoost and related boosting algorithms greedily aggregate many simple predictors
into a single accurate predictor~\citep{freund_schapire_adaboost}.  One explanation
for the efficacy of boosting is that it not only seeks aggregates with low
empirical risk,
but moreover that it prefers good margins, which leads to improved
generalization \citep{boosting_margin}.  Since AdaBoost does not attain maximum margins on
general instances, a push was made to develop methods which carry such a
guarantee~\citep{warmuth_maxmargin_boosting,shai_singer_weaklearn_linsep,rudin_smooth_margin}.

This work shows that margin maximization may be achieved by scaling back the step size.  The
intuition for this result is simple (cf. \Cref{fig:MAGIC}):
when (equivalently) considered as steps in a
coordinate descent
procedure, the iterates, depicted as a path, approximate the path of constrained optima
(for all possible choices of constraint).  By scaling back the step size, the optimal
path is more finely approximated.
As there have been many proposed step sizes for these methods, this manuscript will study
four separate choices, deriving improved bounds for the more regularized choices.
While it has been shown before that regularized step sizes have good generalization and
asymptotically good margins~\citep{zhang_yu_boosting},
this manuscript shows that straightforward step choices achieve these margins at rates
matching explicitly margin-maximizing boosting methods.

This practice of scaling back weights was proposed by
\citet[Section 5]{friedman_gradient_boosting}, who referred to it as
a shrinkage scheme \citep{copas_shrinkage}.
This scheme is effective, and adopted in practice
(see for instance \citet[Class \texttt{CvGBTrees}]{opencv_library} and
\citet[Class \texttt{GradientBoostingClassifier}]{scikit-learn});
the purpose of this manuscript is to provide theoretical guarantees.

\begin{figure}[b!]
\begin{center}
\centerline{\includegraphics[width=\columnwidth]{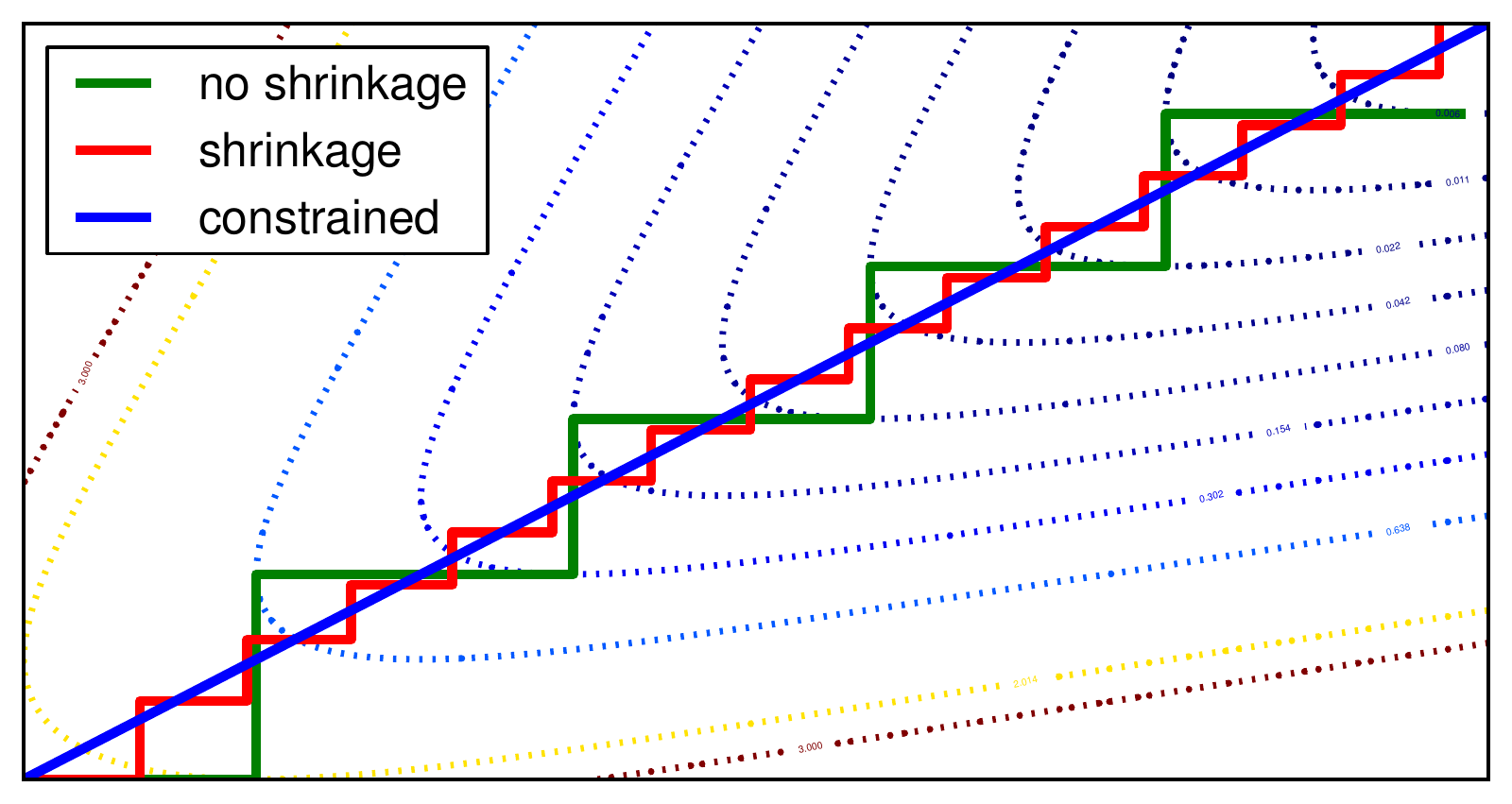}}
\caption{The blue diagonal line is the empirical risk minimizer subject to
    varying $l^1$ constraints, and is also a maximum margin choice.  The green
    line takes optimal steps, and grossly overshoots the optimal path.  By
    applying mild shrinkage, the red line approximates
the maximum margin choice much more finely.}
\label{fig:MAGIC}
\end{center}
\end{figure}

\subsection{Outline}
After summarizing the main content,
this introduction closes with connections to related work;
thereafter,
\Cref{sec:setup} recalls the core algorithm, defines the class of loss functions,
and provides the four step sizes.

As boosting is generally studied under the weak learning assumption (a separability
condition), the dominant study in this manuscript is also under the condition of separability,
and appears in \Cref{sec:sep}.  The first step is to show that shrinkage does not drastically
change the rate of convergence of the empirical risk under these methods.  The more involved
study is on the topic of margins, and the final subsection compares these bounds to those
of other methods.

General (potentially nonseparable) instances are discussed in \Cref{sec:general}.
Once again, the first step is a convergence rate guarantee, which again matches those
without shrinkage.  This section also demonstrates that, under a certain decomposition
of boosting problems, the algorithm is still achieving margins on a separable
sub-component of the problem.

The manuscript closes with some discussion in \Cref{sec:discussion}.  All proofs
are relegated to appendices (in the supplementary material).

\subsection{Related Work}
Three close works proposed regularized line searches for boosting.
First,
\citet{friedman_gradient_boosting} gave the same scheme as is considered here (albeit with
only the optimal line search); follow-up work has been mainly empirical, and the questions of
convergence rates and margin guarantees do not appear in the literature.
Second, \citet{zhang_yu_boosting} also considered regularized line searches,
but with a goal of proving consistency; margin maximization is proved as a byproduct, and
the analogous results here hold under fewer conditions, and come with rates for the more
stringent step sizes.
A third work, due to \citet{ratsch_soft_margins_adaboost}, also proves margin maximizing
properties of regularized line searches, but again without rates.

As mentioned in the introduction, margin maximization properties of AdaBoost have received
extensive study; an excellent survey of results with pointers to other literature is
provided by \citet[Chapter 5]{schapire_freund_book_final}.
Amongst these, a crucial result, due to \citet{rudin_adaboost_bad_margin}, provides
a concrete input to AdaBoost which yields suboptimal margins (which is used
in \Cref{sec:sep:discussion}); that work also studies the
evolution of these margins as a dynamical system, a topic which will reappear
in \Cref{sec:discussion}.

The primary contribution of this manuscript
is to exhibit margin maximization, thus a natural comparison is to other algorithms
with this same guarantee, for instance the works of
\citet{warmuth_maxmargin_boosting},
\citet{shai_singer_weaklearn_linsep},
and \citet{rudin_smooth_margin}
(or again refer to \citet[Chapter 5, Bibliographic Notes]{schapire_freund_book_final} for a more
extensive summary).
This manuscript will briefly compare with the methods of \citet{shai_singer_weaklearn_linsep},
which subsume some earlier results and match the best guarantees, along with giving
a simple,
general, greedy scheme.
The key distinction between previous work and the present work is firstly that the
algorithmic modifications here are minor (in particular, the form of unregularized empirical
risk minimization is unchanged), and that properties of an existing, widely
used method are discerned (namely, the shrinkage procedure presented
by \citet{friedman_gradient_boosting}).

As is standard in the above works, this manuscript is only concerned with convergence
of empirical quantities.

In order to prove convergence rates, this work relies heavily on techniques due
to \citet{primal_dual_boosting_arxiv}.
In particular, the scheme to prove convergence rates of empirical risk,
detailed properties of splitting out a \emph{hard core} from a boosting instance
(cf. \Cref{sec:general}), and the notion of relative curvature (cf. \Cref{sec:setup:loss})
are all due to \citet{primal_dual_boosting_arxiv}.
The intent of the present manuscript is to establish margin properties, and in this
regard it departs from \citet{primal_dual_boosting_arxiv};
by contrast, the convergence rates of empirical risk presented here are thus trivial,
but included since they did not appear explicitly in the literature.
It is worth mentioning that these methods produce bad constants when applied to the
logistic loss; unfortunately, previous work also suffers in this case
(for instance, the work of
\citet{collins_schapire_singer_adaboost_bregman} provided only convergence of
empirical risk, and not rates).

\section{Algorithms and Notation}
\label{sec:setup}

First some basic notation.  Let $\{(x_i,y_i)\}_{i=1}^m \subseteq \cX\times\{-1,+1\}$ denote
an $m$-point sample.  Take $\cH_0$ to denote the collection of weak learners; it is assumed
that $h\in\cH_0$ satisfies $h(\cX) \subseteq [-1,+1]$, and that $\cH_0$ has some form of
bounded complexity, meaning specifically that the set of vectors
$\{ (h(x_1), \ldots, h(x_m)) : h\in \cH_0\}$ is finite;  this for instance holds if
there is a fixed finite set of outputs from $\cH_0$, e.g., each $h$ is binary.
Consequently, let $\cH = \{h_j\}_{j=1}^n$ denote the effective finite set of hypothesis,
and collect the responses on the sample into a matrix $A\in [-1,+1]^{m\times n}$
with $A_{ij} = -y_i h_j(x_i)$.

Boosting finds a weighting $\lambda \in \R^n$ of $\cH$,
which corresponds to a regressor
$x\mapsto \sum_{j=1}^n \lambda_j h_j(x)$,
and thus a binary classification rule after thresholding.
The corresponding ($l_1$ minimum) margin $\cM(A\lambda)$ over the sample
with respect to $\lambda$ is
\[
    \cM(A\lambda)
    :=
    \min_{i\in [m]} \frac {-\bfe_i^\top A\lambda}{\|\lambda\|_1}
    =
    \min_{i\in [m]} \frac {y_i \sum_{j=1}^n \lambda_j h_j(x_i)}{\|\lambda\|_1}
    .
\]
Let $\gamma$ denote the best (largest) achievable margin;
equivalently \citep{shai_singer_weaklearn_linsep}, $\gamma$ is
the weak learning rate (which justifies the choice of $l_1$ margins):
\begin{align*}
    \gamma
    &:=
    \max_{\substack{\lambda\in \R^n\\ \|\lambda\|_1=1}}
    \cM(A\lambda)
    = \max_{\substack{\lambda\in \R^n\\ \|\lambda\|_1=1}}
    \min_{i\in [m]} -\bfe_i^\top A\lambda
    \\
    &= \min_{w \in \Delta_m} \max_{j\in [n]} \left|\sum_{i=1}^m w_i y_i h_j(x_i)\right|
    = \min_{w \in \Delta_m} \|A^\top w\|_\infty.
\end{align*}
When $\gamma>0$, the instance is considered separable; classically, this
condition is termed the \emph{weak learning assumption}
\citep{kearns_valiant_wl,freund_schapire_adaboost}.

\subsection{The Family of Loss Functions}
\label{sec:setup:loss}
The class $\bL$ will effectively be ``functions similar to the exponential loss''.
Some of this is for analytic convenience, but some of this appears to be essential,
and thus a bit of motivation is appropriate.

Optimization problems typically take advantage of curvature (e.g., strong convexity)
to establish a convergence rate.  The analysis here instead uses a relative form of curvature:
it suffices for, say, the Hessian to not be too small relative to the gap between the current
primal objective value and the primal optimum.  In this sense, the exponential loss is ideal,
as it is a fixed point of the differentiation operator.

\begin{definition}
    Given a loss $\ell : \R\to\R_{++}$
    (where $\R_{++}$ denotes positive reals),
    let $C_\ell(z)\geq 1$ (with potentially $C_\ell(z) = \infty$)
    be the tightest positive constant so that, for every $x\leq z$:
    $C_\ell(z)^{-1} \leq \exp(x) / \ell^{(i)}(x) \leq C_\ell(z)$ for $i\in \{0,1,2\}$
    (the zeroth, first, and second derivatives).
\end{definition}

Since $C_\ell(z)$ is defined to be the tightest constant, it follows that $y\leq z$
implies $C_\ell(y) \leq C_\ell(z)$.

From here, the class of loss functions may be defined.

\begin{definition}
    Let $\bL$ contain all functions $\ell:\R\to \R_+$ which are twice continuously
    differentiable, strictly convex, and have $C_\ell(z) < \infty$ for all $z\in\R$.
    Additionally, if $\lim_{z\to-\infty} C_\ell(z) = 1$, then $\ell \in \bL_\infty$.
\end{definition}

Crucially, the two classes $\bL$ and $\bL_\infty$ both contain the exponential and logistic
losses.
\begin{proposition}
    \label{fact:exp_log_bL}
    $\{x\mapsto \exp(x), x\mapsto \ln(1+\exp(x))\} \subseteq \bL_\infty$.
\end{proposition}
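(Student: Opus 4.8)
The plan is to verify the three defining properties of $\bL$ (twice continuous differentiability, strict convexity, finiteness of $C_\ell$) together with the limiting condition $\lim_{z\to-\infty}C_\ell(z)=1$ for each of the two losses separately; the exponential loss is immediate and the logistic loss reduces to elementary estimates after computing its first two derivatives.

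For $\ell(x)=\exp(x)$, all derivatives coincide with $\exp$ itself, whence $\exp(x)/\ell^{(i)}(x)=1$ identically for $i\in\{0,1,2\}$. Thus $C_\ell(z)=1$ for every $z$, the loss is smooth, strictly convex, and positive, and $\lim_{z\to-\infty}C_\ell(z)=1$ trivially, so $\exp\in\bL_\infty$.

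For the logistic loss $\ell(x)=\ln(1+\exp(x))$ I would first record $\ell'(x)=e^x/(1+e^x)$ and $\ell''(x)=e^x/(1+e^x)^2$; both are continuous and positive, so $\ell$ is twice continuously differentiable, strictly convex, and positive (since $\ln(1+e^x)>0$). Substituting, the three ratios are exactly
\[
    \frac{\exp(x)}{\ell^{(0)}(x)} = \frac{e^x}{\ln(1+e^x)},\qquad
    \frac{\exp(x)}{\ell^{(1)}(x)} = 1+e^x,\qquad
    \frac{\exp(x)}{\ell^{(2)}(x)} = (1+e^x)^2 .
\]
Each is at least $1$ (for the first because $\ln(1+t)\le t$), so the lower bound $C_\ell(z)^{-1}\le\exp(x)/\ell^{(i)}(x)$ holds automatically for any $C_\ell(z)\ge 1$, and it only remains to bound each ratio from above on $(-\infty,z]$. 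The second and third ratios are increasing in $x$, hence bounded by $1+e^z$ and $(1+e^z)^2$. For the first, concavity of $s\mapsto\ln(1+s)$ on $[0,e^z]$ gives $\ln(1+e^x)\ge (e^x/e^z)\ln(1+e^z)$ for $x\le z$, i.e.\ the first ratio is at most $e^z/\ln(1+e^z)<\infty$. Therefore $C_\ell(z)=\max\{e^z/\ln(1+e^z),\,1+e^z,\,(1+e^z)^2\}<\infty$, so $\ell\in\bL$. Letting $z\to-\infty$ (so $e^z\to 0$), each of these three expressions tends to $1$ — for the first since $\ln(1+t)/t\to 1$ as $t\to 0^+$ — hence $\lim_{z\to-\infty}C_\ell(z)=1$ and $\ell\in\bL_\infty$.

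The only step requiring more than direct substitution is the upper bound $e^x/\ln(1+e^x)\le e^z/\ln(1+e^z)$, which rests on the concavity of $\ln$ (equivalently, on $t\mapsto\ln(1+t)/t$ being decreasing on $(0,\infty)$); I expect this to be the sole place where a brief justification is genuinely needed, the remainder being computation and the monotonicity of the other two ratios.
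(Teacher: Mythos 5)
Your proof is correct and reaches the same conclusion, but it streamlines the one nontrivial estimate. Both you and the paper dispose of $\exp$ immediately and reduce the logistic case to bounding the three ratios $e^x/\ell^{(i)}(x)$; the bounds $1\le e^x/\ell'(x)\le 1+e^z$ and $1\le e^x/\ell''(x)\le(1+e^z)^2$ are the same in both. Where you differ is in controlling the zeroth-order ratio $e^x/\ln(1+e^x)$: the paper splits into the regimes $x\le\min\{-1,z\}$ (handled via a second-order Taylor remainder estimate for $\ln(1+e^x)$) and $-1\le x\le z$ (handled by crude monotonicity), producing a two-piece bound. You instead use the secant-line inequality for the concave map $s\mapsto\ln(1+s)$ on $[0,e^z]$, namely $\ln(1+e^x)\ge(e^x/e^z)\ln(1+e^z)$ for $x\le z$, which yields the single uniform bound $e^x/\ln(1+e^x)\le e^z/\ln(1+e^z)$; equivalently, $t\mapsto t/\ln(1+t)$ is nondecreasing on $(0,\infty)$. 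This is tighter and cleaner than the paper's case split, and the limit $\lim_{z\to-\infty}e^z/\ln(1+e^z)=1$ falls out immediately. Your argument is a genuine (if minor) simplification; nothing is missing.
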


One way to interpret this is to say ``in the limit,
logistic loss is the same as exponential loss''.  Unfortunately, this treatment
of the logistic loss ends up being quite unfair, in the sense that the bounds are
not accurately representative of the behavior of the algorithm (see \Cref{sec:sep:discussion}).
It is, however, unclear how to better deal with the logistic loss.

Lastly, the relevant primal objective function may be defined.
\begin{definition}
    Given $\ell\in\bL$ and vector $z\in\R^m$,
    define $\cL(z) := m^{-1} \sum_{i=1}^m \ell(z_i)$,
    whereby the primal optimization problem for boosting is
    to minimize $\cL(A\lambda)$ over the domain $\R^n$.
    For convenience, define $\bar \cL_A := \inf_{\lambda\in\R^n} \cL(A\lambda)$.
\end{definition}

\subsection{Algorithm}
The algorithm appears in \Cref{alg:alg:alg}.  Before defining the various step sizes, two
more definitions are in order.

\begin{algorithm}[t]
    \caption{$\textsc{boost}.$\\
        \textbf{Input:} loss $\ell$, matrix $A\in[-1,+1]^{m\times n}$.\\
    \textbf{Output:} Weighting sequence $\{\lambda_t\}_{t=0}^\infty$.}
    \label{alg:alg:alg}
    \begin{algorithmic}
        \STATE Initialize $\lambda_0 := 0$.
        \FOR{$t = 1,2,\ldots:$}
\STATE Choose column (weak learner)
\[
j_t
:= \argmax_{j} | \nabla\cL(A\lambda_{t-1})^\top A \bfe_j|.
\]
\STATE Set descent direction $v_t \in \{\pm \bfe_{j_t}\}$, whereby
\[
\nabla\cL(A\lambda_{t-1})^\top A v_t
= -\|\nabla\cL(A\lambda_{t-1})^\top A\|_\infty.
\]
\STATE Find $\alpha_t$ via line search.
\STATE Update $\lambda_t := \lambda_{t-1} + \alpha_t v_t$.
\ENDFOR
    \end{algorithmic}
\end{algorithm}

\begin{definition}
    For every $t$, define
    $\gamma_t :=
    \|A^\top \nabla \cL(A\lambda_{t-1})\|_\infty / \|\nabla\cL(A\lambda_{t-1})\|_1$.
    (Note that $1 \geq \gamma_t \geq \gamma$.)
\end{definition}

Additionally, rather than depending on parameter $C_\ell(z)$ for a carefully chosen $z$, the following definition suffices.
\begin{definition}
    For $t\geq 1$,
    define $C_t := C_\ell(\ell^{-1}(m\cL(A\lambda_{t-1})))$.
\end{definition}

The significance of $C_t$ is as follows.  Since the algorithm itself is coordinate descent,
and moreover since every line search will be shown to guarantee descent, every candidate
$\lambda$
considered in round $t$ will satisfy $\cL(A\lambda) \leq \cL(A\lambda_{t-1})$;
thus, for every $i\in [m]$,
$\ell(\bfe_i^\top A\lambda) \leq m\cL(A\lambda) \leq m\cL(A\lambda_{t-1})$,
and so $\bfe_i^\top A\lambda \leq \ell^{-1}(m\cL(A\lambda_{t-1}))$, where the inverse is
well-defined since $\ell$ is a bijection between $\R$ and $\R_{++}$ by definition
of $\bL$ (otherwise $C_\ell(z) = \infty$).

The collection of step sizes considered here are as follows, in order of least to most
aggressive.  Throughout these step sizes, $\nu \in (0,1]$ will denote a shrinkage parameter.
\begin{description}
    \item[Quadratic upper bound.]
        Rather than performing an optimal line search,
        i.e., rather than minimizing $\alpha \mapsto \cL(A(\lambda_{t-1} + \alpha v_t))$,
        a quadratic upper bound of this univariate function may be minimized, which has
        a closed form solution (cf. the proof of \Cref{fact:sep:opt:quadub}).
        In particular, define the step size $\alqub{t} := \nu\gamma_t / C_t^4$.
        This choice is pleasant algorithmically only when $C_t$ is easy to compute
        (for instance, $C_t=1$ for the exponential loss).  In general, however, it is
        useful as an analytic aid, since most step sizes here can be lower bounded by it.
        This step size was introduced by \citet[Appendix D.3]{primal_dual_boosting_arxiv}.
    \item[Wolfe.]
        The Wolfe line search is a standard tool from nonlinear optimization
        \citep[chapter 3]{nocedal_wright}, and for convex problems it may be implemented
        with binary search \citep[Appendix D.1]{primal_dual_boosting_arxiv}.
        More precisely, this choice is a set of step sizes $\alwol{t}$ satisfying two conditions.
        First, the step is explicitly disallowed from being too large:
        \begin{align}
            &\cL(A(\lambda_{t-1} + \alpha v_{t}))
            \notag\\
            &\quad\leq \cL(A\lambda_{t-1})
            -\alpha (1-\nu/2) \|A^\top \nabla \cL(A\lambda_{t-1})\|_\infty.
            \label{eq:wolfe:1}
        \end{align}
        Second, the step should be approximately optimal (in terms of the line search problem):
        \begin{align}
            &\nabla \cL(A(\lambda_{t-1} + \alpha v_{t}))^\top A v_{t}
            \notag\\
            &\quad\geq -(1 - \nu/4) \|\nabla\cL(A\lambda_{t-1})^\top A\|_\infty.
            \label{eq:wolfe:2}
        \end{align}
        (Requiring the reverse inequality (with the right hand side negated) yields the
        Strong Wolfe Conditions, which are not necessary here.)
        In contrast to $\alqub{t}$, the Wolfe step does not require knowledge of $C_t$,
        but will yield nearly identical bounds; in fact, computation of the Wolfe step
        requires only function evaluations, gradient evaluations, and knowledge of $\nu$,
        $A$,$v_t$, $\lambda_t$.
    \item[AdaBoost.]
        Following the scheme of AdaBoost, define
        $\alada{t} := \frac \nu 2 \ln(\frac {1+\gamma_t}{1-\gamma_t})$, where
        convention is followed and $\gamma_t=1$ is ignored.
        Unfortunately, even though $\gamma_t$ is loss-dependent, this step will only
        yield rates with the exponential loss.  However, it will be instrumental in analyzing
        the fully optimizing step size, presented next.
        This step size was introduced with the original presentation of AdaBoost
        \citep{freund_schapire_adaboost}, though the analysis here will rather follow
        a slightly later treatment \citep{schapire_singer_confidence_rated}.
    \item[Optimal.]
        Let $\aloptT{t}{1}$
        be a minimizer to $\alpha \mapsto \cL(A(\lambda_{t-1} + \alpha v_t))$,
        which, as in the case of $\alada{t}$, is assumed to exist.
        For $\nu \in (0,1)$, set $\alopt{t} = \nu \aloptT{t}{1}$.
        When $A$ is binary and $\ell=\exp$, $\alopt{t}= \alada{t}$, though in general
        this is not true.  This step size (with shrinkage!) was suggested by
        \citet{friedman_gradient_boosting} for use with the logistic loss.
\end{description}

To close, note that $\alqub{t}$ and $\alopt{t}$ have a simple relationship.
\begin{proposition}
    \label{fact:singlestep:qub_opt_relation}
    If $A\in [-1,+1]^{m\times n}$ and $\ell\in\bL$, then $\alqub{t} \leq \alopt{t}$.
\end{proposition}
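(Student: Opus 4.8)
The plan is to cancel the common factor $\nu>0$ and reduce \Cref{fact:singlestep:qub_opt_relation} to showing that the unshrunk optimal line-search step satisfies $\aloptT{t}{1}\ge \gamma_t/C_t^4$; multiplying back by $\nu$ then gives $\alopt{t}=\nu\aloptT{t}{1}\ge \nu\gamma_t/C_t^4=\alqub{t}$. If $\gamma_t=0$ this is trivial since then $\alqub{t}=0$, so assume $\gamma_t>0$. Write $g:=\nabla\cL(A\lambda_{t-1})$ and consider $\phi(\alpha):=\cL(A(\lambda_{t-1}+\alpha v_t))=m^{-1}\sum_i\ell\big((A\lambda_{t-1})_i+\alpha(Av_t)_i\big)$, so that $\phi'(\alpha)=m^{-1}\sum_i(Av_t)_i\ell'(\cdot)$ and $\phi''(\alpha)=m^{-1}\sum_i(Av_t)_i^2\ell''(\cdot)$. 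By the direction rule of \Cref{alg:alg:alg} together with the definition of $\gamma_t$, $\phi'(0)=g^\top A v_t=-\|A^\top g\|_\infty=-\gamma_t\|g\|_1<0$; and since $\gamma_t>0$ forces $Av_t\neq 0$, $\phi$ is strictly convex, so its (assumed) minimizer $\aloptT{t}{1}$ is finite, positive, and satisfies $\phi'(\aloptT{t}{1})=0$.

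First I would establish a relative-curvature bound on the interval $[0,\aloptT{t}{1}]$. On that interval $\phi$ is non-increasing, hence $\cL\big(A(\lambda_{t-1}+\alpha v_t)\big)\le \cL(A\lambda_{t-1})$, and so---exactly by the argument given just before the definition of $C_t$---every coordinate $(A\lambda_{t-1})_i+\alpha(Av_t)_i$ lies at or below $z:=\ell^{-1}(m\cL(A\lambda_{t-1}))$, with $C_t=C_\ell(z)$. For $x\le z$ the definition of $C_\ell$ gives both $\ell''(x)\le C_\ell(z)e^x\le C_\ell(z)^2\ell(x)$ and $\ell(x)\le C_\ell(z)^2\ell'(x)$. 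Since $A\in[-1,+1]^{m\times n}$ and $v_t=\pm\bfe_{j_t}$ give $(Av_t)_i^2\le 1$, the first inequality yields $\phi''(\alpha)\le C_t^2\phi(\alpha)\le C_t^2\phi(0)$ for $\alpha\in[0,\aloptT{t}{1}]$, while the second, summed over $i$ using $g_i=m^{-1}\ell'((A\lambda_{t-1})_i)\ge 0$, yields $\phi(0)\le C_t^2\|g\|_1$. Combining, $\phi''(\alpha)\le C_t^4\|g\|_1$ for every $\alpha\in[0,\aloptT{t}{1}]$.

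Finally I would integrate this curvature bound. Set $K:=C_t^4\|g\|_1>0$ and suppose toward a contradiction that $\aloptT{t}{1}<\gamma_t/C_t^4$. Then, using continuity of $\phi'$,
\[
0=\phi'(\aloptT{t}{1})=\phi'(0)+\int_0^{\aloptT{t}{1}}\phi''(\alpha)\,d\alpha\le -\gamma_t\|g\|_1+\aloptT{t}{1}K<-\gamma_t\|g\|_1+\frac{\gamma_t}{C_t^4}\cdot C_t^4\|g\|_1=0,
\]
a contradiction; hence $\aloptT{t}{1}\ge\gamma_t/C_t^4$, which is what was needed. The main obstacle is the apparent circularity in the curvature step---the bound $\phi''\le K$ relies on the iterates staying in $\{x\le z\}$, which in turn relies on the step not exceeding $\aloptT{t}{1}$---and the remedy is precisely to prove the curvature bound only on $[0,\aloptT{t}{1}]$ and then close the loop by the contradiction above; everything else is routine manipulation of the defining inequalities for $C_\ell$.
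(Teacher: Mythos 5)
Your proposal is correct and takes essentially the same approach as the paper's proof (given inside \Cref{fact:qub:singlestep}): both rely on the relative-curvature inequalities $\ell''\le C_t^2\ell$ and $\ell\le C_t^2\ell'$ on the sublevel set to bound $\phi''\le C_t^4\|\nabla\cL(A\lambda_{t-1})\|_1$ along the segment from $0$ to the optimum, and the fact that $\phi'(0)=-\gamma_t\|\nabla\cL(A\lambda_{t-1})\|_1$. The paper packages this as an explicit quadratic upper bound on the sublevel interval $I$ whose minimizer equals $\alqubT{t}{1}$ and is observed to lie in $I$ and below $\aloptT{t}{1}$; your integral-of-$\phi''$ contradiction is simply a more explicit rendering of the same derivative comparison.
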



\section{The Separable Case}
\label{sec:sep}
This section considers the setting of separability, meaning the weak learning assumption is
satisfied ($\gamma > 0$).  The three subsections respectively provide convergence
rates in empirical risk, basic margin guarantees, and close with some discussion.

\subsection{Convergence of Empirical Risk}
\label{sec:sep:opt}
The basic guarantee is that all of these line search methods, for any loss in $\bL$
and with arbitrary shrinkage,
exhibit the same basic convergence rate as AdaBoost.
\begin{theorem}
    \label{fact:sep:opt:basic}
    Let boosting matrix $A$ with corresponding $\gamma >0$
    and shrinkage parameter $\nu\in(0,1]$ be given.
    Given any $\ell\in \bL$, any $\epsilon>0$, and iterates $\{\lambda_t\}_{t\geq 0}$
    consistent with $\alqub{t}$, $\alwol{t}$,
    $\alopt{t}$,
    or $\alada{t}$ with $\ell = \exp$,
    then
    $\cO(\frac 1 {\gamma^2} \ln(\frac 1 \epsilon))$ iterations suffice
    to ensure $\cL(A\lambda_t) \leq \epsilon$, where the $\cO(\cdot)$ suppresses
    terms depending on $C_1$ and $\nu$.
\end{theorem}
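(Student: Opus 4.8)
The plan is to establish a single-step decrease inequality of the form $\cL(A\lambda_t) \leq \cL(A\lambda_{t-1})(1 - c\gamma_t^2)$ for some constant $c$ depending only on $C_1$ and $\nu$, then iterate it using $\gamma_t \geq \gamma$ to obtain geometric decay, from which the stated iteration count follows by taking logarithms. The key observation that ties everything together is the relative-curvature property built into the class $\bL$: because $\exp(x)/\ell^{(i)}(x)$ is pinned between $C_\ell(z)^{-1}$ and $C_\ell(z)$ for the three derivatives $i\in\{0,1,2\}$, the univariate function $\phi(\alpha) := \cL(A(\lambda_{t-1}+\alpha v_t))$ admits a quadratic upper bound whose leading coefficient is controlled by $C_t$ and whose linear coefficient is exactly $-\|A^\top\nabla\cL(A\lambda_{t-1})\|_\infty = -\gamma_t\|\nabla\cL(A\lambda_{t-1})\|_1$. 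Since $C_t \leq C_1$ (because $\cL(A\lambda_{t-1})$ is nonincreasing along the iterates and $C_\ell$ is monotone), every occurrence of $C_t$ can be replaced by the fixed constant $C_1$, absorbing it into the $\cO(\cdot)$.

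First I would handle the $\alqub{t}$ step directly: minimizing the quadratic upper bound of $\phi$ gives a closed-form decrease, and plugging $\alqub{t} = \nu\gamma_t/C_t^4$ into that bound yields a decrease proportional to $\nu(2-\nu)\gamma_t^2\|\nabla\cL(A\lambda_{t-1})\|_1/C_t^{\text{const}}$; then relating $\|\nabla\cL(A\lambda_{t-1})\|_1$ back to $\cL(A\lambda_{t-1})$ via the $i=0$ and $i=1$ bounds in the definition of $C_\ell$ (each $\ell'(z_i)$ is within a $C_t$-factor of $\ell(z_i)$) converts this into the multiplicative form $\cL(A\lambda_t) \leq \cL(A\lambda_{t-1})(1 - c\nu\gamma_t^2)$. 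Next I would dispatch the other three step sizes by comparison. For $\alopt{t}$, \Cref{fact:singlestep:qub_opt_relation} gives $\alqub{t} \leq \alopt{t}$, and since $\phi$ is convex with $\alopt{t} = \nu\aloptT{t}{1} \leq \aloptT{t}{1}$ lying between $0$ and the exact minimizer, $\phi(\alopt{t}) \leq \phi(\alqub{t})$ — wait, that needs the quadratic-upper-bound argument evaluated at the actual point, so more carefully: the quadratic upper bound of $\phi$ is itself minimized at a point $\geq \alqub{t}$, and evaluating that upper bound at $\alopt{t}$ (still in the region where it beats $\phi(0)$) gives the same style of decrease. For the Wolfe step, condition \eqref{eq:wolfe:1} \emph{is} essentially the sufficient-decrease statement already, so one only needs to lower-bound the admissible $\alpha$ using \eqref{eq:wolfe:2} together with the quadratic upper bound on $\phi'$. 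For $\alada{t}$ with $\ell=\exp$, I would use the classical AdaBoost identity: with the exponential loss and the step $\frac{\nu}{2}\ln\frac{1+\gamma_t}{1-\gamma_t}$, a direct computation gives $\cL(A\lambda_t) = \cL(A\lambda_{t-1})\cdot g(\nu,\gamma_t)$ where $g(\nu,\gamma_t) \leq 1 - c\nu\gamma_t^2$ — this is the shrinkage-modified version of the standard $\sqrt{1-\gamma_t^2}$ bound and follows from convexity of the relevant exponential expression in the step size.

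Finally, I would chain the per-step inequality: $\cL(A\lambda_t) \leq \cL(A\lambda_0)\prod_{s=1}^t(1-c\nu\gamma_s^2) \leq \cL(0)(1-c\nu\gamma^2)^t \leq \cL(0)\exp(-c\nu\gamma^2 t)$, using $\gamma_s \geq \gamma$; setting the right side to $\epsilon$ gives $t \geq \frac{1}{c\nu\gamma^2}\ln\frac{\cL(0)}{\epsilon} = \cO\!\left(\frac{1}{\gamma^2}\ln\frac{1}{\epsilon}\right)$ with the $\nu$ and $C_1$ dependence hidden in $c$ and in $\cL(0)$. The main obstacle I anticipate is not the chaining but the bookkeeping in the quadratic-upper-bound lemma: one must verify that the quadratic majorant of $\phi$ is valid over the entire relevant interval (not just infinitesimally), which requires the second-derivative bound $\ell''(x) \leq C_t\exp(x)$ to hold uniformly for all $x$ below the threshold $\ell^{-1}(m\cL(A\lambda_{t-1}))$, and that the coordinate $v_t$ moves $A\lambda$ within that sublevel set for all step sizes under consideration — which is exactly why descent of each line search must be checked first. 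Threading the $C_t \to C_1$ replacement cleanly through all four cases, so the final constant genuinely depends only on $C_1$ and $\nu$, is the other delicate point.
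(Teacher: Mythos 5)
Your proposal is correct and follows essentially the same route as the paper: a per-step multiplicative decrease $\cL(A\lambda_t)\leq\cL(A\lambda_{t-1})(1-c\nu\gamma_t^2)$ derived from the quadratic majorant of $\alpha\mapsto\cL(A(\lambda_{t-1}+\alpha v_t))$ over the sublevel interval, handling $\alqub{t}$ and $\alopt{t}$ together via $\alqub{t}\leq\alopt{t}\leq\aloptT{t}{1}$ and convexity (\Cref{fact:qub:singlestep}, \Cref{fact:sep:opt:quadub}), the Wolfe case via the sufficient-decrease condition plus a lower bound on the admissible step (\Cref{fact:sep:opt:wolfe}), and $\alada{t}$ with $\ell=\exp$ by the shrinkage-modified AdaBoost identity (\Cref{fact:sep:alada:opt}, \Cref{fact:sep:opt:ada}); the ``wait'' self-correction for $\alopt{t}$ is unnecessary, since your first argument (convexity gives $\phi(\alopt{t})\leq\phi(\alqub{t})$) is exactly what the paper uses, but it does no harm. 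The subtleties you flag — validity of the majorant over the whole sublevel interval, uniformity of the second-derivative bound below the threshold, the need to establish descent before invoking $C_t$, and monotone replacement $C_t\leq C_1$ — are precisely the points the paper's proofs address.
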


The proof is in the appendix, but a basic discussion will appear here for each step size.
The proofs are straightforward, as they should be: convergence analyses typically prove a
bound for one step, and then iterate the bound.  As such, taking $1/\nu$ steps which are
$\nu$-factor as long as the original should do at least as well as the original (which is
indeed the exhibited trade-off).

First is the quadratic upper bound, which implicitly gives an upper bound for the optimal
step as well.  The proof follows a standard scheme from convex optimization of lower and
upper bounding a potential function based on the gradient; the specifics use the relative
curvature properties of $\bL$, and follow the analysis of
\citet[Section 6.1, Appendix D]{primal_dual_boosting_arxiv}.

\begin{lemma}
    \label{fact:sep:opt:quadub}
    Consider the setting of \Cref{fact:sep:opt:basic},
    but with each step size $\alpha_t$ satisfying
    $\alqub{t} \leq \alpha_t \leq \alopt{t}$.
    Then for any $t > t_0 \geq 0$,
    \[
        \cL(A\lambda_{t}) \leq \cL(A\lambda_{t_0})
        \exp\left(
            -\frac {\nu(2-\nu)}{2C_{t_0+1}^6} \sum_{i=t_0+1}^{t}\gamma_i^2
        \right).
    \]
\end{lemma}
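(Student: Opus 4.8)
The plan is to prove a one-step decrease bound and then telescope it, following the standard scheme from convex optimization alluded to in the text. I would first focus on a single round $t$ with $t_0 < t$, and study the univariate function $\phi(\alpha) := \cL(A(\lambda_{t-1} + \alpha v_t))$. Because every iterate considered in round $t$ keeps the objective below $\cL(A\lambda_{t-1}) \leq \cL(A\lambda_{t_0})$, all relevant arguments $\bfe_i^\top A\lambda$ lie below $\ell^{-1}(m\cL(A\lambda_{t_0}))$, so the relative-curvature constant $C_\ell$ evaluated along the whole path from round $t_0{+}1$ onward is bounded by $C_{t_0+1}$. This is the key structural observation: it lets me replace the round-dependent constants $C_t$ by the single constant $C_{t_0+1}$ throughout, because $C_t$ is monotone in its argument and the objective is nonincreasing.

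Next I would build the quadratic upper bound on $\phi$. Writing $g := \nabla\cL(A\lambda_{t-1})$, Taylor's theorem with the definition of $C_\ell$ gives $\phi(\alpha) \leq \phi(0) + \alpha\, g^\top A v_t + \tfrac12\alpha^2 M$, where the curvature term $M$ can be controlled using $C_\ell(z)^{-1}\leq \exp(x)/\ell^{(2)}(x)\leq C_\ell(z)$ together with $\|Av_t\|_\infty\leq 1$ and the fact that $\ell''$ is comparable to $\ell$ up to $C_{t_0+1}$-factors; concretely $M \le C_{t_0+1}^{2}\,\cL(A\lambda_{t-1})$ up to the appropriate power bookkeeping. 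Since $g^\top A v_t = -\|A^\top g\|_\infty = -\gamma_t\|g\|_1$ and $\|g\|_1$ is itself comparable to $\cL(A\lambda_{t-1})$ within $C_{t_0+1}$-factors (again by the $i=1$ case of the $C_\ell$ definition), minimizing the quadratic in $\alpha$ over the scaled range $[\alqub{t},\alopt{t}]$ and plugging in $\alpha = \alqub{t} = \nu\gamma_t/C_t^4$ yields a multiplicative decrease
\[
    \cL(A\lambda_t) \leq \cL(A\lambda_{t-1})\left(1 - \frac{\nu(2-\nu)}{2C_{t_0+1}^6}\gamma_t^2\right).
\]
The factor $\nu(2-\nu)$ arises exactly from substituting a $\nu$-fraction of the unconstrained quadratic minimizer: $2\nu - \nu^2$ is the standard gain of a $\nu$-scaled step. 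I would need to check that $\alqub{t}$ actually lies in the admissible interval $[\alqub{t},\alopt{t}]$ — trivial on the left, and on the right it is exactly \Cref{fact:singlestep:qub_opt_relation} — and that all the $C_t$'s appearing in the single-round estimate can be bounded above by $C_{t_0+1}$, which is the monotonicity point above.

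Finally, I would telescope: using $1 - x \leq \exp(-x)$ on the one-step bound and multiplying over $i = t_0{+}1,\dots,t$ gives
\[
    \cL(A\lambda_t) \leq \cL(A\lambda_{t_0})\prod_{i=t_0+1}^{t}\exp\!\left(-\frac{\nu(2-\nu)}{2C_{t_0+1}^6}\gamma_i^2\right)
    = \cL(A\lambda_{t_0})\exp\!\left(-\frac{\nu(2-\nu)}{2C_{t_0+1}^6}\sum_{i=t_0+1}^{t}\gamma_i^2\right),
\]
as claimed. The main obstacle I anticipate is the careful bookkeeping of the powers of $C_{t_0+1}$: one must track how the constant enters through $\ell$ vs.\ $\ell'$ vs.\ $\ell''$ (the approximations $\cL \asymp \|g\|_1 \asymp M$ each cost powers of $C$), make sure the exponent $6$ is exactly what comes out after dividing the squared first-order term by the second-order term and substituting $\alpha_t \geq \alqub{t} = \nu\gamma_t/C_t^4$, and confirm that choosing $\alpha_t$ possibly larger than $\alqub{t}$ (up to $\alopt{t}$) never hurts — which holds because the quadratic upper bound is minimized somewhere in $[\alqub{t},\infty)$ and $\cL$ is decreasing on $[0,\aloptT{t}{1}]$, so any $\alpha_t$ in between does at least as well as the specific choice $\alqub{t}$ used to derive the clean bound.
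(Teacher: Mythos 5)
Your proposal is correct and follows essentially the same route as the paper: a one-step quadratic upper bound from a second-order Taylor expansion using the relative-curvature constant, substitution of $\alqub{t}$ into that quadratic to extract the factor $\nu(2-\nu)$, a monotonicity argument (via $\phi$ decreasing on $[0,\aloptT{t}{1}]$) to handle $\alpha_t$ up to $\alopt{t}$, the convexity bound $1-x\leq e^{-x}$, and a telescoping product with $C_t\leq C_{t_0+1}$. The paper isolates the one-step bound into \Cref{fact:qub:singlestep} and invokes it, but the underlying argument, including the exact bookkeeping $\ell''\leq C^2\ell\leq C^4\ell'$ that produces the exponent $6$, is the same.
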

The reason for the parameter $t_0$ is to mitigate the horrendous dependence on $C_{t_0}$, which
is potentially very large.  In particular, consider $\ell\in \bL_\infty$,
meaning $\lim_{z\to-\infty} C_\ell(z) = 1$.  $C_1$ may be quite bad, but convergence still
happens.  It follows that $C_t \to 1$, and thus, by choosing some large $t_0$, the bound
provides that perhaps there is an initially slow convergence phase, but eventually it is
very fast.  That is to stay, \Cref{fact:sep:opt:quadub} may be applied multiple times to give
a more refined picture of the convergence, particularly in the case that $\ell\in\bL_\infty$,
which guarantees the constants are eventually near 1.


Next, the Wolfe step size has a similar guarantee (and the analysis once again heavily
relies on techniques due to \citet[6.1, Appendix D]{primal_dual_boosting_arxiv}).

\begin{lemma}
    \label{fact:sep:opt:wolfe}
    Consider the setting of \Cref{fact:sep:opt:basic},
    but with $\alpha_t \in \alwol{t}$.
    Then for any $t > t_0 \geq 0$,
    \[
        \cL(A\lambda_{t}) \leq \cL(A\lambda_{t_0})
        \exp\left(
            -\frac {\nu(2-\nu)}{8C_{t_0+1}^6} \sum_{i=t_0+1}^{t}\gamma_i^2
        \right).
    \]
\end{lemma}
(The denominator blows up by a factor 4 due to extra halves introduced into the Wolfe conditions,
specifically to adjust around the natural Wolfe parameters being within $(0,1)$ and not $(0,1]$.)

Lastly, consider $\alada{t}$.  As in the statement of \Cref{fact:sep:opt:basic},
this step size is only shown to work with the exponential loss.  This may be an artifact
of the analysis, however, which perhaps follows too closely the treatment
of \citet{schapire_singer_confidence_rated}, which only considers the exponential loss;
for instance, a slightly modified step size can be used to show convergence with
the logistic loss \citep{collins_schapire_singer_adaboost_bregman}.

\begin{lemma}
    \label{fact:sep:opt:ada}
    Consider the setting of \Cref{fact:sep:opt:basic},
    but with $\alpha_t \in \alada{t}$
    Then for any $t> t_0 \geq 0$,
    \[
        \cL(A\lambda_{t}) \leq \cL(A\lambda_{t_0})
        \prod_{i=t_0+1}^{t}
        C_{i}^3\left(1-\frac\nu 2\gamma_i^2\right).
    \]
\end{lemma}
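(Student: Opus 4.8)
The plan is to prove the single-step inequality
\[
\cL(A\lambda_i) \leq C_i^3\left(1 - \tfrac{\nu}{2}\gamma_i^2\right)\cL(A\lambda_{i-1})
\]
for each round $i$, and then multiply these bounds over $i = t_0+1,\dots,t$; the telescoping product is exactly the stated conclusion, so the only real content is the one-step bound. I would begin from the definition of the AdaBoost step $\alada{i} = \tfrac{\nu}{2}\ln\!\big(\tfrac{1+\gamma_i}{1-\gamma_i}\big)$ and the update $\lambda_i = \lambda_{i-1} + \alada{i} v_i$, and expand $\cL(A\lambda_i) = m^{-1}\sum_{k} \ell\big(\bfe_k^\top A\lambda_{i-1} + \alada{i}\, \bfe_k^\top A v_i\big)$. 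Since $\ell = \exp$, each summand factors multiplicatively into $\exp(\bfe_k^\top A\lambda_{i-1})\exp(\alada{i}\,\bfe_k^\top A v_i)$; writing $a := \alada{i}$ and noting $\bfe_k^\top A v_i \in [-1,1]$, I would apply the standard convexity bound $e^{a u} \leq \tfrac{1+u}{2}e^{a} + \tfrac{1-u}{2}e^{-a}$ for $u\in[-1,1]$. This is the classical Schapire–Singer step and is where the restriction to $\ell=\exp$ (rather than merely $C_\ell$ being finite) is really used: the bound on $C_i$ controls derivatives but not this exact exponential identity, so the $C_i^3$ factor must be introduced elsewhere.

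The $C_i^3$ factor enters through the three places where $\ell = \exp$ is being compared against the relevant loss quantities. Concretely, I expect the cleanest route is: (i) after applying the convexity bound, the sum reorganizes into a term proportional to $\sum_k \exp(\bfe_k^\top A\lambda_{i-1})$, i.e. $m\,\cL(A\lambda_{i-1})$, times a factor $\big(\tfrac{e^a+e^{-a}}{2} + \tfrac{e^{-a}-e^{a}}{2}\cdot \tilde\gamma_i\big)$ where $\tilde\gamma_i := \sum_k (\bfe_k^\top A v_i)\exp(\bfe_k^\top A\lambda_{i-1}) / \sum_k \exp(\bfe_k^\top A\lambda_{i-1})$; (ii) plugging in $e^a = \sqrt{(1+\gamma_i)/(1-\gamma_i)}$ and $e^{-a}$ correspondingly and simplifying yields $\sqrt{1-\gamma_i^2}$ weighted against $\tilde\gamma_i$, and for $\nu=1$ (the unscaled AdaBoost case) with $\tilde\gamma_i$ matching the edge this collapses to $\sqrt{1-\gamma_i^2}$; (iii) for general $\nu\in(0,1]$ and for the fact that $\tilde\gamma_i$ and the true weak-learning edge $\gamma_i$ (which is defined using $\nabla\cL$ and the $\ell_1$ norm of the gradient) differ from these exponential-weighted quantities precisely by factors controlled by $C_i$ — since $\gamma_i = \|A^\top\nabla\cL(A\lambda_{i-1})\|_\infty / \|\nabla\cL(A\lambda_{i-1})\|_1$ and $\nabla\cL$'s entries are $\ell'$ evaluated at $\bfe_k^\top A\lambda_{i-1}$, each of which lies within a factor $C_i$ of $\exp(\bfe_k^\top A\lambda_{i-1})$ — absorbing all these mismatches costs at most $C_i^3$ (three derivative-orders' worth: the loss value in the $\cL$ ratios, and the gradient appearing in $\gamma_i$'s numerator and denominator). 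Finally I would use the elementary inequality $\sqrt{1-x}\le 1-x/2$ for $x\in[0,1]$ (applied after the $\nu$-scaling, giving $1-\tfrac{\nu}{2}\gamma_i^2$) to pass from the square-root form to the stated bound.

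The main obstacle is bookkeeping the $C_i$ factors correctly: making sure that exactly three copies of $C_i$ (and not four, as in $\alqub{i}$) suffice, and verifying that the $\nu$-shrinkage interacts with the Schapire–Singer convexity bound to produce the clean factor $1 - \tfrac{\nu}{2}\gamma_i^2$ rather than something messier like $1-\nu(2-\nu)\gamma_i^2/2$ or $(1-\gamma_i^2)^{\nu/2}$ without further bounding. I anticipate one needs to argue $(1-\gamma_i^2)^{\nu/2} \le 1 - \tfrac{\nu}{2}\gamma_i^2$, which follows from concavity of $x\mapsto x^{\nu/2}$ (Bernoulli's inequality, $(1-s)^{\nu/2}\le 1-\tfrac{\nu}{2}s$ for $s\in[0,1]$, $\nu\in(0,1]$). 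A secondary subtlety is the convention handling $\gamma_i = 1$ (the step is "ignored"), which one disposes of by noting that in that case the chosen column already achieves zero loss on the relevant coordinates, or simply by observing the bound is vacuous/trivially satisfied there. Everything else is the routine telescoping and the verification that descent holds so that $C_i$ is well-defined and monotone, which is already established in the surrounding text.
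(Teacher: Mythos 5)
Your overall plan --- a one-step Schapire--Singer-style bound, telescoping, and then a Bernoulli/concavity inequality at the end --- matches the paper's strategy, which routes through the more general \Cref{fact:sep:alada:opt} with $\tau = 0$. You are also right that one needs $(1-\gamma_i^2)^{\nu/2} \le 1 - \tfrac{\nu}{2}\gamma_i^2$ at the end. But there are two concrete gaps.

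First, the $C_i$-bookkeeping. You propose to track a mismatch between an exponentially weighted edge $\tilde\gamma_i$ and the actual edge $\gamma_i$ (the one $\alada{i}$ is built from), and assert absorbing the mismatch costs $C_i^3$. That step is the crux, and it does not go through as a simple multiplicative bound: writing $z_k := \bfe_k^\top A\lambda_{i-1}$, the quantity $\sum_k e^{z_k}\,\bfe_k^\top A v_i$ mixes positive and negative terms, so the componentwise bounds $C_i^{-1}\ell'(z_k)\le e^{z_k}\le C_i\ell'(z_k)$ do not transfer to a factor-$C_i$ relation between $\tilde\gamma_i$ and $\gamma_i$. The paper avoids the issue entirely by working with $w_k = \ell'(z_k)$ and $W = \sum_k w_k$ from the outset: it pays one $C_i$ to pass from $\cL(A\lambda_{i})$ to $m^{-1}\sum_k \exp(\bfe_k^\top A\lambda_{i})$, one more for $e^{z_k}\le C_i w_k$ so that the \emph{true} $\gamma_i$ appears exactly (since $\sum_k w_k\,\bfe_k^\top A v_i / W = -\gamma_i$ by definition), and then two more for $W \le m C_i^2 \cL(A\lambda_{i-1})$ at the end. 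That accounting yields $C_i^4$ per step, not $C_i^3$; the $C_i^3$ in the lemma as printed appears to be a minor slip, since specializing \Cref{fact:sep:alada:opt} to $\tau = 0$ (which is how the paper in fact derives this lemma) gives $C_i^4$. Your sketched route does not reach $C_i^3$, nor even $C_i^4$, because the $\tilde\gamma_i$-vs-$\gamma_i$ reconciliation is not a rigorous step.

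Second, your treatment of general $\nu$ is incomplete. After substituting $\exp(\alada{i}) = \big((1+\gamma_i)/(1-\gamma_i)\big)^{\nu/2}$ into the convexity bound, the factor that emerges is
\[
\tfrac{1}{2}(1-\gamma_i^2)^{\nu/2}\left[(1-\gamma_i)^{1-\nu}+(1+\gamma_i)^{1-\nu}\right],
\]
and you still need the concavity of $x\mapsto x^{1-\nu}$ to conclude $\tfrac{1}{2}\left[(1-\gamma_i)^{1-\nu}+(1+\gamma_i)^{1-\nu}\right]\le 1$ before applying Bernoulli to $(1-\gamma_i^2)^{\nu/2}$; your sketch gestures at this ``messier'' intermediate form but does not supply the step that disposes of it.
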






\subsection{Margin Maximization}
\label{sec:sep:margins}

The margin rates here follow a simple pattern: the more regularized the step
size, the faster the convergence to a good margin.   While no lower bounds are
presented, this is an interesting and intuitive correspondence (in particular,
consistent with \Cref{fig:MAGIC}).  Unfortunately, the unconstrained step sizes
only have asymptotic convergence (no rates), so the umbrella
\namecref{fact:sep:margins:basic} for this subsection is also asymptotic.

\begin{theorem}
    \label{fact:sep:margins:basic}
    Let boosting matrix $A$ with corresponding $\gamma >0$
    and shrinkage parameter $\nu\in(0,1]$ be given.
    Given any $\ell\in \bL_\infty$, any $\epsilon>0$, and iterates $\{\lambda_t\}_{t\geq 0}$
    consistent with $\alqub{t}$, $\alwol{t}$, $\alada{t}$ with $\ell = \exp$,
    or $\alopt{t}$ with binary $A\in\{-1,+1\}^{m\times n}$,
    then there exists $T$ so that for all $\cM(A\lambda_t) \geq \gamma - \epsilon$
    for all $t\geq T$.
\end{theorem}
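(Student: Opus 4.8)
The plan is to relate the margin $\cM(A\lambda_t)$ to two quantities that the convergence-rate lemmas already control: the empirical loss $\cL(A\lambda_t)$ and the total step mass $\|\lambda_t\|_1$. The key algebraic identity is that since every coordinate of $A\lambda_t$ satisfies $\ell(\bfe_i^\top A\lambda_t) \leq m\cL(A\lambda_t)$, applying $\ell^{-1}$ gives $-\bfe_i^\top A\lambda_t \geq -\ell^{-1}(m\cL(A\lambda_t))$ for every $i$, hence
\[
    \cM(A\lambda_t)
    = \min_i \frac{-\bfe_i^\top A\lambda_t}{\|\lambda_t\|_1}
    \geq \frac{-\ell^{-1}(m\cL(A\lambda_t))}{\|\lambda_t\|_1}.
\]
So it suffices to show that $-\ell^{-1}(m\cL(A\lambda_t)) / \|\lambda_t\|_1 \to \gamma$. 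Since $\ell\in\bL_\infty$ behaves like $\exp$ in the relevant limit (via $C_\ell(z)\to 1$), one has $\ell^{-1}(u) = \ln(u)(1+o(1))$ as $u\to 0$, so the numerator is $\approx -\ln(m\cL(A\lambda_t))$, i.e. $\approx \ln(1/\cL(A\lambda_t)) - \ln m$.

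Next I would get matching upper and lower bounds on this ratio. For the lower bound on the numerator: \Cref{fact:sep:opt:quadub} (and its analogues for the Wolfe and AdaBoost steps, all of which lower-bound the step by $\alqub{t}$ up to constants via \Cref{fact:singlestep:qub_opt_relation}) gives $\cL(A\lambda_t) \leq \cL(A\lambda_{t_0})\exp(-c\sum_{i>t_0}\gamma_i^2)$ for a constant $c$ depending on $\nu$ and $C_{t_0+1}$; crucially, since $\ell\in\bL_\infty$ and $\cL\to 0$, we may take $t_0$ large so that $C_{t_0+1}$ is as close to $1$ as desired, making $c$ close to $\nu(2-\nu)/2$. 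Thus $\ln(1/\cL(A\lambda_t)) \gtrsim c\sum_{i=t_0+1}^{t}\gamma_i^2$. For the upper bound on $\|\lambda_t\|_1$: each step adds $|\alpha_i|$ to the $l_1$ norm, and for the regularized steps $|\alpha_i| \leq \alopt{i} = \nu\,\aloptT{i}{1}$ (or the Wolfe bound), and the optimal step is in turn controlled — using the one-step decrease and the relative-curvature structure of $\bL$ — by something of order $\gamma_i$ (for AdaBoost, $\alada{i} = \tfrac\nu2\ln\tfrac{1+\gamma_i}{1-\gamma_i} \leq \nu\gamma_i + O(\gamma_i^3)$). So $\|\lambda_t\|_1 \lesssim \nu\sum_{i=1}^t \gamma_i$. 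Combining,
\[
    \cM(A\lambda_t)
    \gtrsim \frac{(\nu(2-\nu)/2)\sum_{i=t_0+1}^t \gamma_i^2 - \ln m - \|\lambda_{t_0}\|_1\cdot(\text{const})}
    {\nu \sum_{i=1}^t \gamma_i}.
\]
Since each $\gamma_i \geq \gamma > 0$, the sum $\sum \gamma_i^2$ diverges, so the additive constants ($\ln m$, contributions from rounds $\le t_0$) wash out as $t\to\infty$; and since $\gamma_i \leq 1$, we have $\sum \gamma_i^2 \geq \gamma \sum\gamma_i$ is too weak — instead I'd use that $\gamma_i\to\gamma$ is \emph{not} guaranteed, so I must be more careful: write the bound as $\tfrac{(2-\nu)}{2}\cdot\tfrac{\sum\gamma_i^2}{\sum\gamma_i}$ and note $\sum\gamma_i^2/\sum\gamma_i \geq \gamma$ since each $\gamma_i\geq\gamma$ (as $\gamma_i^2 \geq \gamma\gamma_i$). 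Taking $\nu\to 0$ would give $\gamma$, but $\nu$ is fixed; this is exactly where the result is only $\gamma - \epsilon$-type and where the shrinkage parameter enters. Actually the cleaner route: the AdaBoost-style analysis of \citet{schapire_singer_confidence_rated} shows $\cM(A\lambda_t) \geq$ (roughly) $\tfrac{\sum \ln\frac{1+\gamma_i}{1-\gamma_i}\cdot(\text{stuff}) }{\sum |\alpha_i|}$; the shrinkage factor $\nu$ appears identically in numerator and denominator and \emph{cancels}, leaving a bound that tends to $\gamma$ as the loss tends to $0$ — so the asymptotic margin is genuinely $\gamma$, matching the $\nu=1$ case, and the shrinkage only affects the \emph{rate} (treated in later sections), not the asymptotic value.

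So concretely the steps are: (1) the margin–loss–norm inequality above; (2) invoke the appropriate convergence lemma to bound $\ln(1/\cL(A\lambda_t))$ below by $c_{t_0}\sum_{i>t_0}\gamma_i^2$, using $\ell\in\bL_\infty$ to push $c_{t_0}$ toward $\nu(2-\nu)/2$ by choosing $t_0$ large; (3) bound $\|\lambda_t\|_1 \leq \sum|\alpha_i|$ and bound each $|\alpha_i|$ in terms of $\gamma_i$ — for $\alada{i}$ directly, for the others via $\alpha_i \leq \alopt{i}$ and a one-step estimate on the optimal step using the $\bL$ curvature bounds; (4) divide, observe $\gamma_i^2 \geq \gamma\gamma_i$ and $\gamma_i \leq 1$ to reduce to a ratio bounded below by a quantity $\to\gamma$ as $\cL\to 0$; (5) translate "for $t$ large" into the claimed $T$. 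The main obstacle I expect is step (3)–(4): getting the constants in the numerator and denominator to line up so that their ratio actually limits to $\gamma$ rather than some fraction of it — this requires that the $\nu$-dependence and the $C_t$-dependence cancel in the limit, which is plausible precisely because $\alpha_i$ scales linearly in $\nu$ and $C_t\to 1$, but verifying the $\bL_\infty$ limit makes the "quadratic upper bound" step size asymptotically match the exponential-loss AdaBoost step (so that $C_t^{-4}\gamma_i \to \gamma_i$ and the numerator/denominator constants agree) is the delicate part. The handling of the additive $\ln m$ and the $t_0$-dependent junk is routine given that $\sum\gamma_i^2 = \infty$.
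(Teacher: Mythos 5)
Your first route — the margin–loss–norm inequality $\cM(A\lambda_t) \geq -\ell^{-1}(m\cL(A\lambda_t))/\|\lambda_t\|_1$, followed by a lower bound $\ln(1/\cL) \gtrsim c\sum\gamma_i^2$ and an upper bound $\|\lambda_t\|_1 \lesssim \nu\sum\gamma_i$ — is exactly the approach the paper uses for $\alqub{t}$ and $\alwol{t}$ (\Cref{fact:sep:margin:quadub}, \Cref{fact:sep:margin:wolfe}). And your own algebra there is honest: with $c = \nu(2-\nu)/2$ and $C_{t_0}\to 1$, the ratio limits to $\gamma(1-\nu/2)$, which approaches $\gamma$ only as $\nu\to 0$. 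That is the correct conclusion, and it matches the paper's proof, which closes the theorem for these two step sizes by ``taking $\nu$ small and $t\nu$ large.''

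However, your ``cleaner route'' for the AdaBoost step contains a genuine error that would invalidate the argument for $\alada{t}$ and $\alopt{t}$. You claim the shrinkage factor $\nu$ appears identically in numerator and denominator and cancels, so the asymptotic margin is $\gamma$ ``matching the $\nu=1$ case.'' That is false, and it is false in a way that overturns the paper's entire thesis. With $\ell=\exp$ and step $\alada{t}$, the one-step loss factor is not $(1-\gamma_i^2)^{\nu/2}$ but rather $\tfrac{1}{2}(1-\gamma_i^2)^{\nu/2}\bigl((1+\gamma_i)^{1-\nu} + (1-\gamma_i)^{1-\nu}\bigr)$ (cf.\ \Cref{fact:sep:alada:opt}). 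When you fold this into the Schapire--Singer margin product, the condition for the product to contract is not $\theta < \gamma$ but $\theta < \Upsilon_\nu(\gamma)$ (\Cref{fact:upsilon_goal_in_life}), and $\Upsilon_\nu(\gamma)$ is \emph{strictly} less than $\gamma$ for every fixed $\nu\in(0,1]$, with $\Upsilon_\nu(\gamma)\to\gamma$ only in the limit $\nu\downarrow 0$ (\Cref{fact:upsilon}). In particular, at $\nu=1$ (classic AdaBoost), the achievable margin from this analysis is $\Upsilon_1(\gamma)\in[\gamma/2,\gamma)$, not $\gamma$; the instance of \citet{rudin_adaboost_bad_margin} and \Cref{fig:ada_vs_ada} confirm that unshrunk AdaBoost genuinely fails to reach $\gamma$. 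So shrinkage does not merely ``affect the rate'': it is what moves the asymptotic margin from $\Upsilon_1(\gamma)$ up toward $\gamma$.

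Concretely, the proof of this theorem for $\alada{t}$ requires defining and analyzing $\Upsilon_\nu$ and showing $\Upsilon_\nu(\gamma)\to\gamma$ as $\nu\to 0$, which your proposal does not do. And for $\alopt{t}$ you need an additional ingredient you have not supplied: a way to replace $\alopt{t}$ by $\alada{t}$ up to a controllable additive error $\tau = \tfrac{\nu}{2}\ln(C_t^4)$, which is where the binary-matrix hypothesis enters (\Cref{fact:alopt_almost_alada}) and where $\ell\in\bL_\infty$ is used to drive $\tau\to 0$. Without these two pieces the unconstrained step sizes are not covered. Your route for $\alqub{t}$ and $\alwol{t}$ is correct and essentially the paper's; your route for $\alada{t}$ and $\alopt{t}$ needs to be replaced by the $\Upsilon_\nu$ analysis.
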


In contrast with the convergence rates of empirical risk (e.g.,
\Cref{fact:sep:opt:basic}), the condition $\ell\in\bL_\infty$ is made, rather
than simply $\ell\in\bL$ (with improved constants when $\ell\in\bL_\infty)$.  This can
be interpreted to say: the analysis depends heavily upon the structure of the exponential
loss.  While this condition is likely unnecessary, on the other extreme it is important
for the loss to be strictly convex; if for instance the hinge loss is used, then minimization
can stop at any point achieving zero error, in particular at one with poor margin properties.


Returning to task, the quadratic upper bound comes first.

\begin{lemma}
    \label{fact:sep:margin:quadub}
    Suppose the setting of \Cref{fact:sep:margins:basic},
    but with $\alpha_t = \alqub{t}$.
    Additionally let $t> t_0 \geq 0$ be given with
    $t \geq \frac {2 C_1^6 \ln(m)}{\gamma^2\nu(2-\nu)}$
    (whereby all margins are nonnegative by \Cref{fact:sep:opt:quadub}).
    Then
    \begin{align*}
        \cM(A\lambda_{t})
        &\geq
        \gamma \left(
            \frac{2-\nu}{2C_{t_0+1}^6}
        \right)
        - \frac {\ln(c_0)}{t\nu\gamma},
    \end{align*}
    where
    \[
        c_0 \!:= \!
        \max\!\left\{ \!1,
        mC_{t_0+1} \cL(A\lambda_{t_0})
        \exp\left(
            \frac {\nu(2-\nu)}{2C_{t_0+1}^6} \sum_{i=1}^{t_0}\gamma_i^2
        \right) \! \! \right\}\!.
    \]
\end{lemma}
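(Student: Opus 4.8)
The plan is to derive the margin lower bound from the empirical risk bound of \Cref{fact:sep:opt:quadub} together with a standard relation between the margin and the logarithm of the (normalized) risk. The key observation is that if $\cL(A\lambda_t) \leq \eta$, then for every $i\in[m]$ we have $\ell(\bfe_i^\top A\lambda_t) \leq m\eta$, and since $\ell\in\bL_\infty$ the loss behaves like $\exp$ up to the factor $C_{t_0+1}$ (bounded using the monotonicity of $C_\ell$ and the fact that risk is decreasing), so $\exp(\bfe_i^\top A\lambda_t) \leq C_{t_0+1} m\eta$, i.e.\ $-\bfe_i^\top A\lambda_t \geq -\ln(C_{t_0+1} m \eta)$. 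Dividing by $\|\lambda_t\|_1$ gives a lower bound on $\cM(A\lambda_t)$ of the form $-\ln(C_{t_0+1} m \eta)/\|\lambda_t\|_1$, provided the numerator is nonnegative, which is exactly what the hypothesis $t \geq 2C_1^6\ln(m)/(\gamma^2\nu(2-\nu))$ secures via \Cref{fact:sep:opt:quadub}.

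Next I would plug in the explicit risk bound. From \Cref{fact:sep:opt:quadub} applied from $t_0$ to $t$, $\cL(A\lambda_t) \leq \cL(A\lambda_{t_0})\exp(-\tfrac{\nu(2-\nu)}{2C_{t_0+1}^6}\sum_{i=t_0+1}^t \gamma_i^2)$, and since each $\gamma_i \geq \gamma$, the exponent is at most $-\tfrac{\nu(2-\nu)}{2C_{t_0+1}^6}(t-t_0)\gamma^2$. So $-\ln(C_{t_0+1}m\,\cL(A\lambda_t)) \geq \tfrac{\nu(2-\nu)\gamma^2}{2C_{t_0+1}^6}(t-t_0) - \ln(C_{t_0+1}m\,\cL(A\lambda_{t_0}))$. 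It is cleaner, though, to split the sum $\sum_{i=1}^t\gamma_i^2 = \sum_{i=1}^{t_0}\gamma_i^2 + \sum_{i=t_0+1}^t\gamma_i^2$ and absorb the first $t_0$ terms into the constant $c_0$; indeed $c_0$ as defined is precisely $\max\{1, mC_{t_0+1}\cL(A\lambda_{t_0})\exp(\tfrac{\nu(2-\nu)}{2C_{t_0+1}^6}\sum_{i=1}^{t_0}\gamma_i^2)\}$, so that $C_{t_0+1}m\,\cL(A\lambda_t) \leq c_0 \exp(-\tfrac{\nu(2-\nu)}{2C_{t_0+1}^6}\sum_{i=1}^t\gamma_i^2) \leq c_0\exp(-\tfrac{\nu(2-\nu)\gamma^2 t}{2C_{t_0+1}^6})$, giving $-\bfe_i^\top A\lambda_t \geq \tfrac{\nu(2-\nu)\gamma^2 t}{2C_{t_0+1}^6} - \ln(c_0)$ for every $i$.

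The remaining ingredient is an upper bound on $\|\lambda_t\|_1$. Since $\lambda_0 = 0$ and $\lambda_t = \lambda_{t-1} + \alpha_t v_t$ with $\|v_t\|_1 = 1$, we have $\|\lambda_t\|_1 \leq \sum_{i=1}^t \alpha_i = \nu\sum_{i=1}^t \gamma_i/C_i^4 \leq \nu\sum_{i=1}^t \gamma_i \leq \nu t$, using $\alqub{i} = \nu\gamma_i/C_i^4 \leq \nu\gamma_i \leq \nu$ (as $C_i \geq 1$ and $\gamma_i \leq 1$). Combining, $\cM(A\lambda_t) \geq \tfrac{1}{\nu t}\bigl(\tfrac{\nu(2-\nu)\gamma^2 t}{2C_{t_0+1}^6} - \ln(c_0)\bigr) = \gamma^2\tfrac{2-\nu}{2C_{t_0+1}^6} - \tfrac{\ln(c_0)}{\nu t}$. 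This is off from the claimed bound by a factor of $\gamma$ in the first term and in the denominator of the error term, which means the intended argument must instead use the tighter estimate $-\bfe_i^\top A\lambda_t \geq \tfrac{\nu(2-\nu)}{2C_{t_0+1}^6}\sum_{i=1}^t\gamma_i^2 - \ln(c_0)$ \emph{without} immediately replacing $\gamma_i$ by $\gamma$, paired with $\|\lambda_t\|_1 \leq \nu\sum_{i=1}^t\gamma_i$; then $\cM(A\lambda_t) \geq \tfrac{2-\nu}{2C_{t_0+1}^6}\cdot\tfrac{\sum\gamma_i^2}{\sum\gamma_i} - \tfrac{\ln(c_0)}{\nu\sum\gamma_i}$, and one uses $\sum\gamma_i^2/\sum\gamma_i \geq \gamma$ (since $\gamma_i \geq \gamma$ forces $\sum\gamma_i^2 \geq \gamma\sum\gamma_i$) together with $\sum\gamma_i \geq \gamma t$ in the error term. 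The main obstacle is precisely this bookkeeping: tracking $\|\lambda_t\|_1$ through the $\gamma_i$ rather than through crude constants, and being careful that the $C_i^4$ in $\|\lambda_t\|_1 \leq \nu\sum\gamma_i/C_i^4$ is dropped only in the direction that preserves the inequality, while the $C_{t_0+1}^6$ in the risk bound is the one that survives into the statement.
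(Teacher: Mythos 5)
Your proposal, once you identify your own over-simplification and correct it, is exactly the paper's argument: upper-bound $\|\lambda_t\|_1 \leq \nu\sum_{i\leq t}\gamma_i$ (dropping the harmless $C_i^{-4}\leq 1$ factors), lower-bound each $-\bfe_k^\top A\lambda_t$ via $\exp(\bfe_k^\top A\lambda_t) \leq mC_{t_0+1}\cL(A\lambda_t)$ combined with the risk bound from Lemma~\ref{fact:sep:opt:quadub} (absorbing the $\sum_{i\leq t_0}\gamma_i^2$ terms into $c_0$), then divide and finish with $\sum\gamma_i^2/\sum\gamma_i \geq \gamma$ in the leading term and $\sum\gamma_i \geq t\gamma$ in the error term. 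The first-draft route that replaces $\gamma_i$ by $\gamma$ immediately does indeed degrade the bound by a factor of $\gamma$, and your diagnosis of why and how to fix it is correct; the fixed version is the paper's proof.
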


To interpret this bound, first consider the simplifying case that $\ell = \exp$,
whereby $C_t = 1$ for all $t$.  Additionally taking $t_0 = 0$, it follows that
$c_0 = m$, and the bound is simply
\[
    \cM(A\lambda_{t}) \geq \gamma \left(1 - \frac{\nu}{2}\right) - \frac{\ln(m)}{t\nu\gamma};
\]
in particular, $\cM(A\lambda_{t}) \to \gamma$ as $\nu\to 0$ and $t\nu\to\infty$.
For some other $\ell \in \bL_\infty$, the denominator term $C_{t_0+1}^6$ also presents
an obstacle to establishing margin maximization; but note that
$t_0\to\infty$ suffices, since it combines with
$\ell\in\bL_\infty$ via \Cref{fact:sep:opt:basic} to grant $C_{t_0}\to 1$.

The proof of \Cref{fact:sep:margin:quadub}
does not have to work too hard, as the step size appears prominently in
the convergence rate bound (cf. \Cref{fact:sep:opt:quadub}).  As will be discussed
in \Cref{sec:sep:discussion}, the rate is nearly ideal.

The Wolfe search exhibits a similar rate.

\begin{lemma}
    \label{fact:sep:margin:wolfe}
    Suppose the setting of \Cref{fact:sep:margins:basic},
    but with $\alpha_t = \alwol{t}$.
    Additionally let $t> t_0 \geq 0$ be given with
    $t \geq \frac {8 C_1^6 \ln(m)}{\gamma^2\nu(2-\nu)}$
    (whereby all margins are nonnegative by \Cref{fact:sep:opt:wolfe}).
    Then
    \begin{align*}
        \cM(A\lambda_{t})
        &\geq
        \gamma \left(
            \frac{2-\nu}{2C_{t_0+1}^2}
        \right)
        - \frac {4C_1\ln(c_0)}{t\nu\gamma},
    \end{align*}
    where
    \[
        c_0 \!:= \!
        \max \left\{\!1,
            m C_{t_0+1} \cL(A\lambda_{t_0})\exp\left(
                \frac{(2-\nu)\gamma}{2C_{t_0+1}^2}\sum_{i=1}^{t_0+1}\alpha_i
            \right) \!\!
        \right\}\!.
    \]
\end{lemma}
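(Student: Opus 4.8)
The plan is to lower-bound $\cM(A\lambda_t)$ by a fraction whose numerator measures the smallness of the loss and whose denominator is $\|\lambda_t\|_1$, and then to control each of three ingredients --- an upper bound on $\cL(A\lambda_t)$, an upper bound on $\|\lambda_t\|_1$, and a \emph{lower} bound on $\sum_{i\le t}\alpha_i$ --- using the two Wolfe conditions together with the relative-curvature properties of $\bL$. Since the line searches guarantee descent, every iterate past time $t_0$ satisfies $\cL(A\lambda)\le\cL(A\lambda_{t_0})$, hence (as in the discussion preceding the step sizes) each coordinate obeys $\bfe_i^\top A\lambda\le\ell^{-1}(m\cL(A\lambda_{t_0}))$, so the bound defining $C_\ell$ applies with constant $C_{t_0+1}$ and
\[
    m\cL(A\lambda_t)\ \ge\ \ell\Bigl(\max_i\bfe_i^\top A\lambda_t\Bigr)\ \ge\ C_{t_0+1}^{-1}\exp(-\cM(A\lambda_t)\|\lambda_t\|_1),
\]
which rearranges to $\cM(A\lambda_t)\ge -\ln(mC_{t_0+1}\cL(A\lambda_t))/\|\lambda_t\|_1$.

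For the loss bound I would \emph{not} quote \Cref{fact:sep:opt:wolfe} directly, since its exponent is phrased through $\sum\gamma_i^2$ and $C_{t_0+1}^6$, whereas the target statement wants $\sum_{i>t_0}\alpha_i$ and $C_{t_0+1}^2$; instead I would halt the same argument a step early. The sufficient-decrease condition \eqref{eq:wolfe:1} gives $\cL(A\lambda_i)\le\cL(A\lambda_{i-1})-\alpha_i(1-\nu/2)\|A^\top\nabla\cL(A\lambda_{i-1})\|_\infty$, and $\|A^\top\nabla\cL(A\lambda_{i-1})\|_\infty=\gamma_i\|\nabla\cL(A\lambda_{i-1})\|_1\ge\gamma C_i^{-2}\cL(A\lambda_{i-1})$ because $\gamma_i\ge\gamma$ and $\ell'(x)\ge C_\ell(z)^{-2}\ell(x)$ for $x\le z$; so $\cL(A\lambda_i)\le\cL(A\lambda_{i-1})\exp(-(2-\nu)\gamma\alpha_i/(2C_i^2))$, and iterating from $i=t_0+1$ with $C_i\le C_{t_0+1}$ gives $\cL(A\lambda_t)\le\cL(A\lambda_{t_0})\exp(-\tfrac{(2-\nu)\gamma}{2C_{t_0+1}^2}\sum_{i=t_0+1}^t\alpha_i)$. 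Substituting into the margin bound, splitting $\sum_{i=t_0+1}^t\alpha_i=\sum_{i=1}^t\alpha_i-\sum_{i=1}^{t_0}\alpha_i$, and absorbing $\ln(mC_{t_0+1}\cL(A\lambda_{t_0}))$ together with $\tfrac{(2-\nu)\gamma}{2C_{t_0+1}^2}\sum_{i=1}^{t_0+1}\alpha_i$ into $\ln c_0$ (the extra $\alpha_{t_0+1}$ only weakens the bound, and $c_0\ge1$ makes the max harmless) yields
\[
    \cM(A\lambda_t)\ \ge\ \frac{\tfrac{(2-\nu)\gamma}{2C_{t_0+1}^2}\sum_{i=1}^t\alpha_i-\ln c_0}{\|\lambda_t\|_1}.
\]
Since $\lambda_t=\sum_{i\le t}\alpha_i v_i$ with $v_i\in\{\pm\bfe_{j_i}\}$ and $\alpha_i>0$, one has $\|\lambda_t\|_1\le\sum_{i\le t}\alpha_i$, so --- provided the numerator above is nonnegative --- the fraction is at least $\tfrac{(2-\nu)\gamma}{2C_{t_0+1}^2}-\ln(c_0)/\sum_{i\le t}\alpha_i$; nonnegativity of that numerator is precisely what the hypothesis $t\ge 8C_1^6\ln(m)/(\gamma^2\nu(2-\nu))$ secures via \Cref{fact:sep:opt:wolfe}.

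The remaining ingredient is a lower bound on $\sum_{i\le t}\alpha_i$, and this is where the curvature Wolfe condition \eqref{eq:wolfe:2} enters. Writing $\phi(\alpha):=\cL(A(\lambda_{i-1}+\alpha v_i))$, condition \eqref{eq:wolfe:2} says the (increasing, by convexity) derivative has risen from $\phi'(0)=-\|A^\top\nabla\cL(A\lambda_{i-1})\|_\infty$ to $\phi'(\alpha_i)\ge(1-\nu/4)\phi'(0)$, so $\phi'(\alpha_i)-\phi'(0)\ge\tfrac{\nu}{4}\|A^\top\nabla\cL(A\lambda_{i-1})\|_\infty$; on the other hand this increase equals $\alpha_i$ times a Hessian quadratic form $v_i^\top A^\top\nabla^2\cL(A\tilde\lambda_i)Av_i$ at a point $\tilde\lambda_i$ of the segment $[\lambda_{i-1},\lambda_i]$, which --- using $|A_{kj}|\le1$, the curvature bound $\ell''(x)\le C_\ell(z)^2\ell'(x)$ for $x\le z$, $C_i\le C_1$, and a bounded factor to pass between the segment's endpoints --- is at most a fixed power of $C_1$ times $\|\nabla\cL(A\lambda_{i-1})\|_1$. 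Thus $\alpha_i$ is bounded below by an absolute constant times $\nu\gamma$ divided by a fixed power of $C_1$, whence $\sum_{i\le t}\alpha_i\gtrsim t\nu\gamma/C_1$ after sharpening the curvature bookkeeping to recover exactly the factor $4C_1$; substituting this into the previous display produces the stated correction term $4C_1\ln(c_0)/(t\nu\gamma)$ and with it the claimed inequality.

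I expect this last step to be the main obstacle: turning the qualitative ``approximate optimality'' in \eqref{eq:wolfe:2} into a quantitative lower bound on $\alpha_i$ requires the second-order estimate above, and the constants must be tracked with care so that only the advertised powers of $C_1$ and $C_{t_0+1}$ survive. A secondary, routine point that recurs throughout is to keep verifying that the descent property of Wolfe steps confines all relevant iterates (and the intermediate points of each line search) to the sublevel set $\{\lambda:\cL(A\lambda)\le\cL(A\lambda_{t_0})\}$, which is what licenses using $C_{t_0+1}$ rather than a larger running constant in both the margin--loss inequality and the exponent defining $c_0$.
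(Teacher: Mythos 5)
Your proposal follows the paper's own argument almost line for line: the margin-from-loss inequality $\cM(A\lambda_t)\ge -\ln\bigl(mC_{t_0+1}\cL(A\lambda_t)\bigr)/\|\lambda_t\|_1$, the per-step Wolfe descent bound phrased directly in terms of $\alpha_i$ (the paper likewise halts one step short of invoking \Cref{fact:sep:opt:wolfe}), the norm bound $\|\lambda_t\|_1\le\sum_i\alpha_i$, and the per-step lower bound on $\alpha_i$ from the curvature Wolfe condition together with a mean-value estimate on the second derivative of the line-search objective. The ``main obstacle'' you flag --- tuning the curvature bookkeeping to recover exactly the factor $4C_1$ --- is not one you can overcome, and rightly so: the paper's own proof establishes $\alpha_i\ge\nu\gamma/(4C_1^4)$ and its final display carries the coefficient $4C_1^4\ln(c_0)/((t+1)\nu\gamma)$, so the $4C_1$ appearing in the lemma statement is a typographical slip, not a target you should try to hit.
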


The preceding two step choices, $\alqub{t}$ and $\alwol{t}$,
had explicit regularization: the first stops as soon
as the steepest matching quadratic turns upward, and the second refuses to go beyond
a boundary (cf. \cref{eq:wolfe:1}).

On the other hand,
the choices $\alada{t}$ and $\alopt{t}$ are only constrained by the data.  Recall that
one way to derive $\alada{t}$ is in the case of binary $A\in\{-1,+1\}^{m\times n}$
and $\ell=\exp$, where it is crucial that each weak learner is wrong on at least
one example: this prevents steps from being too large.  The techniques in the following
proof follow those used in the margin bounds for regular AdaBoost (and are asymptotic there
as well).  It is worth noting that not only is this bound the worst, but the analysis is
the trickiest.

\begin{lemma}
    \label{fact:sep:margins:ada:exploss}
    Consider the setting of \Cref{fact:sep:margins:basic},
    but now $\ell=\exp$
    and $\alpha_t = \alada{t}$.
    Then for any $\epsilon \in (0,\gamma]$,
    there exists $T$
    so that $\cM(A\lambda_t) \geq \gamma - \epsilon$
    for all $t\geq T$.
\end{lemma}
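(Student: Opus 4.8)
The plan is to track the unnormalized iterate $\lambda_t$ along with the value of $\cL(A\lambda_t)$, and to extract a margin lower bound by combining an upper bound on $\cL(A\lambda_t)$ with a lower bound on $\|\lambda_t\|_1$. The key observation is that for the exponential loss,
\[
\exp\bigl(-\|\lambda_t\|_1\,\cM(A\lambda_t)\bigr)
= \exp\Bigl(\max_i \bfe_i^\top A\lambda_t\Bigr)
\leq \sum_{i=1}^m \exp(\bfe_i^\top A\lambda_t)
= m\,\cL(A\lambda_t),
\]
so that $\cM(A\lambda_t) \geq -\ln\bigl(m\cL(A\lambda_t)\bigr)/\|\lambda_t\|_1$. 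Thus it suffices to show that, along the $\alada{t}$ iterates, $-\ln(m\cL(A\lambda_t))$ grows essentially like $\|\lambda_t\|_1$ times something approaching $\gamma$.

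First I would invoke \Cref{fact:sep:opt:ada} (with $C_i=1$ since $\ell=\exp$), which gives the multiplicative decrease $\cL(A\lambda_t) \leq \cL(A\lambda_{t_0})\prod_{i=t_0+1}^t(1-\tfrac\nu2\gamma_i^2)$, hence after taking logs, $-\ln(m\cL(A\lambda_t)) \geq -\ln(m\cL(A\lambda_{t_0})) + \sum_{i=t_0+1}^t \tfrac{\nu}{2}\gamma_i^2 + o(\cdot)$ using $-\ln(1-x)\geq x$. Second, since $\alada{i} = \tfrac\nu2\ln\tfrac{1+\gamma_i}{1-\gamma_i}$ and each step has $\|v_i\|_1=1$, we have $\|\lambda_t\|_1 \leq \sum_{i=1}^t \alada{i} = \tfrac\nu2\sum_i \ln\tfrac{1+\gamma_i}{1-\gamma_i}$. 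The ratio of the numerator progress $\tfrac\nu2\gamma_i^2$ (or, more carefully, $-\tfrac12\ln(1-\gamma_i^2)$, the true per-step decrease of $\ln\cL$ in the AdaBoost analysis) to the per-step $\ell_1$-increase $\tfrac\nu2\ln\tfrac{1+\gamma_i}{1-\gamma_i}$ is
\[
\frac{-\tfrac12\ln(1-\gamma_i^2)}{\tfrac12\ln\frac{1+\gamma_i}{1-\gamma_i}},
\]
which is a function of $\gamma_i$ alone that tends to $1$ as $\gamma_i\to 1$ and equals roughly $\gamma_i$ for small $\gamma_i$; crucially it is bounded below by $\gamma_i \geq \gamma$ for all $\gamma_i\in(0,1)$. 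Combining the two displays, $\cM(A\lambda_t) \geq -\ln(m\cL(A\lambda_t))/\|\lambda_t\|_1$ is, up to the additive $-\ln(m\cL(A\lambda_{t_0}))/\|\lambda_t\|_1$ term, at least the $\gamma_i$-weighted average of these ratios — which is at least $\gamma$ but need not approach $\gamma$ if the $\gamma_i$ stay near $\gamma$.

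The main obstacle — and the reason this statement is only asymptotic and is described as "the trickiest" — is precisely that a weighted average of quantities each $\geq\gamma$ need not approach $\gamma$; one must argue that the "overshoot" of $\alada{i}$ relative to $\alopt{i}$ does not accumulate, i.e., that the $\ell_1$-norm does not grow so much faster than the log-loss progress that the bound degrades. The standard AdaBoost margin argument (following \citet{boosting_margin,schapire_freund_book_final,rudin_adaboost_bad_margin}) handles this by a case split: either $\gamma_i$ is bounded away from $1$, in which case the per-step ratio is bounded away from... no — rather, the trick is to fix a target $\gamma-\epsilon$ and observe that whenever the current margin $\cM(A\lambda_{t-1})$ is below $\gamma-\epsilon$, the edge $\gamma_t$ and the step make definite progress closing the gap, whereas once the margin exceeds $\gamma-\epsilon/2$ it cannot drop back below $\gamma-\epsilon$ by more than a controlled amount per step with step sizes shrinking appropriately. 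I would formalize this via the auxiliary potential $\Phi_t := -\ln(m\cL(A\lambda_t)) - (\gamma-\epsilon)\|\lambda_t\|_1$ and show $\Phi_t\to+\infty$: each step changes $\Phi$ by $-\tfrac12\ln(1-\gamma_i^2) - (\gamma-\epsilon)\alada{i}$, which by the ratio computation above is nonnegative and in fact bounded below by a positive constant times $\alada{i}$ whenever $\gamma_i \geq \gamma$ (true always) — so $\Phi_t$ is nondecreasing, and since $\|\lambda_t\|_1\to\infty$ (guaranteed because $\gamma_i\geq\gamma>0$ forces $\alada{i}\geq\tfrac\nu2\ln\tfrac{1+\gamma}{1-\gamma}>0$ each step), dividing $\Phi_t\geq\Phi_0$ by $\|\lambda_t\|_1$ yields $\cM(A\lambda_t)\geq (\gamma-\epsilon) + \Phi_0/\|\lambda_t\|_1 \to \gamma-\epsilon$. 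The delicate point to verify carefully is the inequality $-\tfrac12\ln(1-\gamma_i^2) \geq (\gamma-\epsilon)\cdot\tfrac12\ln\tfrac{1+\gamma_i}{1-\gamma_i}$ for all $\gamma_i\in[\gamma,1)$ and all $\epsilon\in(0,\gamma]$; this reduces to showing $g(x) := \tfrac{-\ln(1-x^2)}{\ln((1+x)/(1-x))} \geq \gamma$ for $x\in[\gamma,1)$, which follows since $g$ is increasing on $(0,1)$ with $g(x)\to 1$ as $x\to 1$ and $g(x)\sim x$ as $x\to 0^+$, hence $g(x)\geq g(\gamma)\geq\gamma$ — the last step because $g(\gamma)\geq\gamma$ is itself the inequality $-\ln(1-\gamma^2)\geq\gamma\ln\tfrac{1+\gamma}{1-\gamma}$, verifiable by a short one-variable argument. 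Once that monotone inequality is in hand, everything else is bookkeeping.
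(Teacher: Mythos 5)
Your overall framework — track the potential $\Phi_t := -\ln(m\cL(A\lambda_t)) - (\gamma-\epsilon)\|\lambda_t\|_1$, show it is nondecreasing, and divide by $\|\lambda_t\|_1$ — is exactly the paper's argument in log scale (the paper bounds $m\exp(\theta\|\lambda_t\|_1)\cL(A\lambda_t)$, which is $\exp(-\Phi_t)$ up to constants, via \Cref{fact:sep:margins:ada:step1}). But there is a genuine error in the heart of the argument, and it is not a cosmetic one: it is precisely the point where shrinkage matters.

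The per-step decrease of $-\ln\cL$ under the \emph{shrunk} step $\alada{i}=\frac{\nu}{2}\ln\frac{1+\gamma_i}{1-\gamma_i}$ is not $-\frac12\ln(1-\gamma_i^2)$; that is the decrease only for $\nu=1$. The correct multiplicative factor (with $\ell=\exp$, $A$ binary) is $\frac12(1-\gamma_i^2)^{\nu/2}\bigl[(1-\gamma_i)^{1-\nu}+(1+\gamma_i)^{1-\nu}\bigr]$, which for $\nu<1$ is \emph{larger} (a worse decrease), so you are overcrediting progress. Similarly, your ``delicate inequality'' drops the $\nu$ from $\alada{i}$ on the right-hand side. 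After these substitutions you reach $g(\gamma_i)\geq\gamma$ with $g(x)=\frac{-\ln(1-x^2)}{\ln((1+x)/(1-x))}$; but that inequality is \emph{false}. In fact $g=\Upsilon_1$ in the paper's notation, and \Cref{fact:upsilon} gives $\gamma/2\leq\Upsilon_1(\gamma)\leq\gamma$ (note $g(x)\sim x/2$, not $\sim x$, as $x\to 0^+$). Numerically, at $\gamma=1/2$ one has $-\ln(1-\gamma^2)\approx0.288 < 0.549 \approx \gamma\ln\frac{1+\gamma}{1-\gamma}$. This is not accidental: $\Upsilon_1(\gamma)\leq\gamma$ is exactly the reason unshrunk AdaBoost fails to reach margin $\gamma$, and the theorem you are trying to prove is precisely the claim that shrinkage fixes this. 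The correct condition for your $\Phi_t$ to be nondecreasing is $\gamma-\epsilon < \Upsilon_\nu(\gamma)$, and the saving grace is \Cref{fact:upsilon} part (2): $\Upsilon_\nu(\gamma)\to\gamma$ as $\nu\downarrow 0$, so the condition can be met by taking $\nu$ small enough depending on $\epsilon$. You also need the monotonicity in $\gamma_i$ over $[\theta,1]$ (the first part of \Cref{fact:upsilon_goal_in_life}) to pass from $\gamma_i$ to $\gamma$; the monotonicity you stated for $g$ is not the right statement for general $\nu$. Once you use the true $\nu$-dependent decrease, identify the threshold as $\Upsilon_\nu(\gamma)$, and invoke $\Upsilon_\nu(\gamma)\to\gamma$, your bookkeeping goes through and recovers the paper's proof.
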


Similarly, $\alopt{t}$ is only implicitly regularized.  The condition that
$A\in\{-1,+1\}^{m\times n}$ prevents the negative, constraining examples
from having too little influence.

\begin{lemma}
    \label{fact:sep:margins:ada:alopt}
    Consider the setting of \Cref{fact:sep:margins:basic},
    but now $\ell=\exp$,
    the matrix $A$ is binary,
    and $\alpha_t = \alopt{t}$.
    Then for any $\epsilon > 0$,
    there exists $T$
    so that $\cM(A\lambda_t) \geq \gamma - \epsilon$
    for all $t\geq T$.
\end{lemma}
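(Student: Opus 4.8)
The plan is to reduce the statement to the already-proved \Cref{fact:sep:margins:ada:exploss} by observing that, in the present setting (binary $A$, $\ell=\exp$), the two step sizes $\alopt{t}$ and $\alada{t}$ coincide. Concretely, I would write the line-search objective as
\[
    \alpha \mapsto \cL(A(\lambda_{t-1}+\alpha v_t))
    = m^{-1}\sum_{i=1}^m \exp\!\big(\bfe_i^\top A\lambda_{t-1}\big)\exp\!\big(\alpha\, \bfe_i^\top A v_t\big),
\]
and use that $\bfe_i^\top A v_t \in \{-1,+1\}$ since $A$ is binary and $v_t = \pm\bfe_{j_t}$. Splitting the sum by the sign of $\bfe_i^\top A v_t$ into unnormalized weights $W_+$ and $W_-$, the objective becomes $m^{-1}(e^\alpha W_+ + e^{-\alpha} W_-)$, whose minimizer is $\aloptT{t}{1} = \tfrac12\ln(W_-/W_+)$. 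On the other hand, unwinding the definition of $\gamma_t$ for $\ell=\exp$ gives $\gamma_t = (W_- - W_+)/(W_- + W_+)$, hence $W_-/W_+ = (1+\gamma_t)/(1-\gamma_t)$ and $\aloptT{t}{1} = \tfrac12\ln\tfrac{1+\gamma_t}{1-\gamma_t}$. Multiplying by the shrinkage factor yields $\alopt{t} = \nu\aloptT{t}{1} = \tfrac\nu2\ln\tfrac{1+\gamma_t}{1-\gamma_t} = \alada{t}$, so the iterates here are exactly those analyzed in \Cref{fact:sep:margins:ada:exploss}.

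Given this identity, the conclusion follows. For $\epsilon \in (0,\gamma]$ it is literally \Cref{fact:sep:margins:ada:exploss}. For $\epsilon > \gamma$, apply \Cref{fact:sep:margins:ada:exploss} with $\epsilon$ replaced by $\gamma$ to obtain $T$ with $\cM(A\lambda_t) \geq \gamma - \gamma = 0 \geq \gamma - \epsilon$ for all $t\geq T$; alternatively, \Cref{fact:sep:opt:basic} with $\ell=\exp$ gives $\cL(A\lambda_t)\to 0$, which drives every coordinate $\bfe_i^\top A\lambda_t$ negative and hence makes all margins eventually nonnegative.

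The only point needing care is a round with $\gamma_t = 1$ (a weak learner correct on every example under the current weights), where the line search has no finite minimizer and the displayed identity degenerates; but this is excluded by the standing assumption in the definition of $\alopt{t}$ that $\aloptT{t}{1}$ exists, equivalently by the same convention already invoked for $\alada{t}$. Thus essentially all the work is the elementary step-size identity $\aloptT{t}{1} = \alada{t}/\nu$ in the binary exponential-loss case --- which is the ``main obstacle,'' though a modest one --- and everything substantive is inherited from \Cref{fact:sep:margins:ada:exploss}.
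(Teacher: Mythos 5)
Your proposal is correct, and in the case $\ell=\exp$ it is essentially what the paper's proof collapses to, though it takes a slightly shorter route. The paper also relies on the equality $\alopt{t}=\alada{t}$ for binary $A$ and $\ell=\exp$ (indeed this is stated without proof in the description of the Optimal step, and is established as the $C_t=1$, hence $\tau=0$, specialization of \Cref{fact:alopt_almost_alada}). But rather than invoking \Cref{fact:sep:margins:ada:exploss} as a black box, the paper re-runs the margin machinery of \Cref{fact:sep:margins:ada:step1} and \Cref{fact:upsilon_goal_in_life}, tracking a perturbation $\tau = \tfrac{\nu}{2}\ln(C_t^4)$ of $\alada{t}$ and showing it shrinks for $t_0$ large; that bookkeeping is structured to accommodate $\ell\in\bL_\infty$ with $C_t\to 1$, and degenerates to $\tau\equiv 0$ when $\ell=\exp$. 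Your version directly exhibits the line-search minimizer and observes it equals $\aladaT{t}{1}$, then hands the margin argument off to the already-proved \Cref{fact:sep:margins:ada:exploss}; this is cleaner for the stated ($\ell=\exp$) hypotheses, and your handling of $\epsilon>\gamma$ and of degenerate rounds ($\gamma_t=1$) is fine. One caveat worth noting: whatever implicit ``for $\nu$ small enough'' is operating in the proof of \Cref{fact:sep:margins:ada:exploss} (which picks $\nu$ so that $\Upsilon_\nu(\gamma)>\theta$) is inherited by your reduction, but the paper's own proof of this lemma leans on the same $\Upsilon_\nu$ comparison, so you are not introducing any new gap.
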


The above lemmas together provide the proof of \Cref{fact:sep:margins:basic}.
But before closing, note that while the results for the unconstrained step sizes were
only asymptotic, it is possible to derive a rate for the more modest goal of margins
closer to $\gamma/3$.

\begin{proposition}
    \label{fact:sep:margins:ada:vague_rate}
    Consider the setting of \Cref{fact:sep:margins:basic}, but specialized
    with $\ell = \exp$ and $\alpha_t = \alada{t}$.
    Let a target margin value $\theta < \gamma$ be given.
    If $\theta < \gamma/(1+\gamma)$ (e.g., it suffices that
    $\theta < \gamma/2$),
    then
    \[
        \frac 1 m\sum_{i=1}^m \1\left[\frac{-\bfe_iA\lambda_t}{\|\lambda_t\|_1} < \theta\right]
        \leq \exp\left(\!\!\frac {- t\nu(\gamma^2 - \theta\gamma(2+\gamma))}{2}\!\right)\!\!.
    \]
    In particular, if $\theta < \gamma/(2+\gamma)$ (e.g., it suffices that
    $\theta < \gamma/3$)
    and $t > 2\ln(m) / (\nu(\gamma^2 - \theta\gamma(2+\gamma)))$,
    then $\cM(A\lambda_t) \geq \theta$.
\end{proposition}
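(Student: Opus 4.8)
\emph{Proof proposal.} The plan is to combine the classical ``margin counting'' reduction for boosting with the \emph{exact} per-round decrease of the exponential loss on a binary matrix, and then to push everything down to a single one-variable inequality.

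First I would record the reduction. For $t\ge 1$ the iterate $\lambda_t$ is nonzero (each step $\alpha_i:=\alada{i}=\tfrac\nu2\ln\tfrac{1+\gamma_i}{1-\gamma_i}$ is strictly positive, since $\gamma_i\in[\gamma,1)$ and $\gamma>0$), and for every sample index $i$ the event $-\bfe_i^\top A\lambda_t<\theta\|\lambda_t\|_1$ is the same as $\bfe_i^\top A\lambda_t+\theta\|\lambda_t\|_1>0$. Using $\1[y>0]\le e^{y}$ together with $\cL(z)=m^{-1}\sum_i\exp(z_i)$, this gives
\[
    \frac1m\sum_{i=1}^m\1\!\left[\frac{-\bfe_i^\top A\lambda_t}{\|\lambda_t\|_1}<\theta\right]
    \;\le\; e^{\theta\|\lambda_t\|_1}\,\cL(A\lambda_t).
\]

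Next I would telescope the right-hand side. Write $r_i:=\tfrac{1+\gamma_i}{1-\gamma_i}$. Since $A$ is binary and $\ell=\exp$, splitting the unnormalized round-$i$ weight by the sign of $\bfe_k^\top A v_i$ gives, exactly as in AdaBoost's analysis, $\cL(A\lambda_i)=\cL(A\lambda_{i-1})\rho_i$ with $\rho_i=(r_i^{1-\nu/2}+r_i^{\nu/2})/(r_i+1)$; together with $\|\lambda_t\|_1\le\sum_{i\le t}\alpha_i$ (triangle inequality) and $\cL(A\lambda_0)=1$, this yields $e^{\theta\|\lambda_t\|_1}\cL(A\lambda_t)\le\prod_{i=1}^t(\rho_i e^{\theta\alpha_i})$. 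So it suffices to prove the per-round bound $\rho_i e^{\theta\alpha_i}\le\exp\!\big(-\tfrac\nu2(\gamma^2-\theta\gamma(2+\gamma))\big)$, using only $\gamma_i\ge\gamma$ and $0<\theta<\tfrac{\gamma}{1+\gamma}$.

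The hard part is this per-round estimate, which I would handle in three small steps. (i) Setting $\beta_i:=\ln r_i$, so that $\gamma_i=\tanh(\beta_i/2)$ and $\cosh(\beta_i/2)=(1-\gamma_i^2)^{-1/2}$, a short computation rewrites $\rho_i=\cosh(\tfrac{1-\nu}{2}\beta_i)/\cosh(\tfrac{\beta_i}{2})$; since $u\mapsto\ln\cosh u$ is convex and vanishes at $0$, $\ln\cosh((1-\nu)\tfrac{\beta_i}{2})\le(1-\nu)\ln\cosh(\tfrac{\beta_i}{2})$, hence $\rho_i\le\cosh(\beta_i/2)^{-\nu}=(1-\gamma_i^2)^{\nu/2}$ and therefore $\rho_i e^{\theta\alpha_i}\le\exp(\tfrac\nu2 P(\gamma_i))$, where $P(x):=\ln(1-x^2)+\theta\ln\tfrac{1+x}{1-x}$. (ii) Since $P'(x)=2(\theta-x)/(1-x^2)<0$ for $x>\theta$ and $\theta<\gamma$, the function $P$ is decreasing on $[\gamma,1)$, so $P(\gamma_i)\le P(\gamma)$. (iii) It remains to check $P(\gamma)\le-\gamma^2+\theta\gamma(2+\gamma)$, i.e.
\[
    \big(\ln(1-\gamma^2)+\gamma^2\big)+\theta\big(\ln\tfrac{1+\gamma}{1-\gamma}-\gamma(2+\gamma)\big)\le 0;
\]
the first bracket is $\le 0$, and if the second is $\le 0$ we are done since $\theta\ge 0$, while if it is positive I would use $\theta<\tfrac{\gamma}{1+\gamma}$, clear the denominator $1+\gamma$, and substitute $\ln(1\pm\gamma)$ to reduce the claim to $-\ln(1-\gamma)\ge(1+2\gamma)\ln(1+\gamma)-\gamma^2$, which holds because $\chi(\gamma):=-\ln(1-\gamma)-(1+2\gamma)\ln(1+\gamma)+\gamma^2$ satisfies $\chi(0)=\chi'(0)=0$ and $\chi''(\gamma)=\tfrac1{(1-\gamma)^2}-\tfrac1{(1+\gamma)^2}+\tfrac{2\gamma}{1+\gamma}>0$ on $(0,1)$.

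Finally, multiplying the per-round bounds over $i=1,\dots,t$ and combining with the reduction yields the first displayed inequality of the proposition. For the ``in particular'' clause, $\theta<\tfrac{\gamma}{2+\gamma}$ makes $\gamma^2-\theta\gamma(2+\gamma)>0$, so for $t>2\ln(m)/(\nu(\gamma^2-\theta\gamma(2+\gamma)))$ the right-hand side is strictly below $1/m$; since $\sum_i\1[\,\cdot\,]$ is a nonnegative integer, it must be $0$, so no sample point has margin below $\theta$ and $\cM(A\lambda_t)\ge\theta$. I expect the main obstacle to be step (i)--(iii): one must use the sharp bound $\rho_i\le(1-\gamma_i^2)^{\nu/2}$ rather than the weaker $\rho_i\le 1-\tfrac\nu2\gamma_i^2$ of \Cref{fact:sep:opt:ada} (the latter is overwhelmed by the growing factor $e^{\theta\alpha_i}$ once $\gamma_i$ is not small), and the concluding single-variable inequality --- which is exactly where the hypothesis $\theta<\gamma/(1+\gamma)$ is consumed --- needs some care.
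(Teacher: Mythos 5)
Your proposal is correct and follows the same overall skeleton as the paper's proof: bound the margin-violation count by $e^{\theta\|\lambda_t\|_1}\cL(A\lambda_t)$, telescope the exact per-round decrease $\rho_i$ of the exponential loss, reduce to a per-round bound on $\rho_i e^{\theta\alpha_i}=\exp(\tfrac\nu2 P(\gamma_i))$ with $P(x)=\ln(1-x^2)+\theta\ln\tfrac{1+x}{1-x}$, and then close a one-variable inequality using $\theta<\gamma/(1+\gamma)$. The paper arrives at exactly this per-round factor too, but by specializing Lemma~\ref{fact:sep:margins:ada:step1} (with $\ell=\exp$, $\tau=0$, $C_i=1$, $t_0=0$) rather than re-deriving the reduction and the telescoping identity; and the paper's bound $\rho_i\le(1-\gamma_i^2)^{\nu/2}$ comes from concavity of $u\mapsto u^{1-\nu}$, which is equivalent to your $\ln\cosh$-convexity step. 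The main genuine difference is where the hypothesis $\theta<\gamma/(1+\gamma)$ is consumed and how the final numerical inequality is closed: the paper first applies the elementary bounds $1-x^2\le e^{-x^2}$ and $1+x\le e^x$ to pass from $(1-\gamma_i)^{1-\theta}(1+\gamma_i)^{1+\theta}$ to $\exp(-\gamma_i^2+\theta\gamma_i(2+\gamma_i))$, and then uses the hypothesis to show the quadratic $-x^2+\theta x(2+x)$ is nonincreasing in $x$ so that $\gamma_i\ge\gamma$ can be applied; you instead show $P$ itself is decreasing for $x>\theta$ (which needs only $\theta<\gamma$), swap $\gamma_i\to\gamma$ first, and then prove $P(\gamma)\le-\gamma^2+\theta\gamma(2+\gamma)$ by a case split and a second-derivative argument for the auxiliary function $\chi$. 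Both routes are valid; the paper's ordering keeps the inequality elementary while yours front-loads the monotonicity and pushes the $\theta<\gamma/(1+\gamma)$ hypothesis into the final calculus lemma. Nothing in your argument is wrong, and your observation that the cruder bound $\rho_i\le1-\tfrac\nu2\gamma_i^2$ from Lemma~\ref{fact:sep:opt:ada} is too weak here is exactly the right diagnosis.
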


Note, of course, that this bound has the severe analytic artifact of demonstrating no
benefit of shrinkage!


\subsection{Discussion}
\label{sec:sep:discussion}

To get a sense of these margin bounds, first recall
\citeauthor{yoav_boost_by_majority}'s
lower bound on boosting methods in
the separable case,
which states that $\Omega(\frac 1 {\gamma^2} \ln (\frac 1 \tau))$ iterations are necessary
to achieve classification error $\tau > 0$
\citep[Section 2]{yoav_boost_by_majority}.  Setting $\tau = 1/m$,
it follows that $\Omega(\ln(m)/\gamma^2)$
iterations are necessary to achieve any nonnegative margin.
By comparison, with $\alwol{t}$ and $\ell = \exp$, just $12\ln(m)/\gamma^2$ iterations
with choice $\nu = 1/2$ suffice
to reach margin $\gamma/2$ (by \Cref{fact:sep:margin:wolfe}).
More generally, $\alwol{t}$ reaches margin $\gamma(1-\nu)$ with $8\ln(m)/(\nu\gamma)^2$
iterations (if step size $\alqub{t}$ is used,  then $2\ln(m)/(\nu\gamma)^2$ iterations suffice
by \Cref{fact:sep:margin:quadub}).

The explicit margin-maximizing method of
\citet{shai_singer_weaklearn_linsep} requires $t \geq 32\ln(m)/\epsilon^2$ iterations
to achieve margin $\gamma - \epsilon$, where $\epsilon\in (0,\gamma)$.
By comparison, converting the above multiplicative
bound into an additive bound, step size $\alwolT{t}{\epsilon/\gamma}$ requires
$8\ln(m)/\epsilon^2$ iterations.  While this bound is slightly better, the comparison is not
fair, since $\alwolT{t}{\epsilon/\gamma}$ requires knowledge of $\gamma$ in the choice of
shrinkage parameter $\nu$.
(Pessimistically taking $\nu=\epsilon$ gives an additive guarantee, but with a poor rate.)
Consequently, it can be reasoned that shrinkage methods achieve
excellent margins, but are best suited for multiplicative guarantees.


\begin{figure}
\begin{center}
    \centerline{\includegraphics[width=\columnwidth]{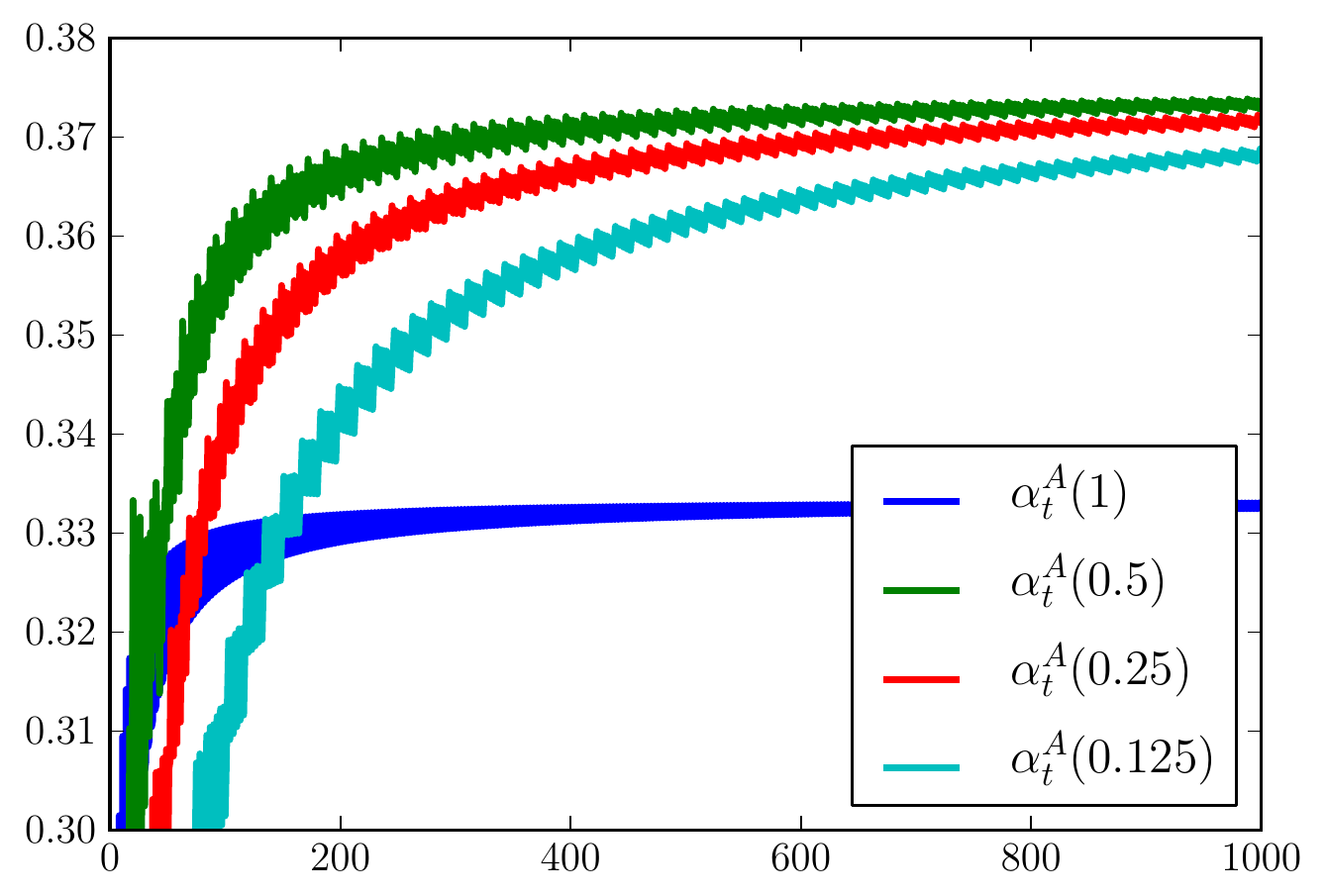}}
    \caption{Sanity check: shrinkage leads to margin maximization.}
\label{fig:ada_vs_ada}
\end{center}
\end{figure}

Another question is how accurately the bounds presented here depict the methods provided.
As a brief sanity check, the methods may be run on a problem instance where AdaBoost demonstrably
does not achieve maximum margins.
The particular instance tested here is a binary
matrix $A \in \{-1,+1\}^{8\times 8}$ due to
\citet[Theorem 7]{rudin_adaboost_bad_margin};
recall that AdaBoost, in the present notation (with $A$ binary),
corresponds to $\ell = \exp$ and step
size
$\aladaT{t}{1} = \aloptT{t}{1}$ (no shrinkage).
Two plots are provided.
\begin{enumerate}
    \item \Cref{fig:ada_vs_ada} is a sanity check, showing that $\ell = \exp$
        and $\aladaT{t}{1}=\aloptT{t}{1}$
        may not achieve maximum margins, but shrinkage overcomes this.
    \item \Cref{fig:ada_vs_wolfe} demonstrates that the Wolfe search (with $\ell = \exp$)
        is indeed effective, but demanding higher accuracy comes at a price.
\end{enumerate}
These plots will be discussed further in \Cref{sec:discussion}.  Additional tests with
this matrix demonstrated that the method of \citet{shai_singer_weaklearn_linsep} indeed
performs a tiny bit worse than the Wolfe search, but of course one example is not terribly
indicative.  Perhaps most
importantly, a test with the logistic loss showed that the bound is loose: the logistic
loss performs well, and does not suffer a startup cost as indicated by the bounds.




\section{The General Case}
\label{sec:general}

The last technical contribution of this manuscript is to briefly consider the general
case (which is potentially nonseparable).  Similarly to the separable case,
this section will establish convergence rates for empirical risk, margin
guarantees, and briefly discuss the connection to existing margin maximizing
methods.  But first, it is necessary to discuss the structure of the general
case, and in particular to develop what margins mean without separability.

This section hinges upon the following decomposition of a boosting instance.
This decomposition partitions a boosting instance, specifically
its examples $\{(x_i,y_i)\}_{i=1}^m$, into a hard subset $H(A)$,
and an easy subset $H(A)^c$.
The easy subset alone is separable, and thus margins will be measured there.
Although the analysis will rely heavily on properties of this decomposition due to
\citet{primal_dual_boosting_arxiv}, the decomposition itself has appeared, with various
guarantees, in numerous places
\citep{goldreich_levin_hardcore,russell_hardcore,mukherjee_rudin_schapire_adaboost_convergence_rate}.
The notation $H(A)$ reflects the fact that this structure has no relation to the choice
of $\ell\in\bL$.

\begin{figure}[t]
\begin{center}
    \centerline{\includegraphics[width=\columnwidth]{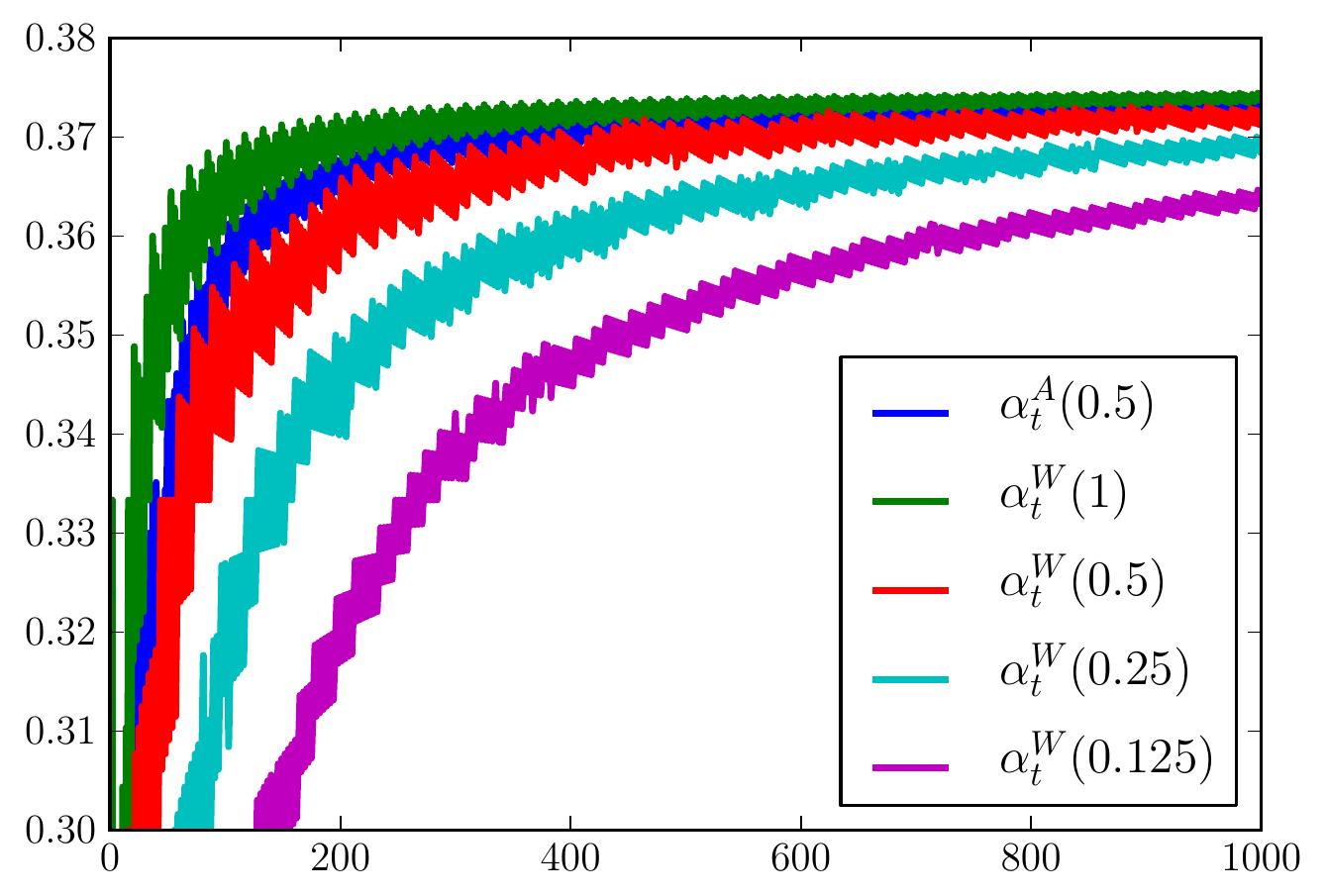}}
    \caption{Sanity check: the Wolfe search effectively maximizes margins.}
\label{fig:ada_vs_wolfe}
\end{center}
\end{figure}


\begin{definition}
    (Cf. \citet[Definition 5.1, 5.7]{primal_dual_boosting_arxiv}.)
    Given a boosting problem encoded in a matrix $A\in\R^{m\times n}$,
    a set of examples (rows) $H(A) \subseteq [m]$
    is a \emph{hard core} for $A$ (and the corresponding boosting problem) if it satisfies
    the following properties.
    \begin{itemize}
        \item There exists a weighting $\hat\lambda\in\R^n$ with $\bfe_i^\top A\hat\lambda < 0$
            for $i\in H(A)^c$ and $\bfe_i^\top A\hat\lambda = 0$ for $i\in H(A)$.
        \item Every weighting $\lambda\in\R^n$ with $\bfe_i^\top A\lambda<0$ for some
            $i\in H(A)$ also has $\bfe_k^\top A\lambda>0$ for some $k\in H(A)$.
    \end{itemize}
    Additionally, define a row-wise partition of $A$ into
    matrices $A_0,A_+$, where $A_+$ has the examples
    in $H(A)$, and $A_0$ has the examples in $H(A)^c$.
\end{definition}

The second property provides that $H(A)$ is difficult: positive margins on some examples force
negative margins on others.  On the other hand, the complement $H(A)^c$ is easy, and moreover
can be solved without affecting $H(A)$.


\begin{proposition}(Cf. \citet[Proposition 5.8, Theorem 5.9]{primal_dual_boosting_arxiv}.)
    For any $A\in \R^{m\times n}$,
    a hard core $H(A)$ always exists, and is unique.
\end{proposition}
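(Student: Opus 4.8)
The plan is to write down a concrete set and then show it is forced to be the hard core. Define
\[
    H(A)^c := \bigl\{\, i\in[m] : \exists\,\lambda\in\R^n \text{ with } A\lambda\leq 0 \text{ entrywise and } \bfe_i^\top A\lambda < 0 \,\bigr\},
\]
and let $H(A)$ be its complement; the resulting claim has the flavour of a theorem of the alternative, and one could route it entirely through LP duality (feasibility of $\{A\lambda\leq 0,\ \bfe_i^\top A\lambda\leq -1\}$), but the direct verification below seems cleaner.

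For the first bullet I would, for each $i\in H(A)^c$, fix a witness $\lambda^{(i)}$ and set $\hat\lambda := \sum_{i\in H(A)^c}\lambda^{(i)}$ (a finite sum, since $[m]$ is finite); then $A\hat\lambda\leq 0$ and $\bfe_i^\top A\hat\lambda < 0$ for each $i\in H(A)^c$. The point is that the definition of $H(A)^c$ says precisely that \emph{any} $\lambda$ with $A\lambda\leq 0$ must have $\bfe_j^\top A\lambda = 0$ for $j\in H(A)$; in particular $\bfe_j^\top A\hat\lambda = 0$ there, which is the first bullet. For the second bullet, suppose toward a contradiction that some $\lambda$ has $\bfe_i^\top A\lambda < 0$ for an $i\in H(A)$ while $\bfe_k^\top A\lambda\leq 0$ for all $k\in H(A)$. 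For $N$ large enough, $\lambda + N\hat\lambda$ satisfies $A(\lambda+N\hat\lambda)\leq 0$: on $H(A)^c$ because $\bfe_j^\top A\hat\lambda < 0$ eventually dominates, and on $H(A)$ because $\bfe_j^\top A\hat\lambda = 0$ while $\bfe_j^\top A\lambda\leq 0$; yet its $i$-th coordinate equals $\bfe_i^\top A\lambda < 0$, forcing $i\in H(A)^c$, a contradiction. Hence this $H(A)$ is a hard core.

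For uniqueness, suppose $H$ and $H'$ are both hard cores, with respective first-bullet witnesses $\hat\lambda$ and $\hat\lambda'$. I would show $H'\subseteq H$, then swap roles by symmetry. Fix $i\in H^c$. The first bullet for $H$ gives $A\hat\lambda\leq 0$ and $\bfe_i^\top A\hat\lambda < 0$. If $i\in H'$, then applying the second bullet for $H'$ to the weighting $\hat\lambda$ would produce some $k\in H'$ with $\bfe_k^\top A\hat\lambda > 0$, contradicting $A\hat\lambda\leq 0$; so $i\in(H')^c$. Thus $H^c\subseteq(H')^c$, i.e. $H'\subseteq H$, and by symmetry $H=H'$.

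There is no real obstacle here; the only things requiring care are choosing the canonical description of $H(A)^c$ so that both defining properties fall out mechanically, and the scaling step $\lambda\mapsto\lambda+N\hat\lambda$ in the second bullet, where one must check separately that the $H(A)$-coordinates are not pushed positive while the $H(A)^c$-coordinates are driven strictly negative. The uniqueness half is where the second bullet earns its keep: it is the property that pins down $H(A)$ from the outside.
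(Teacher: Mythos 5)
The paper itself does not prove this statement; it cites Proposition 5.8 and Theorem 5.9 of \citet{primal_dual_boosting_arxiv}, so there is no in-paper proof to compare against. Your self-contained construction is correct, and it is essentially the canonical argument one would give: you define $H(A)^c$ as the set of coordinates that can be driven strictly negative while the whole image $A\lambda$ stays entrywise nonpositive, build the first-bullet witness $\hat\lambda$ by summing individual witnesses, and observe that the defining property forces the $H(A)$-coordinates of any such nonpositive vector to be exactly zero (this is the one subtle step, and you handle it correctly). The shift $\lambda\mapsto\lambda+N\hat\lambda$ cleanly establishes the second bullet, and the uniqueness argument uses only the two bullet properties — not your particular construction — so it applies symmetrically to any two candidate hard cores, which is exactly what uniqueness requires. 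You are also right that this is morally a theorem of the alternative (Gordan/Stiemke territory); the cited reference routes through convex-analytic machinery because it needs finer structural information about the decomposition (e.g., attainment and level sets), whereas for existence and uniqueness alone your direct finite-dimensional verification is shorter and assumes less. No gaps.
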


With
the decomposition
in place, the aforementioned guarantees may be stated.
The first, as in the separable case, is convergence of empirical risk.
There is hardly anything to do here; the groundwork from \Cref{sec:sep} can be
plugged directly into existing techniques to generate this theorem
\citep[Section 6]{primal_dual_boosting_arxiv}.

\begin{theorem}
    \label{fact:general:opt}
    Let general boosting matrix $A$ be given (i.e., potentially $\gamma=0$),
    along with shrinkage parameter $\nu\in (0,1]$, any $\ell \in \bL$,
    and target suboptimality $\epsilon >0$.
    Suppose step sizes $\{\alpha_t\}_{t\geq 0}$ are consistent with
    $\alqub{t}$, $\alwol{t}$, $\alopt{t}$,
    or $\alada{t}$ with $\ell = \exp$ and $A$ binary.
    Then $\cO(\frac 1 \epsilon)$ iterations suffice to reach suboptimality $\epsilon >0$.

    If the instance is either separable (i.e., $\gamma >0$ as in \Cref{sec:sep}) or
    attains its minimizer (i.e., $|H(A)| = m$ \citep[Theorem 5.5]{primal_dual_boosting_arxiv}),
    then the rate improves
    to $\cO(\ln(\frac 1 \epsilon))$.
\end{theorem}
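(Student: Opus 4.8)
The proof is a matter of feeding the per-step guarantees of \Cref{sec:sep:opt} into the potential-function machinery of \citet[Section 6]{primal_dual_boosting_arxiv}; the plan is to recall that machinery and observe that shrinkage leaves its hypotheses intact.

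For the slow rate, write $\delta_t := \cL(A\lambda_{t-1}) - \bar\cL_A$ for the suboptimality. Since $\ell\in\bL$, the coordinates of $\nabla\cL(A\lambda_{t-1})$ are comparable to $m^{-1}\exp(\bfe_i^\top A\lambda_{t-1})$, so $\|\nabla\cL(A\lambda_{t-1})\|_1 \le C_1^2\,\cL(A\lambda_0)$, using that $\cL(A\lambda_t)$ is nonincreasing along the iterates. Fixing a near-minimizer $\lambda^\circ$ with $\cL(A\lambda^\circ)\le\bar\cL_A+\epsilon/2$ and invoking convexity of $\cL$ with H\"older's inequality,
\begin{align*}
\delta_t - \epsilon/2
&\le \nabla\cL(A\lambda_{t-1})^\top A(\lambda_{t-1}-\lambda^\circ)\\
&\le \|A^\top\nabla\cL(A\lambda_{t-1})\|_\infty\,\|\lambda_{t-1}-\lambda^\circ\|_1,
\end{align*}
so whenever $\delta_t>\epsilon$ one has $\gamma_t \ge \delta_t/(2C_1^2\cL(A\lambda_0)R)$, where $R$ bounds $\|\lambda_{t-1}-\lambda^\circ\|_1$. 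Plugging this into the one-step form of \Cref{fact:sep:opt:quadub} (take $t_0=t-1$; the exponent there is at most $1/2$, so $e^{-x}\le 1-x/2$ applies) gives $\delta_t-\delta_{t+1} \gtrsim \gamma_t^2\,\cL(A\lambda_{t-1}) \gtrsim \delta_t^2$, up to a constant depending only on $\nu$, $C_1$, $\cL(A\lambda_0)$, $R$; summing $1/\delta_{t+1}-1/\delta_t\gtrsim 1$ yields $\delta_t=\cO(1/t)$, i.e.\ $\cO(1/\epsilon)$ iterations. The same computation covers $\alwol{t}$ via \Cref{fact:sep:opt:wolfe}, $\alopt{t}$ via \Cref{fact:singlestep:qub_opt_relation} together with \Cref{fact:sep:opt:quadub}, and $\alada{t}$ with $\ell=\exp$ via \Cref{fact:sep:opt:ada}.

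The one genuine obstacle is that $\lambda^\circ$, and hence $R$, can degrade as $\epsilon\to0$ in the nonseparable, non-attaining case, and this is exactly where the hard-core decomposition of \citet[Sections 5--6]{primal_dual_boosting_arxiv} enters: splitting $\lambda$ along $A_0$ and $A_+$, the easy component $H(A)^c$ is driven to zero loss without perturbing the hard core (first bullet of the hard-core definition), while on $H(A)$ the loss stays bounded below, so a near-minimizer and width term of controlled norm can be chosen. I would simply import this bookkeeping; since the per-step decrease delivered by each of the four shrunk step sizes has precisely the form that argument consumes --- a multiplicative decrease governed by $\gamma_t^2$, with the $\nu$- and $C_t$-dependent constants absorbed into $\cO(\cdot)$ --- the conversion carries over.

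For the improved rate, split into two cases. If $\gamma>0$ then $\gamma_t\ge\gamma$ for every $t$ and $\bar\cL_A=0$, so \Cref{fact:sep:opt:quadub} (or \Cref{fact:sep:opt:wolfe}, \Cref{fact:sep:opt:ada}) already gives $\cL(A\lambda_t)\le\cL(A\lambda_0)\exp(-c\gamma^2 t)$ outright --- this is \Cref{fact:sep:opt:basic}. If instead $|H(A)|=m$, the minimizer $\lambda^*$ is attained \citep[Theorem 5.5]{primal_dual_boosting_arxiv} and the iterates remain in a bounded region, so the relative curvature of $\bL$ upgrades the convexity bound above to a Polyak--{\L}ojasiewicz-type inequality $\|A^\top\nabla\cL(A\lambda_{t-1})\|_\infty^2 \gtrsim \delta_t$; then $\gamma_t^2\gtrsim\delta_t$ and the same one-step lemmas give the geometric recursion $\delta_{t+1}\le(1-c)\delta_t$, hence $\cO(\ln(1/\epsilon))$ iterations. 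As before, the only ingredient special to this manuscript is that the four shrunk step sizes feed this recursion with the right functional form, which \Cref{sec:sep:opt} has already established.
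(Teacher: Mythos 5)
Your proposal takes essentially the same route as the paper: recognize that each of the four (shrunk) step sizes yields a per-step decrease of the form $\delta_{t+1} \le \delta_t\bigl(1 - \|A^\top\nabla\cL(A\lambda_t)\|_\infty^2\,/\,(c\,\cL(A\lambda_t)\,\delta_t)\bigr)$, then hand off to the rate machinery of \citet[Section~6]{primal_dual_boosting_arxiv}. The paper's actual proof is terser --- it states this single canonical inequality, notes it is what Theorems 6.3, 6.6, 6.12 of that reference consume, observes $\bL\subset\bG$, and is done --- whereas you reconstruct part of what those cited theorems do (the convexity-plus-H\"older step giving $\gamma_t\gtrsim\delta_t/R$ for the $\cO(1/\epsilon)$ rate, and the strong-convexity/PL upgrade for the $\cO(\ln(1/\epsilon))$ rate), while still deferring the hard part --- controlling $R$ in the non-attaining case via the hard-core split and \citet[Theorem~5.9]{primal_dual_boosting_arxiv} --- to the reference, exactly as the paper does. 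One small stylistic divergence: for $\alada{t}$ the paper routes through the identity $\alada{t}=\alopt{t}$ (which is why the theorem statement asks for binary $A$), so that \Cref{fact:sep:opt:quadub} covers the case; you instead invoke \Cref{fact:sep:opt:ada} directly, which with $\ell=\exp$ gives a per-step bound of the same functional form and works just as well. Neither path closes more of the gap than the other; both are correct modulo the imported bookkeeping.
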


Lastly come the margin guarantees.  As stated above, $H(A)^c$, considered alone, is separable;
note furthermore that the definition of hard core provides the existence of a weighting
$\hat\lambda$.
which has positive margins over $H(A)^c$, but abstains entirely over $H(A)$.
Consequently, an approximate minimizer to $\cL(A\cdot)$ can always add in a scaling
of $\hat\lambda$ and improve its empirical risk while simultaneously improving margins over
$H(A)^c$.  Consequently, it is natural to expect the methods here to achieve positive margins
over $H(A)^c$.  Note that the following result only shows that some positive margins are
attained, and neither assert some sense under which they are maximal, nor
does it provide rates.

\begin{theorem}
    \label{fact:general:margins}
    Let general boosting matrix $A$ be given with $1 \leq |H(A)| \leq m-1$ (i.e.,
    the problem is neither separable, nor is the minimizer attainable).
    Let shrinkage parameter $\nu\in (0,1]$ and any $\ell \in \bL_\infty$ be given.
    Suppose step sizes $\{\alpha_t\}_{t\geq 0}$ are consistent with
    $\alqub{t}$, $\alwol{t}$, $\alopt{t}$ with $\ell=\exp$ and binary $A$,
    or $\alada{t}$ with $\ell = \exp$ and
    binary $A$.,
    Then there exists $\hat\gamma>0$ so that every example off the hard core (i.e.,
    $i\in H(A)^c$) has margin at least $\hat\gamma$ for all large $t$.
\end{theorem}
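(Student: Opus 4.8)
The idea is to show that, asymptotically, the algorithm is running a \emph{separable} boosting problem on the easy examples $H(A)^c$, and then to invoke the machinery behind \Cref{fact:sep:margins:basic}. I begin with structural facts. Since the hard core weighting $\hat\lambda$ satisfies $\bfe_i^\top A\hat\lambda<0$ for every $i\in H(A)^c$, the submatrix $A_0$ of easy rows is separable, with some margin $\gamma_0>0$. Writing $A_+$ for the hard-core rows and using $A_+\hat\lambda=0$ together with $\ell(z)\to 0$ as $z\to-\infty$ (valid for any $\ell\in\bL$, since $\ell(z)\le C_\ell(0)\exp(z)$ for $z\le 0$), one checks that $\bar\cL_A$ equals the infimum of $\cL_+(\lambda):=m^{-1}\sum_{i\in H(A)}\ell(\bfe_i^\top A\lambda)$, which by \citet{primal_dual_boosting_arxiv} is attained at a unique point $\bar z_+$ of $\im(A_+)$, and which is positive (property 2 of the hard core forbids driving all its rows negative). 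By \Cref{fact:general:opt}, $\cL(A\lambda_t)\to\bar\cL_A$; since $\cL_+(\lambda_t)\le \cL(A\lambda_t)$ and $\cL_+(\lambda_t)\ge\bar\cL_A$, also $\cL_+(\lambda_t)\to\bar\cL_A$, so $A_+\lambda_t\to\bar z_+$, and the easy-row loss $\cL(A\lambda_t)-\cL_+(\lambda_t)$ tends to $0$; as its summands are nonnegative, $\bfe_i^\top A\lambda_t\to-\infty$ for each $i\in H(A)^c$, whence $\|\lambda_t\|_1\to\infty$.

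Next, write $\R^n=V\oplus W$ with $V:=\ker A_+$, and decompose $\lambda_t=u_t+r_t$ accordingly. Since $A_+|_W$ is injective and $A_+r_t=A_+\lambda_t\to\bar z_+$, the residual $r_t$ is bounded; hence $\|u_t\|_1\to\infty$, and for $i\in H(A)^c$ one has $-\bfe_i^\top A\lambda_t=-\bfe_i^\top Au_t+\cO(1)$ and $\|\lambda_t\|_1=\|u_t\|_1+\cO(1)$. It therefore suffices to show that the easy margins of the directions $u_t/\|u_t\|_1\in V$ are bounded below by a fixed positive constant for all large $t$; by compactness of the $\ell_1$-ball and finiteness of $H(A)^c$, this reduces to showing that every subsequential limit $\bar u$ of $u_t/\|u_t\|_1$ satisfies $-\bfe_i^\top A\bar u>0$ for all $i\in H(A)^c$ (the non-strict inequality being immediate from $\bfe_i^\top A\lambda_t\to-\infty$).

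To obtain strictness I would re-run the margin analysis of \Cref{sec:sep:margins}, restricted to the easy rows. The enabling fact is that the hard core decouples in the limit: writing $g_+:=(m^{-1}\ell'((\bar z_+)_i))_{i\in H(A)}$ for the limiting hard-core gradient, first-order optimality of $\bar z_+$ over $\im(A_+)$ gives $A_+^\top g_+=0$, so the hard-core contribution to the column scores $\nabla\cL(A\lambda_{t-1})^\top A\bfe_j$ vanishes uniformly in $j$; consequently, for large $t$ both the chosen column $j_t$ and its sign are near-optimal for the easy-row gradient alone. Combining this with separability of $A_0$ (every weighting supported on $H(A)^c$ has $A$-edge at least $\gamma_0$) and the step-size estimates of \Cref{sec:sep:opt} (each step size is comparable, up to bounded factors, to $\gamma_t$, the relevant $C_t$ remaining bounded since $\cL(A\lambda_t)\to\bar\cL_A>0$), one reproduces, now for the easy subsystem, the ``loss upper bound $\Rightarrow$ margin lower bound'' argument behind \Cref{fact:sep:margin:quadub,fact:sep:margin:wolfe,fact:sep:margins:ada:exploss,fact:sep:margins:ada:alopt}, yielding $\liminf_t\min_{i\in H(A)^c}(-\bfe_i^\top Au_t)/\|u_t\|_1>0$, with the bound proportional to $\gamma_0$. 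As in \Cref{sec:sep}, the case $\alqub{t}$ is handled first and the remaining step sizes reduced to it via \Cref{fact:singlestep:qub_opt_relation} and the per-step estimates of that section, specializing to $\ell=\exp$ and binary $A$ where \Cref{fact:general:margins} requires it.

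The main obstacle is precisely this quantitative step. Although $A_+^\top g_+=0$ forces the \emph{direction} of the updates to be governed by the easy rows, the hard-core gradient mass $\|\nabla\cL(A\lambda_{t-1})\|_1$ does \emph{not} vanish, the overall convergence is only the $\cO(1/t)$ rate of \Cref{fact:general:opt}, and the iterate may ``shuttle'' between improving the hard core and improving the easy rows. One must therefore show carefully that $\|u_t\|_1$ grows no faster, up to constants, than the easy-row progress $\max_{i\in H(A)^c}(-\bfe_i^\top A\lambda_t)$, so that their ratio stays bounded away from $0$; making the decoupling this precise is the crux of the argument.
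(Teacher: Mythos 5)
Your plan takes a genuinely different and ultimately incomplete route. You work in the \emph{primal} space, decomposing $\lambda_t = u_t + r_t$ with $u_t\in\ker A_+$, and try to argue directly about subsequential limits of $u_t/\|u_t\|_1$; you correctly observe that the hard-core gradient mass does not vanish (so $\gamma_t\to 0$), and you explicitly flag as the crux that ``$\|u_t\|_1$ grows no faster, up to constants, than the easy-row progress.'' That flagged step is exactly where the argument stalls, and no mechanism in your outline resolves it. Because both the hard-core contribution to $A^\top\nabla\cL$ (after it relaxes to $A_+^\top g_+ = 0$) and the easy-row gradient mass are vanishing, the \emph{ratio} controlling which column/sign is selected is indeterminate in your framing; the iterates could shuttle in directions that inflate $\|\lambda_t\|_1$ without proportionate easy-row progress. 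A compactness argument on directions in $\ker A_+$ gives you $-\bfe_i^\top A\bar u\ge 0$ for free, but strictness requires a quantitative control that your plan does not supply.

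The paper avoids the primal decomposition entirely and instead works with a \emph{dual-space} ratio: \Cref{fact:general:margins:hatgamma} shows that for all large $t$,
\[
\frac{\|A^\top\nabla\cL(A\lambda_t)\|_\infty}{\cL(A\lambda_t)-\bar\cL_A}\;\ge\;\hat\gamma>0,
\]
obtained from the generalized weak learning rate $\gamma(A,S)$ of \citet{primal_dual_boosting_arxiv}, strong convexity of $\cL+\iota_{\im(A_+)}$, and an $\ell^1$-projection of $\nabla\cL$ onto $\ker(A_+^\top)$ (note: $\ker A_+^\top$, not $\ker A_+$). This is precisely the missing ingredient: it replaces your $\gamma_t = \|A^\top\nabla\cL\|_\infty/\|\nabla\cL\|_1$ (which tends to $0$) by a different normalization whose denominator is the suboptimality gap, which the per-step decrease also controls. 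With that lower bound in hand, \Cref{fact:general:margins:quadub,fact:general:margins:wolfe} plug it into the one-step loss decrease to get a geometric rate in $\sum\gamma_i$, and then relate $\|\lambda_t\|_1$ to $\sum\gamma_i$ exactly as in \Cref{fact:sep:margin:quadub,fact:sep:margin:wolfe}; the $\alada{t}$/$\alopt{t}$ cases are handled by showing $\gamma_t\to 0$ (\Cref{fact:general:gammat_shrinks}), so that by \Cref{fact:quadub_vs_ada} the AdaBoost step is eventually sandwiched by the quadratic-upper-bound step and the $\alqub{t}$ analysis applies. Your observation $A_+^\top g_+=0$ is real and consistent with the paper's view (it underlies why the projection term appears), but by itself it does not deliver the ratio control that drives the actual proof; so as written the proposal has a gap at the crucial step it itself identifies.
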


To close, consider once again the comparison to explicit margin maximizing boosting methods
as presented by \citet{shai_singer_weaklearn_linsep}.
There is no point in discussing the specific method discussed in \Cref{sec:sep:discussion},
whose optimal objective value is exactly $\gamma$, which in this case is zero, and the method
may happily quit without iterating.
Indeed, a primary contribution of \citet{shai_singer_weaklearn_linsep} is not only to address this
issue, but show how the same general boosting scheme can be instantiated for the aforementioned
method, as well as methods with tolerance to nonseparability.

Indeed, consider the ``soft-margin'' boosting method \citep{shai_singer_weaklearn_linsep},
originally due to
\citet{warmuth_soft_margin_boosting}, which, roughly speaking,
has a parameter controlling how many examples to give up on.
This is in contrast to the methods here, which not only have a fixed data-dependant
structure they try less hard on (the hard core $H(A)$), but moreover the particular margins
achieved over the hard core are determined by the loss function $\ell\in\bL$.
It is of course worth mentioning that the margin analysis in the nonseparable case here
is by comparison very incomplete, providing no rates and not even identifying exactly
what positive margins are attained.

%

\section{Discussion}
\label{sec:discussion}

This manuscript immediately raises a number of questions.  Perhaps foremost is the
general question of the impact of margins on the efficacy of boosting.  Although margins
certainly provide an intuitive theory, it is still unclear how much they directly
correlate with good algorithms~\citep{reyzin_schapire_margins}.

Next, the bounds for the logistic loss are not tight.  As there do not appear to
be any more forgiving analyses of the logistic loss, the natural question is whether
there are new techniques which provide a better characterization.

Lastly, \Cref{fig:ada_vs_ada} shows a threshold effect: shrinkage $1$ does not
lead to the right margin, but $1/2$ and smaller suffices to reach the maximum margin.  (Indeed,
experimentation reveals the threshold to be roughly 0.92.)
It should be possible to clarify this behavior from the perspective of dynamical
systems: smaller steps dodge bad attractors
\citep{rudin_adaboost_bad_margin,rudin_smooth_margin}.

\subsection*{Acknowledgements}
The author thanks Daniel Hsu and the ICML reviewers for helpful comments and
discussions.  The author is also deeply indebted to Robert Schapire for
numerous discussions, insight, and for suggesting study of the unconstrained
step size (at the time, guarantees were only in place for the other choices!).
This work was graciously supported by the NSF under grant IIS-0713540.

\bibliography{ab}
\bibliographystyle{icml2013}


\clearpage
\appendix
\section{Deferred Material from \Cref{sec:setup}}
\begin{proof}[Proof of \Cref{fact:exp_log_bL}]
    There is nothing to show for $\exp$, so consider $\ell(x) = \ln(1+\exp(x))$,
    let $z\in \R$ be given, and let $x\leq z$ be arbitrary.

    Concavity grants $\ln(1+\exp(x))\leq \exp(x)$.
    The lower bound can be checked in two stages.  First, if
    $x\leq \min\{-1,z\}$, a Taylor expansion gives
    \begin{align*}
        \ln(1+e^x)
        &\geq e^x - \sup_{\xi\in\R} \frac {1}{2(1+\xi)^2} e^{2x}
        \\
    &\geq e^x\left(1 - \frac{\min\{e^z,e^{-1}\}}{2}\right).
    \end{align*}
    On the other hand, if $-1 \leq x \leq z$, then
    $e^x \leq e^z \ln(1+e^{-1}) / \ln(1+e^{-1})
    \leq e^z \ln(1+e^{x}) / \ln(1+e^{-1})$.

    Next, $\ell'(x) = e^x / (1+e^x)$, so $\ell'(x) \leq e^x \leq \ell'(x)(1+e^z)$.
    Similarly, $\ell''(x) = e^x / (1+e^x)^2$, so
    $\ell''(x) \leq e^x \leq \ell''(x)(1+e^z)^2$.
\end{proof}

The following \namecref{fact:qub:singlestep} (and its proof)
derive $\alqub{t}$, establishes $\alqub{t} \leq \alopt{t}$,
and gives the basic improvement due to one step satisfying $\alpha \in [\alqub{t},\alopt{t}]$.

\begin{lemma}
    \label{fact:qub:singlestep}
    Let boosting matrix $A$, shrinkage parameter $\nu\in(0,1]$, and any $\ell\in\bL$ be
    given.  For any iteration $t$, it holds that $\alqub{t+1} \leq \alopt{t+1}$.
    Furthermore, any step $\alpha \in [\alqub{t+1},\alopt{t+1}]$ satisfies
    \[
        \cL(A(\lambda_t + \alpha v_{t+1}))
        \leq
        \cL(A\lambda_t)\exp\left(
            - \frac {\nu(2-\nu)\gamma_{t+1}^2}{2C_{t+1}^6}
        \right).
    \]
\end{lemma}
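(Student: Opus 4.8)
The plan is to build, around $\lambda_t$ and along the coordinate direction $v_{t+1}$, a one-dimensional quadratic majorant of the objective whose minimizer is (up to the factor $\nu$) precisely $\alqub{t+1}$, and then to read off both $\alqub{t+1}\le\alopt{t+1}$ and the claimed geometric decrease from the shape of that majorant. Write $g(\alpha):=\cL(A(\lambda_t+\alpha v_{t+1}))$, a convex, twice-differentiable univariate function, and abbreviate $C:=C_{t+1}$. By the choice of $v_{t+1}$ in \Cref{alg:alg:alg}, $g'(0)=\nabla\cL(A\lambda_t)^\top A v_{t+1}=-\|A^\top\nabla\cL(A\lambda_t)\|_\infty=-\gamma_{t+1}\|\nabla\cL(A\lambda_t)\|_1$, using the definition of $\gamma_{t+1}$. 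If $\gamma_{t+1}=0$ then $g'(0)=0$, so $\aloptT{t+1}{1}=0=\alqub{t+1}=\alopt{t+1}$ and the claim is trivial; assume $\gamma_{t+1}>0$ henceforth.

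First I would record the two relative-curvature estimates that drive everything. For any $\lambda$ of the form $\lambda_t+\alpha v_{t+1}$ with $\cL(A\lambda)\le\cL(A\lambda_t)$, every coordinate satisfies $\bfe_i^\top A\lambda\le\ell^{-1}(m\cL(A\lambda_t))$ — exactly the observation preceding the definition of $C_t$ — so the defining inequalities of $C=C_\ell(\ell^{-1}(m\cL(A\lambda_t)))$ apply to $\ell,\ell',\ell''$ at each such coordinate. Together with $|A_{ij}|\le 1$ (hence $(Av_{t+1})_i^2\le 1$) this gives $\ell''(\bfe_i^\top A\lambda)\le C\exp(\bfe_i^\top A\lambda)\le C^2\ell(\bfe_i^\top A\lambda)$ and therefore $g''(\alpha)\le C^2\cL(A\lambda)\le C^2\cL(A\lambda_t)$ whenever $g(\alpha)\le g(0)$. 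Similarly, at $\alpha=0$ one has $\ell'(\bfe_i^\top A\lambda_t)\ge C^{-1}\exp(\bfe_i^\top A\lambda_t)\ge C^{-2}\ell(\bfe_i^\top A\lambda_t)$, so $\|\nabla\cL(A\lambda_t)\|_1\ge C^{-2}\cL(A\lambda_t)$ and hence $g'(0)\le-\gamma_{t+1}C^{-2}\cL(A\lambda_t)$.

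Next I would dispose of the mild circularity — the curvature bound is only known to hold where $g$ has not increased — by using convexity. Since $g$ is convex with $g'(0)<0$, it is nonincreasing on $[0,\aloptT{t+1}{1}]$ (which exists, being automatically finite here given $\gamma_{t+1}>0$ and $\ell\in\bL$), so every point of that interval is automatically feasible in the sense above and $g''\le C^2\cL(A\lambda_t)$ holds throughout it; integrating, $g'(\alpha)\le g'(0)+\alpha C^2\cL(A\lambda_t)$ there. Put $\alpha^\dagger:=-g'(0)/(C^2\cL(A\lambda_t))$. If $\alpha^\dagger>\aloptT{t+1}{1}$ we would get $g'(\aloptT{t+1}{1})<0$, contradicting optimality of $\aloptT{t+1}{1}$; hence $\alpha^\dagger\le\aloptT{t+1}{1}$. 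Moreover $-g'(0)\ge\gamma_{t+1}C^{-2}\cL(A\lambda_t)$ gives $\alpha^\dagger\ge\gamma_{t+1}/C^4$, so
\[
\alqub{t+1}=\frac{\nu\gamma_{t+1}}{C^4}\le\nu\alpha^\dagger\le\alpha^\dagger\le\aloptT{t+1}{1},\qquad
\alopt{t+1}=\nu\aloptT{t+1}{1}\ge\frac{\nu\gamma_{t+1}}{C^4}=\alqub{t+1},
\]
which is the first assertion (and, in passing, exhibits $\alqub{t+1}$ as $\nu$ times the minimizer of the quadratic majorant $\alpha\mapsto g(0)+\alpha g'(0)+\tfrac12\alpha^2C^2\cL(A\lambda_t)$ of $g$ furnished by these estimates, which is how the step size is derived).

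Finally, for $\alpha\in[\alqub{t+1},\alopt{t+1}]\subseteq[0,\aloptT{t+1}{1}]$, monotonicity of $g$ on that interval gives $g(\alpha)\le g(\alqub{t+1})$, and since $\alqub{t+1}\le\alpha^\dagger\le\aloptT{t+1}{1}$ the curvature bound is valid on $[0,\alqub{t+1}]$, so Taylor's theorem together with the gradient estimate yields
\[
g(\alqub{t+1})\le\cL(A\lambda_t)\left(1-\frac{\alqub{t+1}\gamma_{t+1}}{C^2}+\frac{\alqub{t+1}^2C^2}{2}\right).
\]
Substituting $\alqub{t+1}=\nu\gamma_{t+1}/C^4$ collapses the parenthesis to $1-\nu(2-\nu)\gamma_{t+1}^2/(2C^6)$, and $1-x\le e^{-x}$ finishes the proof with $C=C_{t+1}$. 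I expect the genuine obstacle to be the circularity handled in the third paragraph: one cannot a priori invoke the curvature bound to certify that the step does not increase the objective, so the crux is the convexity observation that $g$ is automatically nonincreasing — hence feasible — up to its own minimizer. A secondary point is that $\alopt{t+1}$ may overshoot $\alpha^\dagger$, so one cannot simply evaluate the quadratic decrease at $\alpha=\alopt{t+1}$; instead one reduces all $\alpha$ in the range to $\alpha=\alqub{t+1}$ via monotonicity of $g$.
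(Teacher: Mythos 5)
Your proof is correct and follows essentially the same approach as the paper: build a quadratic majorant of the restricted objective via the relative-curvature bounds encoded in $C_{t+1}$, identify $\alqubT{t+1}{1}$ as (a lower bound on) its minimizer, and read off both the ordering $\alqub{t+1}\le\alopt{t+1}$ and the geometric decrease by substitution. The paper carries the factor $\|\nabla\cL(A\lambda_t)\|_1$ through the quadratic and converts to $\cL(A\lambda_t)$ only at the end, whereas you convert at the outset and work with $\alpha^\dagger=-g'(0)/(C^2\cL(A\lambda_t))$; this is a cosmetic rearrangement. Worth noting: you make two points explicit that the paper leaves implicit — (i) the convexity argument that $g$ is automatically nonincreasing on $[0,\aloptT{t+1}{1}]$, which cleanly resolves the circularity of applying a curvature bound whose validity is restricted to the sublevel set, and (ii) the monotonicity of $g$ on $[\alqub{t+1},\alopt{t+1}]$, which is what actually extends the decrease bound from the single point $\alpha=\alqub{t+1}$ (where the paper substitutes) to the full stated interval, since the quadratic majorant itself may increase again past $\alqubT{t+1}{1}$. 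These are small but genuine sharpenings of the exposition.
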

\begin{proof}
    This analysis follows a scheme laid out by
    \citet[Appendix D.3]{primal_dual_boosting_arxiv}.
    Let $t$ denote any fixed iteration,
    and $I$ denote the (possibly unbounded) interval
    \[
        I := \left\{
            \alpha \geq 0
            :
            \cL(A(\lambda_t + \alpha v_{t+1}))
            \leq \cL(A\lambda_t)
        \right\};
    \]
    by continuity of $\cL$ and choice of $v_{t+1}$, $I$ is nonempty, with nonempty
    interior.
    By second order Taylor expansion,
    every $\alpha\in I$ satisfies
    \begin{align*}
        &\cL(A(\lambda_t + \alpha v_{t+1}))
        \\
        &\leq
        \cL(A\lambda_t) + \alpha v_{t+1}^\top A^\top \nabla\cL(A\lambda_t)
        \\&
        \quad+ \sup_{r \in I}
        \frac {\alpha^2}{2} v_{t+1}^\top A^\top \nabla ^2 \cL(A(\lambda_t +r v_{t+1})) A v_{t+1}
        \\
        &\leq
        \cL(A\lambda_t) - \alpha \|A^\top \nabla\cL(A\lambda_t)\|_\infty
        \\&
        \quad
        +
        \frac {\alpha^2}{2}
        \sup_{r \in I}
        \frac 1 m
        \sum_{i=1}^m \ell''(\bfe_i^\top A(\lambda_t + r v_{t+1})) A_{ij_{t+1}}^2
        \\
        &\leq
        \cL(A\lambda_t) - \alpha \|A^\top \nabla\cL(A\lambda_t)\|_\infty
        \\&
        \quad
        +
        \frac {C_{t+1}^2\alpha^2}{2}
        \sup_{r \in I}
        \frac 1 m
        \sum_{i=1}^m \ell(\bfe_i^\top A(\lambda_t + r v_{t+1})) A_{ij_{t+1}}^2
        \\
        &=
        \cL(A\lambda_t) - \alpha \|A^\top \nabla\cL(A\lambda_t)\|_\infty
        +
        \frac {C_{t+1}^2\alpha^2}{2}
        \cL(A\lambda_t)
        \\
        &\leq
        \cL(A\lambda_t) - \alpha \|A^\top \nabla\cL(A\lambda_t)\|_\infty
        +
        \frac {C_{t+1}^4\alpha^2}{2}
        \|\nabla\cL(A\lambda_t)\|_1,
    \end{align*}
    which made use of $\ell'' \leq C_{t+1}\exp \leq C_{t+1}^2\ell$ along $I$,
    $\ell \leq C_{t+1}\exp \leq C_{t+1}^2\ell'$ along $I$,
    $|A_{ij}|\leq 1$ (since
    elements of $\cH$ are bounded in this way),
    and the definition of $I$ (specifically $r=0$ is the worst choice for $r\in I$).
    This final expression is a quadratic, whose
    minimizer must lie within $I$ (since its second derivative exceeds that of
    $\cL$ along this interval).  Differentiating and setting to zero, the
    minimizer is
    \[
        I
        \ni
        \frac {\|A^\top \nabla\cL(A\lambda_t)\|_\infty}
        {C_{t+1}^4\|\nabla \cL(A\lambda_t)\|_1}
        = \frac {\gamma_{t+1}}{C_{t+1}^4}
        = \alqubT{t+1}{1}.
    \]
    This provides a derivation of the step $\alqub{t+1}$, and also shows
    $\alqubT{t+1}{1} \leq \aloptT{t+1}{1}$.
    Plugging $\alqub{t+1}$ in for $\alpha$ in the above quadratic upper bound,
    \begin{align*}
        \cL(A(\lambda_t + \alpha v_{t+1}))
        &\leq
        \cL(A\lambda_t)
        - \frac {\nu(2-\nu)\gamma_{t+1}^2\|\nabla\cL(A\lambda_t)\|_1}{2C_{t+1}^4}
        \\
        &\leq
        \cL(A\lambda_t)\left(
            1 - \frac {\nu(2-\nu)\gamma_{t+1}^2}{2C_{t+1}^6}
        \right)
        \\
        &\leq
        \cL(A\lambda_t)\exp\left(
            - \frac {\nu(2-\nu)\gamma_{t+1}^2}{2C_{t+1}^6}
        \right).
    \end{align*}
\end{proof}

\begin{proof}[Proof of \Cref{fact:singlestep:qub_opt_relation}]
    This is the first part of \Cref{fact:qub:singlestep}.
\end{proof}

\section{Deferred Material from \Cref{sec:sep}}
\subsection{Deferred Material from \Cref{sec:sep:opt}}
\begin{proof}[Proof of \Cref{fact:sep:opt:quadub}]
    By \Cref{fact:qub:singlestep}, for any $t$,
    \begin{align*}
        \cL(A(\lambda_t + \alpha v_{t+1}))
        &\leq
        \cL(A\lambda_t)\exp\left(
            - \frac {\nu(2-\nu)\gamma_{t+1}^2}{2C_{t+1}^6}
        \right).
    \end{align*}
    Now let $t_0 \leq t$ be given as in the desired statement,
    apply this bound $t-t_0$ times,
    and use the fact that $C_{t+1} \leq C_t$.
\end{proof}

\begin{proof}[Proof of \Cref{fact:sep:opt:wolfe}]
    Let $t$ denote any fixed iteration.
    Substituting $c_1 = 1 - \nu/2$, $c_2 = 1-\nu/4$, and $\eta = C_{t+1}^2$
    in a nearly identical
    guarantee for the Wolfe line search \citep[Proposition D.6]{primal_dual_boosting_arxiv}
    (where $\eta$ is simply the biggest ratio between $\ell$ and $\ell''$ in the current
    sublevel set)
    provides
    \begin{align*}
        &\cL(A(\lambda_t + \alpha v_{t+1}))
        \\
        &\leq
        \cL(A\lambda_t)
        - \frac {(1-\nu/2)(\nu/4)\|A^\top \nabla \cL(A\lambda_t)\|_\infty^2}{2C_{t+1}^2\cL(A\lambda_t)}
        \\
        &\leq
        \cL(A\lambda_t)\left(
            1 - \frac {\nu(2-\nu)\gamma_{t+1}^2}{8C_{t+1}^6}
        \right)
        \\
        &\leq
        \cL(A\lambda_t)\exp\left(
            - \frac {\nu(2-\nu)\gamma_{t+1}^2}{8C_{t+1}^6}
        \right).
    \end{align*}
    Given $t_0\leq t$, applying this bound $t-t_0$ times and using $C_{t+1} \leq C_t$
    gives the result.
\end{proof}

Next, instead of directly proving \Cref{fact:sep:opt:ada},
a more general \namecref{fact:sep:alada:opt} is given first, which
will be useful later.

\begin{lemma}
    \label{fact:sep:alada:opt}
    Consider the setting of \Cref{fact:sep:opt:basic},
    except now each step size $\alpha_i$ satisfies
    \[
        \alada{i}-\tau \leq \alpha_i \leq \alada{i}+\tau
    \]
    for some $\tau > 0$.  Then, given $t\geq t_0$,
    \begin{align*}
        &\cL(A\lambda_{t+1})
        \\&\leq
        \cL(A\lambda_{t_0})
        \\
        &\quad\cdot
        \prod_{i=t_0+1}^{t+1}
        \frac {e^\tau C_{i}^4}{2} (1-\gamma_i^2)^{\nu/2}((1+\gamma_i)^{1-\nu} + (1-\gamma_i)^{1-\nu})
        \\&\leq
        \cL(A\lambda_{t_0})
        \prod_{i=t_0+1}^{t+1}
        \frac {e^\tau C_{i}^4}{2} (1-\gamma_i^2)^{\nu/2}.
    \end{align*}
\end{lemma}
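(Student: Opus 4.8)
The plan is to establish a single-step inequality
\[
\cL(A\lambda_i) \le \cL(A\lambda_{i-1}) \cdot \frac{e^\tau C_i^4}{2}(1-\gamma_i^2)^{\nu/2}\bigl((1+\gamma_i)^{1-\nu}+(1-\gamma_i)^{1-\nu}\bigr)
\]
for each $i$, then multiply these over $i = t_0+1,\dots,t+1$; the second, cruder bound will follow from the first by noting that the bracket never exceeds $2$, so the leading $\tfrac12$ can be absorbed.

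I would first treat the exponential loss, which is the clean case. Fixing an iteration and writing $u := A\lambda_{i-1}$, $a := Av_i \in [-1,1]^m$, and $\alpha := \alpha_i$, convexity of $\exp$ along the segment from $u_j-\alpha$ to $u_j+\alpha$ (which contains $u_j + \alpha a_j$ since $a_j \in [-1,1]$) gives $\exp(u_j+\alpha a_j) \le \tfrac{1+a_j}2 e^{u_j+\alpha} + \tfrac{1-a_j}2 e^{u_j-\alpha}$; averaging over $j$ and using, for $\ell=\exp$, that $\nabla\cL(A\lambda_{i-1}) = \cL(A\lambda_{i-1})\,w$ with $w$ the normalized weights, $\|\nabla\cL(A\lambda_{i-1})\|_1 = \cL(A\lambda_{i-1})$, and the direction choice $w^\top A v_i = -\gamma_i$, yields $\cL(A\lambda_i) \le \cL(A\lambda_{i-1})\,g_i(\alpha)$ where $g_i(\alpha) := \tfrac{1-\gamma_i}2 e^{\alpha} + \tfrac{1+\gamma_i}2 e^{-\alpha}$. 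This holds for \emph{every} $\alpha$, so no a priori guarantee that the step decreases the loss is needed here.

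Next I would control the $\tau$-window. Since $g_i$ is convex and $\alpha_i \in [\alada{i}-\tau,\alada{i}+\tau]$, we get $g_i(\alpha_i) \le \max\{g_i(\alada{i}-\tau), g_i(\alada{i}+\tau)\} \le e^\tau g_i(\alada{i})$, the last step because shifting the exponent of each of the two terms of $g_i$ by $\pm\tau$ multiplies it by at most $e^\tau$. A direct computation with $\alada{i} = \tfrac\nu2\ln\tfrac{1+\gamma_i}{1-\gamma_i}$ gives $g_i(\alada{i}) = \tfrac12(1-\gamma_i^2)^{\nu/2}\bigl((1+\gamma_i)^{1-\nu}+(1-\gamma_i)^{1-\nu}\bigr)$, which is exactly the claimed per-step factor with $C_i=1$. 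Multiplying over $i$ proves the lemma for $\ell=\exp$, and the second bound follows since concavity of $x \mapsto x^{1-\nu}$ on $[0,2]$ (using $1-\nu \in [0,1)$) gives $(1+\gamma_i)^{1-\nu}+(1-\gamma_i)^{1-\nu} \le 2$; specializing to $\tau=0$ and using $(1-\gamma_i^2)^{\nu/2} \le 1-\tfrac\nu2\gamma_i^2$ recovers \Cref{fact:sep:opt:ada}.

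The main obstacle is extending this to a general $\ell \in \bL$. The Jensen step applies verbatim to $\ell$ itself, giving $\cL(A\lambda_i) \le \tfrac1{2m}\sum_j\bigl[(1+a_j)\ell(u_j+\alpha)+(1-a_j)\ell(u_j-\alpha)\bigr]$, but to reach the exponential computation above one must replace $\ell$ by $e^{(\cdot)}$, and here two things go wrong: the perturbed argument $u_j+\alpha$ can leave the current sublevel set $\{x \le \ell^{-1}(m\cL(A\lambda_{i-1}))\}$ (on which the relative-curvature comparison $C_i^{-1}e^{(\cdot)} \le \ell^{(k)} \le C_i e^{(\cdot)}$, $k\in\{0,1,2\}$, holds), and the algorithm's $\gamma_i$ is built from $\ell'$ rather than $\exp$. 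Threading the relative-curvature constants through this passage — one factor of $C_i$ to swap $\ell$ for $e^{(\cdot)}$ at $\lambda_{i-1}$, one to translate $\gamma_i$ and $\|\nabla\cL(A\lambda_{i-1})\|_1$ into their exp-loss counterparts, and the remaining slack to cover the arguments that the step pushes past the threshold — is precisely what inflates the constant to $C_i^4$, following the bookkeeping of \citet{primal_dual_boosting_arxiv}. Once the single-step bound is secured the lemma follows by multiplying it over $i=t_0+1,\dots,t+1$ and, for the cruder bound, again invoking $(1+\gamma_i)^{1-\nu}+(1-\gamma_i)^{1-\nu}\le 2$.
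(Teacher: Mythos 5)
Your treatment of the exponential loss is correct and is essentially the paper's own argument (your $\tau$-window step invokes convexity of $g_i$ where the paper more directly uses $e^{\pm\alpha_i}\le e^\tau e^{\pm\alada{i}}$; both work), and your observation that this special case needs no a priori descent guarantee is a nice point.

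The gap is in the passage to general $\ell\in\bL$. Your route — Jensen applied to $\ell$ itself, giving $\tfrac1{2m}\sum_j\bigl[(1+a_j)\ell(u_j+\alpha)+(1-a_j)\ell(u_j-\alpha)\bigr]$, followed by a swap from $\ell$ to $\exp$ — requires evaluating $\ell$ at $u_j+\alpha$, and you correctly flag that this point can lie outside the sublevel set on which $C_i$ controls the ratios. But that is not a problem "the remaining slack" can absorb: $C_\ell(z)$ only bounds $\exp/\ell^{(k)}$ on $(-\infty,z]$, and past $z$ the ratio need not be bounded by \emph{any} fixed power of $C_i$ (for the logistic loss $\exp(x)/\ln(1+e^x)\to\infty$). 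The paper avoids this by a different order of operations that never evaluates $\ell$ at the auxiliary points $u_j\pm\alpha$: first bound $\ell(\bfe_j^\top A\lambda_{t+1})\le C_{t+1}\exp(\bfe_j^\top A\lambda_{t+1})$ (one factor, applied at the actual iterate $\lambda_{t+1}$, which stays in the sublevel set), factor $\exp(\bfe_j^\top A\lambda_{t+1})=\exp(u_j)\exp(\alpha a_j)$ and replace $\exp(u_j)\le C_{t+1}\ell'(u_j)=C_{t+1}w_j$ (a second factor), apply Jensen only to $\exp(\alpha a_j)$ — so $\sum_j w_j a_j=-\gamma_{t+1}W$ appears exactly, with no further conversion cost — and finally bound $W=\sum_j w_j\le mC_{t+1}^2\cL(A\lambda_t)$ via $\ell'\le C_{t+1}\exp\le C_{t+1}^2\ell$ at $\lambda_t$ (the last two factors). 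That is the accounting that produces $C_i^4$; the three-way split you propose (one factor to swap $\ell\to\exp$ at $\lambda_{i-1}$, one to translate $\gamma_i$, and "slack" for the threshold overshoot) does not correspond to a sequence of valid inequalities, and the overshoot term in particular has no finite repair.
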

\begin{proof}
    Fix an iteration $t$, and set
    $w_i = \ell'(\bfe_i^\top A\lambda_t)$ and $W = \sum_i w_i \leq mC_{t+1}^2 \cL(A\lambda_t)$.
    By convexity of $\exp(\cdot)$,
    \begin{align*}
        &\cL(A\lambda_{t+1})
        \\
        &\leq
        \frac {C_{t+1}}{m} \sum_{i=1}^m \exp(\bfe_i^\top A\lambda_{t+1})
        \\
        &\leq
        \frac{C_{t+1}^2}{m}
        \left(
            \frac W W
        \right)
        \sum_{i=1}^m w_i \exp\Bigg(
            \frac{1+\bfe_i^\top Av_{t+1}}{2}\alpha_{t+1}
            \\
            &\qquad\qquad
            +\frac{1-\bfe_i^\top Av_{t+1}}{2}(-\alpha_{t+1})
            \Bigg)
        \\
        &\leq
        \frac {C_{t+1}^2}{m}
        W
        \left(
            \frac{1-\gamma_{t+1}}2 \exp(\alpha_{t+1})
            + \frac{1+\gamma_{t+1}}2 \exp(-\alpha_{t+1})
        \right)
        \\
        &\leq
        \frac{e^\tau C_{t+1}^4}{2}
        \cL(A\lambda_t)
        (1-\gamma_{t+1}^2)^{\nu/2}
        \\
        &\quad\cdot
        \left(
            (1-\gamma_{t+1})^{1-\nu}
            + (1+\gamma_{t+1})^{1-\nu}
        \right).
    \end{align*}
    To simplify this expression, note that
    $(\cdot)^{1-\nu}$ is a concave function, and thus
    \begin{align*}
        &\frac {(1-\gamma_{t+1})^{1-\nu}} 2 + \frac {(1+\gamma_{t+1})^{1-\nu}} 2
        \\
        &\leq \left(
            \frac {1-\gamma_{t+1}} 2 + \frac {1+\gamma_{t+1}} 2
        \right)^{1-\nu}
        \\
        &= 1.
    \end{align*}
    To finish, given $t \geq t_0$, the result follows by $t-t_0$ applications
    of these bounds. 
\end{proof}

\begin{proof}[Proof of \Cref{fact:sep:opt:ada}]
    This follows by taking the second bound in
    \Cref{fact:sep:alada:opt} with the choice $\tau = 0$.
\end{proof}

\begin{proof}[Proof of \Cref{fact:sep:opt:basic}]
    The result follows from
    \Cref{fact:sep:opt:quadub},
    \Cref{fact:sep:opt:wolfe},
    and
    \Cref{fact:sep:opt:ada}
    with the choice $t_0 = 0$
    and using $C_{t+1} \leq C_t$.
\end{proof}

\subsection{Deferred Material from \Cref{sec:sep:margins}}

\begin{proof}[Proof of \Cref{fact:sep:margin:quadub}]
    To start, note that
    \begin{align*}
        \|\lambda_{t+1}\|_1
        &= \left\|\nu\sum_{i=1}^{t+1} v_i\gamma_iC_{i}^{-4}\right\|_1
        \leq \nu C_1^{-4}\sum_{i=1}^{t+1} \gamma_i
        \leq \nu\sum_{i=1}^{t+1} \gamma_i.
    \end{align*}
    By the form of $\cL$ and the optimization guarantee in
    \Cref{fact:sep:opt:quadub},
    \begin{align*}
        &\max_{k\in[m]} \exp(\bfe_k^\top A\lambda_{t+1})
        \\
        &\leq
        \sum_{i=1}^m \exp(\bfe_i^\top A\lambda_{t+1})
        \\
        &\leq mC_{t+1} \cL(A\lambda_{t+1})
        \\
        &\leq mC_{t_0+1} \cL(A\lambda_{t_0})\exp\left(
            - \frac {\nu(2-\nu)}{2C_{t_0+1}^6} \sum_{i=t_0+1}^{t+1}\gamma_i^2
        \right)
        \\
        &= mC_{t_0+1} \cL(A\lambda_{t_0})
        \exp\left(
            \frac {\nu(2-\nu)}{2C_{t_0+1}^6} \sum_{i=1}^{t_0}\gamma_i^2
        \right)
        \\
        &\quad\cdot
        \exp\left(
            - \frac {\nu(2-\nu)}{2C_{t_0+1}^6} \sum_{i=1}^{t+1}\gamma_i^2
        \right)
        \\
        &\leq
        c_0
        \exp\left(
            - \frac {(2-\nu)}{2C_{t_0+1}^6} \sum_{i=1}^t\nu\gamma_i^2
        \right),
    \end{align*}
    where $c_0$ is as in the statement.  Since $\ln(\cdot)$ is increasing,
    it follows that
    \[
        \max_{k\in[m]} \bfe_k^\top A\lambda_{t+1}
        \leq
        - \frac {(2-\nu)}{2C_{t_0+1}^6} \sum_{i=1}^t\nu\gamma_i^2
        +\ln(c_0).
    \]
    Using the above
    bound on $\|\lambda_t\|_1$, since $t\gamma \leq \sum_{i=1}^{t+1} \gamma_t$,
    and $-\bfe_k^\top A\lambda_{t+1}$ is nonnegative by the lower bound on $t$,
    \begin{align*}
        \min_{k\in [m]} \frac {-\bfe_k^\top A\lambda_{t+1}}{\|\lambda_{t+1}\|_1}
        &\geq
        \min_{k\in [m]} \frac {-\bfe_k^\top A\lambda_{t+1}}{\nu\sum_{i=1}^{t+1}\gamma_i}
        \\
        &\geq
        \gamma \left(
            \frac{2-\nu}{2C_{t_0+1}^6}
        \right)
        - \frac {\ln(c_0)}{\nu\sum_{i=1}^{t+1} \gamma_i}
        \\
        &\geq
        \gamma \left(
            \frac{2-\nu}{2C_{t_0+1}^6}
        \right)
        - \frac {\ln(c_0)}{(t+1)\nu\gamma}.
        \qedhere
    \end{align*}
\end{proof}

\begin{proof}[Proof of \Cref{fact:sep:margin:wolfe}]
    Any step size $\alpha_{t+1}$ satisfying the Wolfe conditions
    will have lower bound
    \begin{align*}
        \alpha_{t+1}
        &\geq \frac {(1-(1-\nu/4))\|A^\top\cL(A\lambda_t)\|_\infty}{C_t^2\cL(A\lambda_t)}
        \\
        &\geq \frac {(1-(1-\nu/4))\|A^\top\cL(A\lambda_t)\|_\infty}{C_t^4\|\nabla\cL(A\lambda_t)\|_1}
        \\
        &= \frac {\nu\gamma_{t+1}}{4C_{t+1}^4}
        \geq \frac {\nu\gamma}{4C_{1}^4};
    \end{align*}
    indeed this expression appears in proofs demonstrating the improvement due to a single
    step of the Wolfe search, see for instance \citet[Proof of Proposition D.6, second
    to last line]{primal_dual_boosting_arxiv}.

    Additionally, note
    \[
        \|\lambda_{t+1}\|_1
        = \left\|\sum_{i=1}^{t+1} \alpha_iv_i\right\|_1
        \leq \sum_{i=1}^{t+1}\alpha_i.
    \]

    Direct from the first Wolfe condition (\cref{eq:wolfe:1}),
    \begin{align*}
        &\cL(A\lambda_{t+1})
        \\
        &=
        \cL(A(\lambda_t + \alpha_{t+1} v_{t+1}))
        \\
        &\leq \cL(A\lambda_t) - \alpha_{t+1} (1-\nu/2) \|A^\top \nabla \cL(A\lambda_t)\|_\infty
        \\
        &\leq \cL(A\lambda_t)
        \left(1 -
        \frac{\alpha_{t+1} (1-\nu/2) \|A^\top \nabla \cL(A\lambda_t)\|_\infty}{\cL(A\lambda_t)}\right)
        \\
        &\leq \cL(A\lambda_t)
        \left(1 -
        \frac{\alpha_{t+1} (2-\nu)\gamma_{t+1}}{2C_{t+1}^2}\right).
    \end{align*}
    Now let $t\geq t_0$ be given as in the statement.
    Applying the above inequality $t-t_0$ times,
    \begin{align*}
        &\max_{k\in[m]} \exp(\bfe_{k}^\top A\lambda_{t+1})
        \\
        &\leq m C_{t+1} \cL(A\lambda_{t+1})
        \\
        &\leq m C_{t_0+1} \cL(A\lambda_{t_0})
        \exp\left(
            -\frac{(2-\nu)\gamma}{2C_{t_0+1}^2}\sum_{i=t_0+1}^{t+1} \alpha_i
        \right)
        \\
        &\leq m C_{t_0+1} \cL(A\lambda_{t_0})\exp\left(
        \frac{(2-\nu)\gamma}{2C_{t_0+1}^2}\sum_{i=1}^{t_0}\alpha_i
        \right)
        \\
        &\quad\cdot\exp\left(
        -\frac{(2-\nu)\gamma}{2C_{t_0+1}^2}\sum_{i=1}^{t+1} \alpha_i
        \right)
        \\
        &\leq c_0
        \exp\left(
            -\frac{(2-\nu)\gamma}{2C_{t_0+1}^2}\sum_{i=1}^{t+1} \alpha_i
        \right),
    \end{align*}
    where $c_0$ is as in the statement.  Since $\ln(\cdot)$ is increasing,
    it follows
    that
    \[
        \max_{k\in[m]} \bfe_{k}^\top A\lambda_{t+1}
        \leq
            -\frac{(2-\nu)\gamma}{2C_{t_0+1}^2}\sum_{i=1}^{t+1} \alpha_i
            + \ln(c_0).
    \]
    Using the above lower bound on $\alpha_i$ in
    terms of $\gamma_i$, and since all margins are nonnegative by the lower bound on $t$,
    \begin{align*}
        \min_{k\in [m]} \frac {-\bfe_k^\top A\lambda_{t+1}}{\|\lambda_{t+1}\|_1}
        &\geq
        \min_{k\in [m]} \frac {-\bfe_k^\top A\lambda_{t+1}}{\sum_{i=1}^{t+1}\alpha_i}
        \\
        &\geq
        \gamma \left(
            \frac{2-\nu}{2C_{t_0+1}^2}
        \right)
        - \frac {\ln(c_0)}{\sum_{i=1}^{t+1} \alpha_i}
        \\
        &\geq
        \gamma \left(
            \frac{2-\nu}{2C_{t_0+1}^2}
        \right)
        - \frac {4C_1^4\ln(c_0)}{(t+1)\nu\gamma}.
        \qedhere
    \end{align*}
\end{proof}

The remainder of this subsection provides proofs for $\alada{t}$ and $\alopt{t}$,
but uses some later material, most specifically the quantity $\Upsilon_\nu$.

\begin{lemma}
    \label{fact:sep:margins:ada:step1}
    Consider the setting of \Cref{fact:sep:opt:basic},
    except now each step size $\alpha_i$ satisfies
    \[
        \alada{i}-\tau \leq \alpha_i \leq \alada{i}+\tau
    \]
    for some $\tau > 0$.
    Let $\theta \in [0,\gamma)$ be given.
    Then, given $t\geq t_0$,
    \begin{align*}
        &\sum_{i=1}^m \1\left[\frac{-\bfe_i A\lambda_{t+1}}{\|\lambda_{t+1}\|_1} < \theta \right]
        \\
        &\leq mC_{t+1} \exp(\theta\|\lambda_{t_0}\|_1)\cL(A\lambda_{t_0})
        \prod_{i=t_0+1}^{t+1}\Bigg(e^{\theta\tau}
        \left(
            \frac {1+\gamma_i}{1-\gamma_i}
        \right)^{\theta\nu/2}
        \\
        &\quad\cdot\frac {e^\tau C_{i}^4}{2} (1-\gamma_i^2)^{\nu/2}((1+\gamma_i)^{1-\nu} + (1-\gamma_i)^{1-\nu})
        \Bigg).
        \\
        &\leq mC_{t+1} \exp(\theta\|\lambda_{t_0}\|_1)\cL(A\lambda_{t_0})
        \prod_{i=t_0+1}^{t+1}\Bigg(e^{\theta\tau}
        \left(
            \frac {1+\gamma_i}{1-\gamma_i}
        \right)^{\theta\nu/2}
        \\
        &\quad\cdot e^\tau C_{i}^4 (1-\gamma_i^2)^{\nu/2}
        \Bigg).
        \qedhere
    \end{align*}
\end{lemma}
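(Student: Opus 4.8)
The plan is to reduce the small-margin count to an exponential sum and then feed in the empirical-risk estimate of \Cref{fact:sep:alada:opt} together with an elementary bound on $\|\lambda_{t+1}\|_1$. First I would note that if example $i$ satisfies $-\bfe_i^\top A\lambda_{t+1} < \theta\|\lambda_{t+1}\|_1$, then $\bfe_i^\top A\lambda_{t+1} + \theta\|\lambda_{t+1}\|_1 > 0$, so the indicator of this event is at most $\exp(\bfe_i^\top A\lambda_{t+1} + \theta\|\lambda_{t+1}\|_1)$. Summing over $i\in[m]$, factoring out $\exp(\theta\|\lambda_{t+1}\|_1)$, and bounding $\exp\le C_{t+1}\ell$ on the current sublevel set exactly as in the proof of \Cref{fact:sep:margin:quadub},
\[
\sum_{i=1}^m \1\!\left[\frac{-\bfe_i^\top A\lambda_{t+1}}{\|\lambda_{t+1}\|_1} < \theta\right]
\;\le\; \exp(\theta\|\lambda_{t+1}\|_1)\sum_{i=1}^m \exp(\bfe_i^\top A\lambda_{t+1})
\;\le\; mC_{t+1}\exp(\theta\|\lambda_{t+1}\|_1)\,\cL(A\lambda_{t+1}).
\]
It then remains to control the two factors $\exp(\theta\|\lambda_{t+1}\|_1)$ and $\cL(A\lambda_{t+1})$ separately.

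For the norm, I would write $\lambda_{t+1} = \lambda_{t_0} + \sum_{i=t_0+1}^{t+1}\alpha_i v_i$ with $\|v_i\|_1=1$, so the triangle inequality and the step-size hypothesis $|\alpha_i|\le \alada{i}+\tau = \tau + \frac{\nu}{2}\ln\frac{1+\gamma_i}{1-\gamma_i}$ give, using $\theta\ge0$,
\[
\exp(\theta\|\lambda_{t+1}\|_1)
\;\le\; \exp(\theta\|\lambda_{t_0}\|_1)\prod_{i=t_0+1}^{t+1} e^{\theta\tau}\left(\frac{1+\gamma_i}{1-\gamma_i}\right)^{\theta\nu/2}.
\]
For the risk, the first inequality of \Cref{fact:sep:alada:opt} gives
\[
\cL(A\lambda_{t+1}) \;\le\; \cL(A\lambda_{t_0})\prod_{i=t_0+1}^{t+1}\frac{e^\tau C_i^4}{2}(1-\gamma_i^2)^{\nu/2}\bigl((1+\gamma_i)^{1-\nu}+(1-\gamma_i)^{1-\nu}\bigr).
\]
Multiplying the three displays produces the first claimed bound verbatim; the second claimed bound follows by substituting the concavity estimate $(1+\gamma_i)^{1-\nu}+(1-\gamma_i)^{1-\nu}\le 2$ that already appears in the proof of \Cref{fact:sep:alada:opt}.

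There is no real obstacle: the content is the standard exponential margin-counting inequality welded onto two results already in hand, so the proof is a few lines of bookkeeping. The two places that want a little care are (i) the leading $C_{t+1}$, which presupposes that each $\bfe_i^\top A\lambda_{t+1}$ lies in the sublevel set defining $C_{t+1}$ — this is immediate when $\ell=\exp$ (then $C_{t+1}=1$ and the first inequality above is an identity), which is the only regime in which this \namecref{fact:sep:margins:ada:step1} is ultimately invoked, and otherwise needs the perturbation $\tau$ kept small enough to preserve per-step descent — and (ii) keeping the product ranges $i=t_0+1,\dots,t+1$ consistent between the norm bound and \Cref{fact:sep:alada:opt}. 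One also uses implicitly that $\|\lambda_{t+1}\|_1>0$ so that the margin ratio is defined.
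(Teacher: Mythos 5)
Your proposal matches the paper's proof essentially line for line: rewrite the indicator as $\1[\theta\|\lambda_{t+1}\|_1 + \bfe_i^\top A\lambda_{t+1} > 0]$, dominate it by $\exp$, pass through $C_{t+1}$ to $\cL(A\lambda_{t+1})$, bound $\|\lambda_{t+1}\|_1$ by the triangle inequality and the step-size hypothesis, plug in the first inequality of \Cref{fact:sep:alada:opt}, and get the second display from concavity of $(\cdot)^{1-\nu}$. The remark you add about $C_{t+1}$ presupposing per-step descent is a fair observation the paper leaves implicit, but it does not change the argument.
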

\begin{proof}
    To start,
    \begin{align*}
        &\sum_{i=1}^m
        \1\left[\frac{-\bfe_i A\lambda_{t+1}}{\|\lambda_{t+1}\|_1} < \theta \right]
        \\
        &=
        \sum_{i=1}^m \1\left[\theta\|\lambda_{t+1}\|_1 + \bfe_i A\lambda_{t+1}   >0 \right]
        \\
        &\leq
        mC_{t+1} \exp(\theta\|\lambda_{t+1}\|_1) \cL(A\lambda_{t+1}).
    \end{align*}
    Next, note
    \begin{align*}
        \|\lambda_{t+1}\|_1
        &= \left\|\lambda_{t_0} + \sum_{i=t_0+1}^{t+1} \alpha_iv_i\right\|_1
        \\
        &\leq \|\lambda_{t_0}\|_1 + \sum_{i=t_0+1}^{t+1}(\tau + \alada{i}).
    \end{align*}
    Combining these facts with the convergence bound from
    \Cref{fact:sep:alada:opt},
    \begin{align*}
        &\sum_{i=1}^m
        \1\left[\frac{-\bfe_i A\lambda_{t+1}}{\|\lambda_{t+1}\|_1} < \theta \right]
        \\
        &\leq mC_{t+1} \exp(\theta\|\lambda_{t_0}\|_1)\cL(A\lambda_{t_0})
        \prod_{i=t_0+1}^{t+1}\Bigg(e^{\theta\tau}
        \left(
            \frac {1+\gamma_i}{1-\gamma_i}
        \right)^{\theta\nu/2}
        \\
        &\quad\cdot\frac {e^\tau C_{i}^4}{2} (1-\gamma_i^2)^{\nu/2}((1+\gamma_i)^{1-\nu} + (1-\gamma_i)^{1-\nu})
        \Bigg).
    \end{align*}
    As in the proof of
    \Cref{fact:sep:alada:opt}, $(\cdot)^{1-\nu}$ is concave, so the $1/2$ may be pushed inside
    this last term to give the vaguely simpler bound
    \begin{align*}
        &\sum_{i=1}^m \1\left[\frac{-\bfe_i A\lambda_{t+1}}{\|\lambda_{t+1}\|_1} < \theta \right]
        \\
        &\leq mC_{t+1} \exp(\theta\|\lambda_{t_0}\|_1)\cL(A\lambda_{t_0})
        \prod_{i=t_0+1}^{t+1}\Bigg(e^{\theta\tau}
        \left(
            \frac {1+\gamma_i}{1-\gamma_i}
        \right)^{\theta\nu/2}
        \\
        &\quad\cdot e^\tau C_{i}^4 (1-\gamma_i^2)^{\nu/2}
        \Bigg).
        \qedhere
    \end{align*}
%
%
\end{proof}

\begin{proof}[Proof of \Cref{fact:sep:margins:ada:exploss}]
    Set $\theta := \gamma - \epsilon$, whereby $\theta \in [0,\gamma)$.
    Invoking \Cref{fact:sep:margins:ada:step1} and
    simplifying terms via $t_0 =0$, $C_i = 1$, and $\tau = 0$, then for any $t$,
    \begin{align*}
        &\sum_{i=1}^m \1\left[\frac{-\bfe_i A\lambda_{t+1}}{\|\lambda_{t+1}\|_1} < \theta \right]
        \\
        &\leq
        m\prod_{i=1}^{t+1}\Bigg(
        \left(
            \frac {1+\gamma_i}{1-\gamma_i}
        \right)^{\theta\nu/2}
        \\
        &\quad\cdot\frac{1}{2} (1-\gamma_i^2)^{\nu/2}((1+\gamma_i)^{1-\nu} + (1-\gamma_i)^{1-\nu})
        \\
        &\leq
        m\prod_{i=1}^{t+1}\Bigg(
        \left(
            \frac {1+\gamma}{1-\gamma}
        \right)^{\theta\nu/2}
        \\
        &\quad\cdot\frac{1}{2} (1-\gamma^2)^{\nu/2}((1+\gamma)^{1-\nu} + (1-\gamma)^{1-\nu})
        \Bigg),
    \end{align*}
    where the replacement of $\gamma_i$ by $\gamma$ made use of the first part of
    \Cref{fact:upsilon_goal_in_life}.  Now,
    by the second part of \Cref{fact:upsilon_goal_in_life}, this inner term is less than 1
    iff $\theta< \Upsilon_\nu(\gamma)$.  By \Cref{fact:upsilon}, since $\theta < \gamma$,
    there exists a $\nu$
    sufficiently small that $\Upsilon_\nu(\gamma) > \theta$.
    Consequently, there exists a $T$ so that this product
    is less than $1/m$ whenever $t\geq T$, and the result follows.
\end{proof}

\begin{proof}[Proof of \Cref{fact:sep:margins:ada:alopt}]
    Set $\theta := \gamma - \epsilon$, whereby $\theta \in [0,\gamma)$.
    Since $\ell \in L_\infty$, choose $t_0$ large enough so that
    \[
        C_{t_0}^8 < \left(\frac {1+\gamma}{1-\gamma}\right) ^{\frac{(\gamma-\theta)\nu}4}.
    \]
    By \Cref{fact:alopt_almost_alada}, it follows that the optimal step size satisfies
    \[
        \alada{t} - \tau \leq \alopt{t} \leq \alada{t} + \tau
    \]
    with $\tau = \frac \nu 2 \ln(C_t^4)$.  Combining this with the bound on $C_{t_0}$ above,
    \begin{align*}
        e^{2\tau}C_{t_0}^4  = C_{t_0}^8 <
        \left(\frac {1+\gamma}{1-\gamma}\right) ^{\frac{(\gamma-\theta)\nu}4}.
    \end{align*}
    Plugging this into the general margin bound in \Cref{fact:sep:margins:ada:step1}
    and additionally
    replacing $\gamma_i$
    with $\gamma$ thanks to the first part of \Cref{fact:upsilon_goal_in_life},
    and finally setting $\theta' := \theta + (\gamma-\theta)/2 =  (\theta + \gamma)/2 < \gamma$,
    \begin{align*}
        &\sum_{i=1}^m \1\left[\frac{-\bfe_i A\lambda_{t+1}}{\|\lambda_{t+1}\|_1} < \theta \right]
        \\
        &\leq mC_{t+1} \exp(\theta\|\lambda_{t_0}\|_1)\cL(A\lambda_{t_0})
        \\
        &\quad\cdot\prod_{i=t_0+1}^{t+1}\Bigg(e^{\theta\tau}
        \left(
            \frac {1+\gamma_i}{1-\gamma_i}
        \right)^{\theta\nu/2}
        \\
        &\qquad\cdot\frac {e^\tau C_i^4}{2} (1-\gamma_i^2)^{\nu/2}((1+\gamma_i)^{1-\nu} + (1-\gamma_i)^{1-\nu})
        \Bigg)
        \\
        &\leq mC_{t+1} \exp(\theta\|\lambda_{t_0}\|_1)\cL(A\lambda_{t_0})
        \\
        &\quad\cdot
        \prod_{i=t_0+1}^{t+1}\Bigg(e^{2\tau}C_{t_0+1}^4
        \left(
            \frac {1+\gamma}{1-\gamma}
        \right)^{\theta\nu/2}
        \\
        &\qquad\cdot\frac {1}{2} (1-\gamma^2)^{\nu/2}((1+\gamma)^{1-\nu} + (1-\gamma)^{1-\nu})
        \Bigg)
        \\
        &\leq m C_{t+1} \exp(\theta\|\lambda_{t_0}\|_1)\cL(A\lambda_{t_0})
        \\
        &\quad\cdot
        \prod_{i=t_0+1}^{t+1}\Bigg(
        \left(
            \frac {1+\gamma}{1-\gamma}
        \right)^{\theta'\nu/2}
        \\
        &\qquad\cdot\frac {1}{2} (1-\gamma^2)^{\nu/2}((1+\gamma)^{1-\nu} + (1-\gamma)^{1-\nu})
        \Bigg).
    \end{align*}
    By the second part of \Cref{fact:upsilon_goal_in_life}, the term within the product is
    less than one, and thus for all large $t$, this entire bound is less than $1$, which
    gives the result.
\end{proof}

\begin{proof}[Proof of \Cref{fact:sep:margins:ada:vague_rate}]
    To start, note that, for any $t\geq 0$,
    \begin{align}
        (1-\gamma_t)^{1-\theta}(1+\gamma_t)^{1+\theta}
        &=
        (1-\gamma_t^2)^{1-\theta}(1+\gamma_t)^{2\theta}
        \notag\\
        &\leq
        \exp\left(
            -\gamma_t^2(1-\theta) + \gamma_t(2\theta)
        \right)
        \notag\\
        &=
        \exp\left(
            -\gamma_t^2 + \theta\gamma_t(2+\gamma_t)
        \right).
        \label{eq:fr_sc:margins:2}
    \end{align}
    Next, since
    \[
        \theta
        \leq \frac {\gamma}{1+\gamma}
        = \frac {1}{1+1/\gamma}
        \leq \frac {1}{1+1/\gamma_t}
        = \frac{\gamma_t}{1+\gamma_t},
    \]
    then $\theta \leq \gamma / (1+\gamma)$ implies
    \begin{align*}
        \frac {d}{d\gamma_t}\left(
            -\gamma_t^2 + \theta\gamma_t(2+\gamma_t)
        \right)
        &= -2\gamma_t + 2\theta(1 + \gamma_t)
        \\
        &\leq -2\gamma_t + 2\gamma_t
        \\
        &= 0.
    \end{align*}
    In particular, the expression $-\gamma_t^2 + \theta\gamma_t(2+\gamma_t)$ is decreasing
    in $\gamma$, and thus $\gamma_t\geq \gamma$ implies
    \[
            -\gamma_t^2 + \theta\gamma_t(2+\gamma_t)
            \leq -\gamma^2 + \theta\gamma(2+\gamma),
    \]
    and consequently, combined with the bound in \eqref{eq:fr_sc:margins:2},
    \[
        (1-\gamma_t)^{1-\theta}(1+\gamma_t)^{1+\theta}
        \leq \exp(-\gamma^2 + \theta\gamma(2+\gamma)).
    \]
    Plugging this into the simplified generic bound in \Cref{fact:sep:margins:ada:step1}
    with the specialization $\ell = \exp$, $\tau = 0$, $C_i = 1$, and $t_0 = 0$,
    it follows that
    \begin{align*}
        &\sum_{i=1}^m \1\left[\frac{-\bfe_i A\lambda_{t+1}}{\|\lambda_{t+1}\|_1} < \theta \right]
        \\
        &\leq m
        \prod_{i=1}^{t+1}\Bigg(
        \left(
            \frac {1+\gamma_i}{1-\gamma_i}
        \right)^{\theta\nu/2}
        (1-\gamma_i^2)^{\nu/2}
        \Bigg)
        \\
        &\leq
        m\exp\left(
            -\frac {\nu(t+1)}{2}
            (-\gamma^2 + \theta\gamma(2+\gamma))
        \right).
    \end{align*}
    The rest of the result follows by noting
    $\theta < \gamma / (2+\gamma)$ implies
    $-\gamma^2 + \theta\gamma(2+\gamma)<0$,
    whereby choices
    \[
        t > \frac {2\ln(m)}{\nu(\gamma^2 - \theta\gamma(2+\gamma))}
    \]
    exist, and plugging this all in to the above bound
    grants that $\cM(A\lambda_t) \geq \theta$.
\end{proof}

\begin{proof}[Proof of \Cref{fact:sep:margins:basic}]
    For $\alada{t}$ and $\alopt{t}$,
    \Cref{fact:sep:margins:ada:exploss}
    and
    \Cref{fact:sep:margins:ada:alopt}
    already state the results in the desired asymptotic form.

    For the other two, since $\ell\in\bL_\infty$, $t_0$ can be chosen sufficiently large so
    that $C_{t_0}$ is arbitrarily close to $1$,
    whereby the bounds in
    \Cref{fact:sep:margin:quadub}
    and
    \Cref{fact:sep:margin:wolfe}
    become sufficiently tight by taking $\nu$ small and $t\nu$ large.
\end{proof}

\subsubsection{The Quantity $\Upsilon_\nu$}

\begin{definition}
    Define
    \begin{align*}
        &\Upsilon_\nu(\gamma) :=
        \\
        &
        \frac
        {\frac 2 \nu \ln(2) - \frac 2 \nu \ln((1+\gamma)^{1-\nu} + (1-\gamma)^{1-\nu})
        - \ln(1-\gamma^2)}
        {\ln(1+\gamma) - \ln(1-\gamma)};
    \end{align*}
    in the case that $\nu=1$, this quantity has been extensively studied
    in the context of AdaBoost's margins
    \citep{warmuth_maxmargin_boosting,rudin_adaboost_bad_margin,schapire_freund_book_final}
\end{definition}

The basic properties of $\Upsilon_\nu$ are as follows.

\begin{theorem}
    \label{fact:upsilon}
    Suppose $\gamma \in (0,1)$.

    \begin{enumerate}
        \item $\gamma/2 \leq \Upsilon_\nu(\gamma) \leq \gamma$.
            \label{fact:upsilon:1}
        \item $\lim_{\nu\downarrow 0} \Upsilon_\nu(\gamma) = \gamma$.
            \label{fact:upsilon:3}
    \end{enumerate}
\end{theorem}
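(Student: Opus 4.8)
The plan is to reparametrize and reduce everything to convexity of a log-sum-exp plus one elementary scalar inequality. First I would write $p := (1+\gamma)/2$ and $q := (1-\gamma)/2$, so that $p,q\in(0,1)$ and $p+q=1$. Using $(1\pm\gamma)^{1-\nu} = 2^{1-\nu}p^{1-\nu}$ (resp. $2^{1-\nu}q^{1-\nu}$) and $1-\gamma^2 = 4pq$, a short computation collapses the numerator of $\Upsilon_\nu(\gamma)$ to $N_\nu := -\ln(pq) - \tfrac 2\nu \ln(p^{1-\nu}+q^{1-\nu})$, while the denominator is $D := \ln\tfrac{1+\gamma}{1-\gamma} = \ln(p/q) > 0$; thus $\Upsilon_\nu(\gamma) = N_\nu/D$. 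The workhorse is the function $\phi(t) := \ln(p^t + q^t) = \ln(e^{t\ln p} + e^{t\ln q})$, which is convex on $\R$ (log-sum-exp of affine functions) and satisfies $\phi(0) = \ln 2$, $\phi(1) = 0$, $\phi'(1) = p\ln p + q\ln q$, and $\phi'(0) = \tfrac12\ln(pq)$.

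For the upper bound $\Upsilon_\nu(\gamma)\le\gamma$, I would rearrange $N_\nu \le \gamma D$: since $\gamma\ln(p/q) + \ln(pq) = 2(p\ln p + q\ln q) = 2\phi'(1)$, the inequality becomes equivalent to $\phi(1-\nu) \ge -\nu\phi'(1) = \phi(1) + \phi'(1)\big((1-\nu)-1\big)$, which is exactly the tangent-line bound at $t=1$ for the convex function $\phi$. For the lower bound $\Upsilon_\nu(\gamma)\ge\gamma/2$, convexity gives instead $\phi(1-\nu) = \phi\big((1-\nu)\cdot 1 + \nu\cdot 0\big) \le (1-\nu)\phi(1) + \nu\phi(0) = \nu\ln 2$, hence $N_\nu \ge -\ln(pq) - 2\ln 2 = -\ln(1-\gamma^2)$ and $\Upsilon_\nu(\gamma) \ge -\ln(1-\gamma^2)/D$ (which is precisely $\Upsilon_1(\gamma)$, consistent with $\Upsilon_\nu$ being nonincreasing in $\nu$). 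It then remains to prove the $\nu$-free inequality $-\ln(1-\gamma^2) \ge \tfrac\gamma2\ln\tfrac{1+\gamma}{1-\gamma}$ on $(0,1)$, equivalently $g(\gamma) := (1+\tfrac\gamma2)\ln(1+\gamma) + (1-\tfrac\gamma2)\ln(1-\gamma) \le 0$; since $g(0)=0$ it suffices that $g'(\gamma) = \tfrac12\ln\tfrac{1+\gamma}{1-\gamma} - \tfrac{\gamma}{1-\gamma^2} \le 0$, which follows termwise from $\tfrac12\ln\tfrac{1+\gamma}{1-\gamma} = \sum_{k\ge0}\gamma^{2k+1}/(2k+1) \le \sum_{k\ge0}\gamma^{2k+1} = \tfrac{\gamma}{1-\gamma^2}$.

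For the limit, I would set $\psi(\nu) := \phi(1-\nu) = \ln(p^{1-\nu}+q^{1-\nu})$, which is differentiable with $\psi(0)=0$ and $\psi'(0) = -\phi'(1)$, so $\lim_{\nu\downarrow0}\psi(\nu)/\nu = \psi'(0)$, giving $\lim_{\nu\downarrow0}N_\nu = -\ln(pq) + 2(p\ln p + q\ln q) = (2p-1)\ln p + (2q-1)\ln q = \gamma\ln(p/q) = \gamma D$; dividing by $D$ yields $\lim_{\nu\downarrow0}\Upsilon_\nu(\gamma) = \gamma$.

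Most of this is bookkeeping around the substitution and the convexity of $\phi$. The one genuine computation — and the step I expect to need the most care — is the scalar inequality $g(\gamma)\le 0$, although it is elementary.
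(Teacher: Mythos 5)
Your proof is correct, and it takes a genuinely different route than the paper. The paper proves the three claims with three separate tools: concavity of $t\mapsto t^{1-\nu}$ to get $\Upsilon_\nu(\gamma)\ge\Upsilon_1(\gamma)$; direct manipulation of the power series of $\ln(1\pm\gamma)$ in both numerator and denominator, together with $1/(4n-2)\le 1/(2n)\le 1/(2n-1)$, to bracket $\Upsilon_1(\gamma)$ between $\gamma/2$ and $\gamma$; the power mean inequality to get $\Upsilon_\nu(\gamma)\le\gamma$ for general $\nu$; and l'H\^opital for the limit. Your reparametrization $p=(1+\gamma)/2$, $q=(1-\gamma)/2$ collapses all of this onto a single convex function $\phi(t)=\ln(p^t+q^t)$ (the log-partition of a two-point measure): the tangent-line bound at $t=1$ yields $\Upsilon_\nu(\gamma)\le\gamma$ directly (subsuming the paper's power-mean step), the chord from $t=0$ to $t=1$ yields $\Upsilon_\nu(\gamma)\ge\Upsilon_1(\gamma)$, and differentiability of $\phi$ at $t=1$ yields the limit, making the l'H\^opital computation transparent. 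The only residual work is the scalar inequality $-\ln(1-\gamma^2)\ge\tfrac\gamma2\ln\tfrac{1+\gamma}{1-\gamma}$, which you settle by a derivative-and-series argument, where the paper instead manipulates the series for $\Upsilon_1(\gamma)$ directly; these are morally the same calculation. Your presentation is more unified and also explains, essentially for free, that $\Upsilon_\nu(\gamma)$ is nonincreasing in $\nu$ (monotonicity of difference quotients of the convex $\phi$ at the base point $t=1$), a fact the paper's approach does not surface. Both proofs are complete and correct; yours buys a cleaner conceptual organization, while the paper's is closer to the classical computations in the AdaBoost margin literature it is citing.
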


The bounds $\gamma/2 \leq \Upsilon_1(\gamma) \leq \gamma$ were known in the case that
$\nu = 1$ (cf. \citet{warmuth_maxmargin_boosting}
and \citet[Bibliographic Notes, Chapter 5]{schapire_freund_book_final}).

\begin{proof}
    (\Cref{fact:upsilon:1}, subcase $\Upsilon_\nu(\gamma)\geq\gamma/2$.)
    To start, note that $(\cdot)^{1-\nu}$ is a concave function, whereby
    \begin{align*}
        &
        - \frac 2 \nu \ln\left(
            (1+\gamma)^{1-\nu} + (1-\gamma)^{1-\nu}
        \right)
        \\
        &=
        - \frac 2 \nu \ln\left(2\left(
            \frac 1 2 (1+\gamma)^{1-\nu} + \frac 1 2 (1-\gamma)^{1-\nu}
        \right)
        \right)
        \\
        &\geq
        - \frac 2 \nu \ln\left(2\left(1\right)^{1-\nu}\right)
        \\
        &=
        - \frac 2 \nu \ln\left(2\right).
    \end{align*}
    It follows that
    \begin{align*}
        \Upsilon_\nu(\gamma)
        \geq
        \frac
        { - \ln(1-\gamma^2)}
        {\ln(1+\gamma) - \ln(1-\gamma)}
        = \Upsilon_{1}(\gamma).
    \end{align*}

    Next recall the series expansion
    \[
        \ln(1+z) = \sum_{n=1}^\infty \frac {(-1)^{n+1}}{n} z^n
    \]
    (when $|z| < 1$).  Plugging this in to the simplified form
    of $\Upsilon_1(\gamma)$ and paying attention to cancellations in the numerator and
    denominator (odd and even terms, respectively),
    \begin{align*}
        \Upsilon_{1}(\gamma)
        &=
        \frac
        {- \sum_{n=1}^\infty \frac {(-1)^{n+1}}{n} (\gamma)^n
        - \sum_{n=1}^\infty \frac {(-1)^{n+1}}{n} (-\gamma)^n}
        {\sum_{n=1}^\infty \frac {(-1)^{n+1}}{n} (\gamma)^n
        - \sum_{n=1}^\infty \frac {(-1)^{n+1}}{n} (-\gamma)^n}
        \\
        &=
        \frac
        {- 2\sum_{n=1}^\infty \frac {(-1)^{2n+1}}{2n} (\gamma)^{2n}}
        {2\sum_{n=1}^\infty \frac {(-1)^{2n-1+1}}{2n-1} (\gamma)^{2n-1}}
        \\
        &=
        \frac
        {\gamma\sum_{n=1}^\infty \frac {1}{2n} (\gamma)^{2n}}
        {\sum_{n=1}^\infty \frac {1}{2n-1} (\gamma)^{2n}}.
    \end{align*}
    To finish, note that $n\geq 1$ implies $1/(4n-2) \leq 1/(2n) \leq 1/(2n-1)$, and thus
    \begin{align*}
        \frac \gamma 2
        &=
        \frac
        {\gamma\sum_{n=1}^\infty \frac {1}{2(2n-1)} (\gamma)^{2n}}
        {\sum_{n=1}^\infty \frac {1}{2n-1} (\gamma)^{2n}}
        \\
        &\leq
        \frac
        {\gamma\sum_{n=1}^\infty \frac {1}{2n} (\gamma)^{2n}}
        {\sum_{n=1}^\infty \frac {1}{2n-1} (\gamma)^{2n}}
        \\
        &\leq
        \frac
        {\gamma\sum_{n=1}^\infty \frac {1}{2n-1} (\gamma)^{2n}}
        {\sum_{n=1}^\infty \frac {1}{2n-1} (\gamma)^{2n}}
        \\
        &= \gamma.
    \end{align*}
    That is to say, $\gamma/2 \leq \Upsilon_{1}(\gamma) \leq \gamma$,
    which combined with the above also gives $\Upsilon_\nu(\gamma)
    \geq \Upsilon_1(\gamma) \geq \gamma/2$.

    (\Cref{fact:upsilon:1}, subcase $\Upsilon_\nu(\gamma)\leq\gamma$.)
    By the power mean inequality \citep[Equation 8.12]{c-s_master},
    \begin{align*}
        &\left(\frac{1+\gamma}{2}(1+\gamma)^{-\nu}
        + \frac{1-\gamma}{2}(1-\gamma)^{-\nu}\right)^{-1/\nu}
        \\
        &\leq (1+\gamma)^{ \frac{1+\gamma}{2}}
        (1-\gamma)^{ \frac{1-\gamma}{2}}.
    \end{align*}
    It follows that
    \begin{align*}
        &- \frac {2}{\nu} \ln\left(
        \frac{1+\gamma}{2}(1+\gamma)^{-\nu}
        + \frac{1-\gamma}{2}(1-\gamma)^{-\nu}
        \right)
        \\
        &\leq
        (1+\gamma)\ln(1+\gamma) + (1-\gamma)\ln(1-\gamma).
    \end{align*}
    As such,
    \begin{align*}
        &\Upsilon_\nu(\gamma)
        \\
        &\leq
        \frac
        {
            (1+\gamma)\ln(1+\gamma) + (1-\gamma)\ln(1-\gamma)
        }
        {\ln(1+\gamma)-\ln(1-\gamma)}
        \\
        & \quad+
        \frac
        {
            -\ln(1+\gamma) - \ln(1-\gamma)
        }
        {\ln(1+\gamma)-\ln(1-\gamma)}
        \\
        &= \frac
        {
            \gamma\ln(1+\gamma) -\gamma\ln(1-\gamma)
        }
        {\ln(1+\gamma)-\ln(1-\gamma)}
        \\
        &= \gamma.
    \end{align*}

    (\Cref{fact:upsilon:3}.)
    Consider the (halved, negated) first term
    \begin{align*}
        &\frac {\ln((1+\gamma)^{1-\nu} + (1-\gamma)^{1-\nu}) - \ln(2)}{\nu}
        \\
        &= \frac {\ln(0.5(1+\gamma)^{1-\nu} + 0.5 (1-\gamma)^{1-\nu})}{\nu}.
    \end{align*}
    By l'H\^opital's rule,
    \begin{align*}
        &\lim_{\nu\to 0} \frac {\ln(0.5(1+\gamma)^{1-\nu} + 0.5 (1-\gamma)^{1-\nu})}{\nu}
        \\
        &=
        \lim_{\nu\to 0}
        \frac{-(1+\gamma)^{1-\nu}\ln(1+\gamma) - (1-\gamma)^{1-\nu}\ln(1-\gamma)}
        {(1+\gamma)^{1-\nu} + (1-\gamma)^{1-\nu}}
        \\
        &=
        -\frac 1 2 \left((1+\gamma)\ln(1+\gamma) + (1-\gamma)\ln(1-\gamma)\right)
        .
    \end{align*}
    Consequently (recalling that this term was both halved and negated)
    \begin{align*}
        \lim_{\nu\to 0} \Upsilon_\nu(\gamma)
        &= \frac
        {
            (1+\gamma)\ln(1+\gamma) + (1-\gamma)\ln(1-\gamma)
        }
        {\ln(1+\gamma)-\ln(1-\gamma)}
        \\
        &\quad +  \frac
        {
            -\ln(1+\gamma) - \ln(1-\gamma)
        }
        {\ln(1+\gamma)-\ln(1-\gamma)}
        \\
        &
        = \gamma.
        \qedhere
    \end{align*}
\end{proof}

The usefulness of $\Upsilon_\nu$ is captured in the following
\namecref{fact:upsilon_goal_in_life}.

\begin{lemma}
    \label{fact:upsilon_goal_in_life}
    Let $\nu\in (0,1]$ and $\theta \in [0,1]$ be given.
    The map
    \begin{align*}
        \gamma
        &\mapsto \left(
            \frac {1+\gamma}{1-\gamma}
        \right)^{\theta\nu/2}(1-\gamma^2)^{\nu/2}
        \\
        &\quad\cdot((1+\gamma)^{1-\nu} + (1-\gamma)^{1-\nu})
    \end{align*}
    is nonincreasing over $[\theta,1]$.  Additionally,
    now taking $\gamma$ to be fixed,
    $\theta < \Upsilon_\nu(\gamma)$
    iff
    \begin{align*}
        &\frac 1 2\left(
            \frac {1+\gamma}{1-\gamma}
        \right)^{\theta\nu/2}(1-\gamma^2)^{\nu/2}
        \\
        &\quad\cdot ((1+\gamma)^{1-\nu} + (1-\gamma)^{1-\nu})
        \\
        &< 1.
    \end{align*}
\end{lemma}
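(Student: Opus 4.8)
The plan is to treat the two assertions separately; after taking logarithms both reduce to elementary one-variable calculus, and neither is hard. Throughout I write $g(\gamma)$ for the product appearing in the statement and abbreviate $L(\gamma) := \ln(1+\gamma) - \ln(1-\gamma) > 0$ (positive since $\gamma \in (0,1)$).

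For the characterization of $\theta < \Upsilon_\nu(\gamma)$, I would simply apply $\ln(\cdot)$ to $\tfrac12 g(\gamma) < 1$ and expand. This is equivalent to
\[
    \frac{\theta\nu}{2} L(\gamma)
    < \ln 2 - \ln\bigl((1+\gamma)^{1-\nu}+(1-\gamma)^{1-\nu}\bigr) - \frac\nu2 \ln(1-\gamma^2),
\]
and dividing both sides by $\tfrac\nu2 L(\gamma) > 0$ isolates $\theta$ on the left and produces exactly the fraction defining $\Upsilon_\nu(\gamma)$ on the right. There is nothing to this part beyond bookkeeping (and noting that $\gamma\in(0,1)$ makes the divisor positive, so the inequality direction is preserved).

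For the monotonicity assertion, set $h(\gamma) := \ln g(\gamma)$; since $\gamma \mapsto e^{h(\gamma)}$ is increasing, it suffices to show $h'(\gamma) \le 0$ on $[\theta,1)$. Differentiating term by term, the derivatives of the $\bigl((1+\gamma)/(1-\gamma)\bigr)^{\theta\nu/2}$ and $(1-\gamma^2)^{\nu/2}$ factors combine to $\nu(\theta-\gamma)/(1-\gamma^2)$, and the remaining factor contributes $(1-\nu)\bigl((1+\gamma)^{-\nu}-(1-\gamma)^{-\nu}\bigr)\big/\bigl((1+\gamma)^{1-\nu}+(1-\gamma)^{1-\nu}\bigr)$, so
\[
    h'(\gamma) = \frac{\nu(\theta-\gamma)}{1-\gamma^2}
    + (1-\nu)\cdot\frac{(1+\gamma)^{-\nu}-(1-\gamma)^{-\nu}}{(1+\gamma)^{1-\nu}+(1-\gamma)^{1-\nu}}.
\]
Both summands are nonpositive on $[\theta,1)$: the first because $\gamma \ge \theta$ (this is exactly where the restriction to $[\theta,1]$ enters) and $1-\gamma^2 > 0$; the second because $\nu \le 1$ gives $1-\nu \ge 0$, while $1+\gamma \ge 1-\gamma > 0$ forces $(1+\gamma)^{-\nu} \le (1-\gamma)^{-\nu}$, making the numerator $\le 0$ and the denominator $> 0$. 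Hence $h' \le 0$ and $g$ is nonincreasing on $[\theta,1)$ (the endpoint $\gamma = 1$ being irrelevant, as $g$ is not finite there).

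The only place that calls for any care is computing the derivative of $\ln\bigl((1+\gamma)^{1-\nu}+(1-\gamma)^{1-\nu}\bigr)$ and reading off the sign of its numerator; everything else is routine. It is also worth recording the degenerate case $\nu = 1$, where that factor is the constant $2$ and $h'$ collapses to its first summand, consistent with the vanishing $(1-\nu)$ prefactor.
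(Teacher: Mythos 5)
Your proof is correct and takes essentially the same approach as the paper's: the equivalence part is identical (taking logarithms and rearranging), and the monotonicity part is the same calculation, only phrased via the logarithmic derivative $h'(\gamma)=g'(\gamma)/g(\gamma)$ rather than the paper's device of factoring the expression into two explicitly nonincreasing, nonnegative pieces. The two variants produce exactly the same two nonpositive summands, so the difference is purely cosmetic.
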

\begin{proof}
    Let $f(\gamma)$ be the prescribed map.  To establish $f$ is nonincreasing,
    it will be shown that each element of the product $f(\gamma) = g(\gamma)h(\gamma)$
    is nonincreasing, where
    \begin{align*}
        g(\gamma) &:= \left(
            \frac {1+\gamma}{1-\gamma}
        \right)^{\theta\nu/2}(1-\gamma^2)^{\nu/2}
        \\
        h(\gamma) &:=(1+\gamma)^{1-\nu} + (1-\gamma)^{1-\nu}.
    \end{align*}
    First, set $\nu' := \nu/2$, and note
    \begin{align*}
        g'(\gamma)
        &= \frac {d}{d\gamma} (1+\gamma)^{\nu'(1+\theta)}(1-\gamma)^{\nu'(1-\theta)}
        \\
        &=\nu'(1+\theta)(1+\gamma)^{\nu'(1+\theta)-1}(1-\gamma)^{\nu'(1-\theta)}
        \\
        &\quad-
        \nu'(1-\theta)(1-\gamma)^{\nu'(1-\theta)-1}(1+\gamma)^{\nu'(1+\theta)}
        \\
        &= \nu'(1+\gamma)^{\nu'(1+\theta)-1}(1-\gamma)^{\nu'(1-\theta)-1}
        \\
        &\quad\cdot\left(
            (1+\theta)(1-\gamma) - (1-\theta)(1+\gamma)
        \right)
        \\
        &= 2\nu'(1+\gamma)^{\nu'(1+\theta)-1}(1-\gamma)^{\nu'(1-\theta)-1}
        \left(
           \theta - \gamma
        \right),
    \end{align*}
    where this last term is nonpositive since $\theta \leq \gamma$.  Consequently,
    $g(\gamma)$ is nonincreasing.

    For $h(\gamma)$, note similarly that
    \begin{align*}
        h'(\gamma)
        &=
        (1-\nu) \left( (1+\gamma)^{-\nu} - (1-\gamma)^{-\nu}\right)
        \\
        &=
        \frac{1-\nu}{(1+\gamma)^{\nu}(1-\gamma)^{\nu}}\left(
            (1-\gamma)^{\nu} - (1+\gamma)^{\nu}
        \right)
        \\
        &\leq 0.
    \end{align*}
    Together $f(\gamma) = g(\gamma)h(\gamma)$ is nonincreasing in $\gamma$.

    For the second statement, note that
    \begin{align*}
        1
        &> \frac 1 2
        \left(\frac{1+\gamma}{1-\gamma}\right)^{\theta\nu/2}
        (1-\gamma^2)^{\nu/2}
        \\
        &\quad\cdot((1+\gamma)^{1-\nu} + (1-\gamma)^{1-\nu})
    \end{align*}
    is equivalent 
    to
    \begin{align*}
        0
        &>
        -\ln(2)
        +\frac \nu 2\theta \ln\left(\frac{1+\gamma}{1-\gamma}\right)
        +\frac \nu 2\ln(1-\gamma^2)
        \\
        &\quad+\ln((1+\gamma)^{1-\nu} + (1-\gamma)^{1-\nu})
    \end{align*}
    is equivalent to
    \begin{align*}
        &\theta
        <\\&
        \frac
        {
        \ln(2)
        -\frac \nu 2\ln(1-\gamma^2)
        -\ln((1+\gamma)^{1-\nu} + (1-\gamma)^{1-\nu})
        }
        {
        \frac \nu 2 \ln\left(\frac{1+\gamma}{1-\gamma}\right)
        },
    \end{align*}
    where the last expression can be written $\theta < \Upsilon_\nu(\gamma)$.
\end{proof}

\subsubsection{Miscellaneous Technical Material}
\begin{lemma}
    \label{fact:alopt_almost_alada}
    Suppose $A\in\{-1,+1\}^{m \times n}$ is binary and $\ell \in \bL$.
    Then
    \begin{align*}
        \frac 1 2 \ln\left(
            \frac {1+\gamma_t}
            {1-\gamma_t}
        \right)
        - \frac 1 2 \ln(C_t^4)
        &\leq \aloptT{t}{1}
        \\
        &\leq
        \frac 1 2 \ln\left(
            \frac {1+\gamma_t}
            {1-\gamma_t}
        \right)
        + \frac 1 2 \ln(C_t^4).
    \end{align*}
    More simply,
    \[
        \left|
        \alopt{t} - \alada{t}
        \right|
        \leq \frac {\nu}{2} \ln\left(C_t^4\right).
    \]
\end{lemma}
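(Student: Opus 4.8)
The plan is to read off a first-order optimality condition for $\aloptT{t}{1}$ and squeeze it between multiples of the AdaBoost ratio $\tfrac{1+\gamma_t}{1-\gamma_t}$ using the relative-curvature estimate $C_t^{-1}\exp(x)\le\ell'(x)\le C_t\exp(x)$, which is valid for every argument $x$ that arises during this step.

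First I would fix the iteration, write $z_i:=\bfe_i^\top A\lambda_{t-1}$ and $w_i:=\ell'(z_i)$, and --- since $A$ is binary, so that $Av_t\in\{-1,+1\}^m$ --- partition $[m]$ into $S_+:=\{i:(Av_t)_i=+1\}$ and $S_-:=\{i:(Av_t)_i=-1\}$. The univariate function $\alpha\mapsto\cL(A(\lambda_{t-1}+\alpha v_t))$ is continuously differentiable, and its derivative at $\alpha=0$ equals $v_t^\top A^\top\nabla\cL(A\lambda_{t-1})=-\|A^\top\nabla\cL(A\lambda_{t-1})\|_\infty<0$; hence the assumed minimizer $\aloptT{t}{1}$ is strictly positive and finite (the degenerate case $\gamma_t=1$, in which no finite minimizer exists, is excluded by the same convention used for $\alada{t}$), and it satisfies the stationarity identity
\[
\sum_{i\in S_+}\ell'\!\left(z_i+\aloptT{t}{1}\right)=\sum_{i\in S_-}\ell'\!\left(z_i-\aloptT{t}{1}\right).
\]

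Next I would relate the weights to $\gamma_t$. Writing $P:=\sum_{i\in S_+}w_i$ and $N:=\sum_{i\in S_-}w_i$, the choice of $v_t$ gives $\|A^\top\nabla\cL(A\lambda_{t-1})\|_\infty=\tfrac1m(N-P)$ and $\|\nabla\cL(A\lambda_{t-1})\|_1=\tfrac1m(N+P)$, so $\gamma_t=(N-P)/(N+P)$, i.e.\ $N/P=(1+\gamma_t)/(1-\gamma_t)$. Then comes the curvature step: every argument appearing above --- each $z_i$ (because $\ell(z_i)\le m\cL(A\lambda_{t-1})$) and each $z_i\pm\aloptT{t}{1}=\bfe_i^\top A\lambda_t$ (because $\cL(A\lambda_t)\le\cL(A\lambda_{t-1})$, the optimal step being descending) --- lies in $\{x:\ell(x)\le m\cL(A\lambda_{t-1})\}$, so by the definition of $C_t$ we have $C_t^{-1}e^x\le\ell'(x)\le C_t e^x$ on all of them. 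Applying this to the stationarity identity yields $C_t^{-1}e^{\aloptT{t}{1}}\sum_{S_+}e^{z_i}\le C_t e^{-\aloptT{t}{1}}\sum_{S_-}e^{z_i}$, and applying it once more via $\sum_{S_+}e^{z_i}\ge C_t^{-1}P$ and $\sum_{S_-}e^{z_i}\le C_t N$ gives $e^{2\aloptT{t}{1}}\le C_t^4\,N/P=C_t^4\tfrac{1+\gamma_t}{1-\gamma_t}$; reversing the direction of the four curvature bounds gives the matching lower bound $e^{2\aloptT{t}{1}}\ge C_t^{-4}\tfrac{1+\gamma_t}{1-\gamma_t}$. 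Taking logarithms produces the displayed two-sided bound on $\aloptT{t}{1}$, and multiplying through by $\nu$ while recalling $\alopt{t}=\nu\aloptT{t}{1}$ and $\alada{t}=\tfrac\nu2\ln\tfrac{1+\gamma_t}{1-\gamma_t}$ gives $|\alopt{t}-\alada{t}|\le\tfrac\nu2\ln(C_t^4)$.

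The one point that needs care --- and essentially the only content beyond bookkeeping --- is justifying that the single constant $C_t$ governs all of $\ell'(z_i)$, $\ell'(z_i+\aloptT{t}{1})$, and $\ell'(z_i-\aloptT{t}{1})$ simultaneously: this is exactly the observation that the optimal step never increases $\cL$, so that $\bfe_i^\top A\lambda_t\le\ell^{-1}(m\cL(A\lambda_{t-1}))$ for every $i$. After that, one just has to keep the four applications of $C_t^{\pm1}$ pointing the right way.
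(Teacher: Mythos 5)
Your proposal is correct and follows essentially the same argument as the paper: first-order stationarity for the optimal step, four applications of the $C_t$ sandwich (once to pull $e^{\pm\alpha}$ out at the shifted points, once to convert $e^{z_i}$ back to $\ell'(z_i)$ at the current point), and then the identification of the weight ratio $N/P$ with $(1+\gamma_t)/(1-\gamma_t)$. The only cosmetic difference is that you partition by the sign of $(Av_t)_i$ directly, whereas the paper carries a sign variable $s$ with $v_t=s\bfe_j$ and partitions by the sign of $A_{ij}$; your choice avoids the case split but is otherwise identical in substance. (One small notational slip: you write $z_i\pm\aloptT{t}{1}=\bfe_i^\top A\lambda_t$, which only holds when $\nu=1$; what you actually need, and what your parenthetical justification correctly invokes, is that the point $\lambda_{t-1}+\aloptT{t}{1}v_t$ has objective value at most $\cL(A\lambda_{t-1})$, so the $C_t$ bound applies at those arguments.)
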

\begin{proof}
    Choose $s\in \{\pm 1\}$ so that $v_{t+1} = \bfe_j s$ for some $\bfe_j$.
    Then, by first order conditions on the
    optimal step size, and adopting shorthand notation where the summations take $j$ fixed
    according to the preceding text, but $i\in[m]$ may vary,
    \begin{align*}
        0
        &=
        \sum_{A_{ij} < 0} s A_{ij}\ell'(\bfe_i^\top A(\lambda_t + s\alpha_{t+1} \bfe_j))
        \\
        &\quad +
        \sum_{A_{ij} > 0} s A_{ij}\ell'(\bfe_i^\top A(\lambda_t + s\alpha_{t+1} \bfe_j))
        \\
        &\leq
        C_{t+1}^{-1} \sum_{A_{ij} < 0} s A_{ij}\exp(\bfe_i^\top A\lambda_t) \exp(s\alpha_{t+1} A_{ij})
        \\
        &\quad +
        C_{t+1}^{1} \sum_{A_{ij} > 0} s A_{ij}\exp(\bfe_i^\top A\lambda_t)\exp(s\alpha_{t+1} A_{ij})
        \\
        &\leq
        \exp(-s\alpha_{t+1}) C_{t+1}^{-2} \sum_{A_{ij} < 0} s A_{ij}\ell'(\bfe_i^\top A\lambda_t)
        \\&\quad
        +
        \exp(s\alpha_{t+1}) C_{t+1}^{2} \sum_{A_{ij} > 0} s A_{ij}\ell'(\bfe_i^\top A\lambda_t),
    \end{align*}
    which can be rearranged to yield
    \begin{align*}
        s\alpha_{t+1}
        &\geq
        \frac 1 2 \ln\left(
            \frac {-sC_{t+1}^{-2}\sum_{A_{ij}<0}A_{ij}\ell'(\bfe_i^\top A\lambda_t)}
            {sC_{t+1}^{2}\sum_{A_{ij}>0}A_{ij}\ell'(\bfe_i^\top A\lambda_t)}
        \right)
        \\
        &=
        \frac 1 2 \ln\left(
            \frac {\sum_{A_{ij}<0}\ell'(\bfe_i^\top A\lambda_t)}
            {\sum_{A_{ij}>0}\ell'(\bfe_i^\top A\lambda_t)}
        \right)
        - \frac 1 2 \ln(C_{t+1}^4).
    \end{align*}
    To simplify further, note that
    \begin{align*}
        s\gamma_{t+1}
        &= \frac {-\sum_{i=1}^m A_{ij} \ell'(\bfe_i^\top A\lambda_t)}
        { \|\nabla \cL(A\lambda_t)\|_1}
        \\
        &= \frac {\sum_{A_{ij}<0} \ell'(\bfe_i^\top A\lambda_t)
-\sum_{A_{ij}>0} \ell'(\bfe_i^\top A\lambda_t)}
        { \|\nabla \cL(A\lambda_t)\|_1},
        \\
        1
        &= \frac {\sum_{A_{ij}<0}  \ell'(\bfe_i^\top A\lambda_t)
        +\sum_{A_{ij}>0} \ell'(\bfe_i^\top A\lambda_t)}
        { \|\nabla \cL(A\lambda_t)\|_1},
    \end{align*}
    which can be added and subtracted to yield
    \[
        \frac {\sum_{A_{ij}>0}\ell'(\bfe_i^\top A\lambda_t)}
        {\sum_{A_{ij}<0}\ell'(\bfe_i^\top A\lambda_t)}
        = \frac {1-s\gamma_{t+1}}{1+s\gamma_{t+1}},
    \]
    whereby
    \[
        s\alpha_{t+1}
        \geq
        \frac 1 2 \ln\left(
            \frac {1+s\gamma_{t+1}}{1-s\gamma_{t+1}}
        \right)
        - \frac 1 2 \ln(C_{t+1}^4).
    \]
    Repeating the steps above to prove a lower bound on $\alpha_{t+1}$,
    it also follows that
    \[
        s\alpha_{t+1}
        \leq
        \frac 1 2 \ln\left(
            \frac {1+s\gamma_{t+1}}{1-s\gamma_{t+1}}
        \right)
        + \frac 1 2 \ln(C_{t+1}^4).
    \]
    To finish the first part of the result, it suffices to consider the
    cases $s=+1$ and $s=-1$ separately, which both lead to the desired pair of
    inequalities.

    For the second guarantee, first note that $\alopt{t} = \nu\aloptT{t}{1}$
    and $\alada{t} = \nu\aladaT{t}{1}$, and so recalling the form of $\aladaT{t}{1}$ and
    scaling the first guarantee by $\nu$, it follows that
    \[
        |\alopt{t} - \alada{t}| \leq \frac \nu 2\ln(C_{t}^4).
        \qedhere
    \]
\end{proof}

\section{Deferred Material from \Cref{sec:general}}

\begin{proof}[Proof sketch of \Cref{fact:general:opt}]
    All the convergence rates developed by \Citet[Section 6]{primal_dual_boosting_arxiv}
    stem from an inequality
    \begin{align*}
        &\cL(A\lambda_{t+1}) - \bar\cL_A
        \\
        &\leq
        (\cL(A\lambda_t) -\bar\cL_A)\left(
            1-\frac {\|A^\top\nabla \cL(A\lambda_t)\|_\infty^2}{c\cL(A\lambda_t)(\cL(A\lambda_t)
            - \bar\cL_A)}
        \right),
    \end{align*}
    where $c>0$ is some constant independent of $t$ (or improving with $t$, in which case
    the bound may be worsened by taking the choice for $t=0$)
    \citep[Proposition 6.2, Proposition D.6]{primal_dual_boosting_arxiv}.
    Exactly such a bound was provided
    for each line search in the proof of its respective optimization guarantee in the
    separable case
    (cf.
    \Cref{fact:sep:opt:quadub},
    \Cref{fact:sep:opt:wolfe};
    no need to adjust
    \Cref{fact:sep:opt:ada}, since $\ell = \exp$ and $A$ binary causes
    $\alada{t}=\alopt{t}$, and so \Cref{fact:sep:opt:quadub} covers this case).
    Replacing $c$ with the particulars for each step size will only impact the final
    rates in Theorems 6.3, 6.6, and 6.12 by these constants.  The only other thing to
    check is that $\ell \in \bG$, the class of losses considered 
    by \citet[Section 6]{primal_dual_boosting_arxiv}; it can be checked directly
    that $\bL \subset \bG$.
\end{proof}

In order to establish the margin properties, the following
\namecref{fact:general:margins:hatgamma} is essential.
\begin{lemma}
    \label{fact:general:margins:hatgamma}
    Consider the setting of \Cref{fact:general:margins}.
    Then there exists $T$ and $\hat \gamma$ so that,
    for all $t\geq T$,
    \[
        \frac {\|A^\top \nabla \cL(A\lambda_t)\|_\infty}{\cL(A\lambda_t) - \bar\cL_A}
        \geq \hat \gamma.
    \]
\end{lemma}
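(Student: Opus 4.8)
The plan is to exploit the hard core decomposition $A=\binom{A_0}{A_+}$ (rows of $A_0$ indexed by $H(A)^c$, rows of $A_+$ by $H(A)$) and the weighting $\hat\lambda$ it supplies, which satisfies $\bfe_i^\top A\hat\lambda<0$ for $i\in H(A)^c$ and $\bfe_i^\top A\hat\lambda=0$ for $i\in H(A)$; set $\delta:=\min_{i\in H(A)^c}|\bfe_i^\top A\hat\lambda|>0$ and $\psi(z):=\tfrac1m\sum_{i\in H(A)}\ell(z_i)$. Two preliminaries underlie the argument. First, $\bar\cL_A=\cL_+^*:=\inf_{\mu\in\R^n}\psi(A_+\mu)$, and this infimum is attained at some $\mu^*$: the bound $\bar\cL_A\geq\cL_+^*$ is immediate after discarding the nonnegative $H(A)^c$ terms, the reverse follows by driving those terms to zero along $\mu^*+s\hat\lambda$ as $s\to\infty$, and attainment is a consequence of the second hard core property, which via a theorem of the alternative yields a strictly positive $w$ with $A_+^\top w=0$, hence compact sublevel sets for $\psi$ on $\im A_+$ (cf. \citet[Theorems 5.5 and 5.9]{primal_dual_boosting_arxiv}). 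Second, by \Cref{fact:general:opt}, $\cL(A\lambda_t)\to\bar\cL_A=\cL_+^*$; writing $\cL_0(\lambda_t)$ and $\cL_+(\lambda_t)=\psi(A_+\lambda_t)$ for the portions of $\cL(A\lambda_t)$ over $H(A)^c$ and $H(A)$, this gives $\cL_0(\lambda_t)\to 0$, $\cL_+(\lambda_t)\to\cL_+^*$, and --- since $\psi$ is strictly convex on $\im A_+$ with a unique minimizer $z^*:=A_+\mu^*$ and compact sublevel sets --- $A_+\lambda_t\to z^*$. Throughout, set $q_t:=\nabla\cL(A\lambda_t)$ (entrywise positive), with $q_t^0$ its $H(A)^c$-block and $g_t$ its $H(A)$-block.

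The first main step bounds $\|A^\top q_t\|_\infty$ from below in terms of the easy subset. Sliding $\lambda_t$ along $\hat\lambda$ and using that the iterate values only decrease, that $C_t\leq C_1$ throughout, and that $\ell$ is increasing, one gets for $i\in H(A)^c$ and $s\geq 0$ that $\ell(\bfe_i^\top A(\lambda_t+s\hat\lambda))\leq\ell(\bfe_i^\top A\lambda_t-s\delta)\leq C_1^2e^{-s\delta}\ell(\bfe_i^\top A\lambda_t)$, so the choice $s=\ln(2C_1^2)/\delta$ gives $\cL(A(\lambda_t+s\hat\lambda))\leq\cL(A\lambda_t)-\tfrac12\cL_0(\lambda_t)$. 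Comparing with the first-order bound $\cL(A(\lambda_t+s\hat\lambda))\geq\cL(A\lambda_t)+s\ip{A^\top q_t}{\hat\lambda}$ and invoking H\"older,
\[
\|A^\top q_t\|_\infty\ \geq\ \frac{|\ip{A^\top q_t}{\hat\lambda}|}{\|\hat\lambda\|_1}\ \geq\ \frac{\delta}{2\|\hat\lambda\|_1\ln(2C_1^2)}\,\cL_0(\lambda_t)\ =:\ c_1\,\cL_0(\lambda_t),
\]
a bound valid for every $t$.

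The second main step handles the hard subset. On a fixed compact sublevel set $K\subseteq\im A_+$ of $\psi$ containing $z^*$ and (for all large $t$) $A_+\lambda_t$, the diagonal Hessian $\diag(m^{-1}\ell''(z_i))$ of $\psi$ is bounded below by a positive multiple of the identity, so $\psi$ is $\sigma$-strongly convex on $K$ relative to $\im A_+$; the usual gradient-domination consequence of strong convexity, passed through the injectivity of $A_+^\top$ on $\im A_+$ and the comparison $\|v\|_2\leq\sqrt n\|v\|_\infty$, produces a constant $c_3$ with $\cL_+(\lambda_t)-\cL_+^*\leq c_3\|A_+^\top g_t\|_\infty^2$ for all large $t$. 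Since $A^\top q_t=A_+^\top g_t+A_0^\top q_t^0$ with $\|A_0^\top q_t^0\|_\infty\leq\|q_t^0\|_1\leq C_1^2\cL_0(\lambda_t)\leq(C_1^2/c_1)\|A^\top q_t\|_\infty$, we get $\|A_+^\top g_t\|_\infty\leq(1+C_1^2/c_1)\|A^\top q_t\|_\infty$, and hence, for all large $t$,
\[
\cL(A\lambda_t)-\bar\cL_A=\cL_0(\lambda_t)+\bigl(\cL_+(\lambda_t)-\cL_+^*\bigr)\ \leq\ \frac{\|A^\top q_t\|_\infty}{c_1}+c_3\Bigl(1+\frac{C_1^2}{c_1}\Bigr)^2\|A^\top q_t\|_\infty^2.
\]
Because $A_+^\top g_t\to A_+^\top\nabla\psi(z^*)=0$ and $A_0^\top q_t^0\to 0$, we have $\|A^\top q_t\|_\infty\to 0$; so for $t$ beyond some $T$, $\|A^\top q_t\|_\infty^2\leq\|A^\top q_t\|_\infty$ and the right-hand side is at most $\hat\gamma^{-1}\|A^\top q_t\|_\infty$ with $\hat\gamma:=\bigl(c_1^{-1}+c_3(1+C_1^2/c_1)^2\bigr)^{-1}>0$, which rearranges to the claim.

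The main obstacle is the hard-subset step: it is the only place genuine curvature enters, and it relies on attainment and uniqueness of the hard core minimizer, compactness of the pertinent sublevel sets, and the convergence $A_+\lambda_t\to z^*$ --- all of which draw on the structural analysis of the decomposition due to \citet{primal_dual_boosting_arxiv}. The $\hat\lambda$-sliding inequality and the remaining bookkeeping are elementary.
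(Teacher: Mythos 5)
Your proof is correct, and it reaches the claim by a more self-contained route than the paper's proof sketch, although the high-level architecture is the same: both split the analysis along the hard core, lower bound $\|A^\top\nabla\cL(A\lambda_t)\|_\infty$ via the easy subset, and upper bound the hard-subset suboptimality via strong convexity on a compact set. Where the paper's sketch delegates both steps to external machinery from \citet{primal_dual_boosting_arxiv} --- the generalized weak learning rate $\gamma(A,S)$ (Definition 4.3 there) for the gradient lower bound, and Lemma 6.8 there for the strong-convexity/projection inequality --- you substitute two direct arguments: the $\hat\lambda$-sliding estimate (first hard core property, the $C_\ell$ curvature bounds, a first-order convexity inequality, and H\"older) in place of $\gamma(A,S)$, and a textbook gradient-domination inequality passed through the injectivity of $A_+^\top$ on $\im(A_+)$ in place of Lemma 6.8. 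The final assembly also differs in flavor: the paper keeps the ratio intact and uses $\sqrt{2cx}\geq x$ once $x := \cL(A_+\lambda_t) - \bar\cL_A \leq 2c$, whereas you expand the suboptimality additively and absorb the quadratic term once $\|A^\top\nabla\cL(A\lambda_t)\|_\infty$ has dropped below $1$. Both yield a valid $\hat\gamma$; your version has the virtue of making explicit exactly where each hard core property enters (the first via $\hat\lambda$, the second via Stiemke's lemma for compactness), at the mild cost of tracking the extra constants $c_1$ and $c_3$.
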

\begin{proof}[Proof sketch]
    As discussed in the proof of \Cref{fact:general:opt},
    the results of \citet{primal_dual_boosting_arxiv}, which are superficially specialized
    to the Wolfe line search, carry over for the other line searches here with only a change
    of constants; consequently, those results carry over wholesale.

    To start, let $S$ be a compact cube containing all iterates,
    and let $\gamma(A,S)$ be the corresponding generalized weak learning rate
    \citet[Definition 4.3]{primal_dual_boosting_arxiv}.

    By \citep[Theorem 5.9]{primal_dual_boosting_arxiv}, $\cL + \iota_{\im(A_+)}$ (i.e.,
    the function which is $\cL(y)$ when $y=A_+\lambda$ for some $\lambda \in \R^n$, and
    $\infty$ otherwise) has
    compact level sets, and thus strict convexity of $\cL$ grants a modulus of strong
    convexity $c>0$ over $S$;
    furthermore, it holds for every $t$ that
    \begin{align*}
        &\cL(A_+\lambda_t) - \bar \cL_A
        \\
        &\quad\leq \frac 1{2c} \left\|\nabla \cL(A_+\lambda_t)
        - \sfP^1_{\nabla \cL(S)\cap\ker(A^\top_+)}(\nabla \cL(A_+\lambda_t))\right\|_1^2,
    \end{align*}
    where $\sfP^1_{\nabla\cL(S)\cap \ker(A^\top_+)}$ denotes the $l^1$ projection onto
    $\nabla \cL(S)\cap \ker(A^\top_+)$, the latter being the kernel (nullspace) of $A^\top_+$
    \citep[Lemma 6.8]{primal_dual_boosting_arxiv}.

    Now choose $T$ so that, for every $t\geq T$,
    \[
        \cL(A_+\lambda_t) - \bar \cL_A \leq  \cL(A\lambda_t) - \bar \cL \leq 2c,
    \]
    which is possible by the convergence of $\{\lambda_t\}_{t=1}^\infty$
    (cf. \Cref{fact:general:opt} or \citep[Theorem 6.12]{primal_dual_boosting_arxiv}).

    Using these facts, the definition of $\gamma(A,S)$,
    the choice $\phi = \exp$, and the fact
    $\inf_\lambda \cL(A_+\lambda) = \inf_\lambda \cL(A\lambda) = \bar \cL_A$
    \citep[Theorem 5.9]{primal_dual_boosting_arxiv},
    \begin{align*}
        &\frac {\|A^\top \nabla \cL(A\lambda_t)\|_\infty}{\cL(A\lambda_t) - \bar\cL_A}
        \\
        &\geq
        \gamma(A,S)
        \left(
        \frac {\|\nabla\cL(A\lambda_t) - \sfP^1_{S\cap \ker(A^\top)}(\nabla\cL(A\lambda_t))\|_1}
        {\cL(A\lambda_t) - \bar \cL_A}
        \right)
        \\
        &=
        \gamma(A,S)
        \frac{C_T^2}{C_T^2}
        \Bigg(
        \frac {\|\nabla\cL(A_0\lambda_t)\|_1}
        {\cL(A\lambda_t) - \bar \cL_A}
        \\
        &\qquad
        + \frac{\|\nabla\cL(A_+\lambda_t) - \sfP^1_{S\cap \ker(A_+^\top)}(\nabla\cL(A_+\lambda_t))\|_1}
        {\cL(A\lambda_t) - \bar \cL_A}
        \Bigg)
        \\
        &\geq
        \frac{\gamma(A,S)}{C_T^2}
        \left(
        \frac {\cL(A_0\lambda_t) + \sqrt{2c(\cL(A_+\lambda_t) -\bar \cL_A)}}
        {\cL(A\lambda_t) - \bar \cL_A}
        \right)
        \\
        &\geq
        \frac{\gamma(A,S)}{C_T^2}
        \Bigg(
        \frac {\cL(A_0\lambda_t)}
        {\cL(A\lambda_t) - \bar \cL_A}
        \\
        &\qquad + \frac{\sqrt{(\cL(A_+\lambda_t) - \bar \cL_A)(\cL(A_+\lambda_t) -\bar \cL_A)}}
        {\cL(A\lambda_t) - \bar \cL_A}
        \Bigg)
        \\
        &= \frac{\gamma(A,S)}{C_T^2}.
    \end{align*}
    To finish, set $\hat\gamma := \gamma(A,S) / C_T^2$.
\end{proof}

Another technical lemma is helpful.
\begin{lemma}
    \label{fact:general:margins:big_norms}
    Consider the setting of \Cref{fact:general:margins}.
    For each step size choice and $B>0$, there exists $T_B$
    so that for all $t\geq T_B$, $\|\lambda_t\|_1 \geq B$.
\end{lemma}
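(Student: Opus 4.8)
The plan is to show that $\|\lambda_t\|_1 \to \infty$, from which the statement follows immediately: for any $B>0$ it then suffices to take $T_B$ large enough that $\|\lambda_t\|_1 \geq B$ for all $t \geq T_B$. The two ingredients I would assemble are (i) that $\cL(A\lambda_t)$ decreases to $\bar\cL_A$ for every step-size choice under consideration, and (ii) that the infimum $\bar\cL_A$ is \emph{not} attained by any finite weighting when $1 \leq |H(A)| \leq m-1$.

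For (i): each line search here is shown to guarantee descent (cf.\ the proofs of \Cref{fact:sep:opt:quadub}, \Cref{fact:sep:opt:wolfe}, \Cref{fact:sep:opt:ada}), so $t\mapsto \cL(A\lambda_t)$ is nonincreasing, and \Cref{fact:general:opt} supplies the quantitative rate $\cL(A\lambda_t)-\bar\cL_A\leq\epsilon$ after $\cO(1/\epsilon)$ iterations; hence $\cL(A\lambda_t)\downarrow\bar\cL_A$. For (ii): this is exactly the attainability characterization of \citet[Theorem 5.5]{primal_dual_boosting_arxiv} invoked already in \Cref{fact:general:opt}. If a self-contained argument is preferred, it also follows directly from the hard-core structure: the definition of $H(A)$ provides $\hat\lambda$ with $\bfe_i^\top A\hat\lambda<0$ for $i\in H(A)^c$ (a nonempty set, since $|H(A)|\leq m-1$) and $\bfe_i^\top A\hat\lambda=0$ for $i\in H(A)$; since every $\ell\in\bL$ has $\ell'>0$ (so $\ell$ is strictly increasing), adding $s\hat\lambda$ with $s>0$ to any candidate $\lambda^\star$ strictly decreases $\ell$ at each coordinate of $H(A)^c$ and leaves those of $H(A)$ unchanged, so $\cL(A(\lambda^\star+s\hat\lambda))<\cL(A\lambda^\star)$; thus no finite $\lambda^\star$ can be a minimizer.

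The contradiction step is then a routine compactness argument: suppose $\|\lambda_t\|_1\not\to\infty$, so there is $B_0$ and a subsequence $(\lambda_{t_k})$ with $\|\lambda_{t_k}\|_1\leq B_0$; by Bolzano--Weierstrass in $\R^n$ pass to a further subsequence converging to some $\lambda^\star$, and by continuity of $\lambda\mapsto\cL(A\lambda)$ together with (i), $\cL(A\lambda^\star)=\bar\cL_A$, so $\lambda^\star$ attains the infimum, contradicting (ii). Hence $\|\lambda_t\|_1\to\infty$. I expect no real obstacle here; the only point requiring care is that the convergence claim (i) must cover all four step-size specifications listed in \Cref{fact:general:margins}, and this is precisely what \Cref{fact:general:opt} packages, so the remainder is just the limiting argument above.
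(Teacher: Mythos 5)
Your argument is correct, but it takes a genuinely different route from the paper. The paper argues directly and locally: fix any single example $i\in H(A)^c$ (such an example exists since $|H(A)|\leq m-1$); the hard-core decomposition (cf.\ \citet[Theorem 5.9]{primal_dual_boosting_arxiv}) forces the optimal per-example loss on any $i\in H(A)^c$ to be zero, so $\cL(A\lambda_t)-\bar\cL_A\to 0$ forces $\ell(\bfe_i^\top A\lambda_t)\to 0$ and hence $\bfe_i^\top A\lambda_t\to-\infty$; H\"older's inequality $-\bfe_i^\top A\lambda_t\leq\|\bfe_i^\top A\|_\infty\|\lambda_t\|_1$ then gives $\|\lambda_t\|_1\to\infty$. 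You instead establish non-attainability of $\bar\cL_A$ (via the perturbation $\lambda^\star\mapsto\lambda^\star+s\hat\lambda$ along the hard-core certificate, which is the standard argument and coincides with \citet[Theorem 5.5]{primal_dual_boosting_arxiv}), and close with a Bolzano--Weierstrass contradiction against any bounded subsequence. Both arguments are sound and both ultimately rest on $|H(A)|<m$ and the optimization guarantee of \Cref{fact:general:opt}. Your route is more abstract and has the virtue of being short and self-contained, but it is inherently non-constructive: the compactness contradiction gives no relation between $B$ and $T_B$. The paper's route couples $\|\lambda_t\|_1$ to the observable suboptimality $\cL(A\lambda_t)-\bar\cL_A$ through a single off-hard-core margin, so in principle the rate of \Cref{fact:general:opt} can be propagated into an explicit $T_B$, which sits better with the quantitative spirit of the surrounding analysis (even though the paper only states the conclusion asymptotically).
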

\begin{proof}[Proof sketch]
    This follows from \Cref{fact:general:opt} and $|H(A)| < m$.
    In particular, choose any example $i \in H(A)^c$; there
    exists $\epsilon > 0$ so that $\frac 1 m \ell(\bfe_i A^\top\lambda)<\epsilon$
    (which is a necessary condition for $\cL(A\lambda) < \epsilon$)
    only when $\bfe_i^\top A\lambda\leq -B\|\bfe_i^\top A\|_\infty$, and
    so the result follows by combining this with H\"older's inequality, namely
    the inequality
    $-\bfe_i^\top A\lambda \leq \|\bfe_i^\top A\|_\infty \|\lambda\|_1$;
    the optimality guarantee provides that this holds for all large $t$.
\end{proof}

In order to proof the margin results, it is helpful to split into two cases,
one being the Wolfe step sizes, the other being a generalization of the quadratic
upper bound step sizes.

\begin{lemma}
    \label{fact:general:margins:quadub}
    Consider the setting of \Cref{fact:general:margins},
    but with step sizes $0.5\alqub{i} \leq \alpha_i \leq 1.5\alqub{i}$.
    Then there exists $\hat\gamma>0$ and $T$ so that,
    for all $t\geq T$, all margins (over $H(A)^c$) exceed $\hat\gamma$.
\end{lemma}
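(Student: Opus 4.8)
The plan is to bound, for each example $i\in H(A)^c$, the numerator $-\bfe_i^\top A\lambda_t$ from below and the denominator $\|\lambda_t\|_1$ from above, and then to show that both grow at the same logarithmic rate in $t$, so their ratio stays bounded away from $0$. Write $\delta_t:=\cL(A\lambda_t)-\bar\cL_A>0$ (nonzero since $|H(A)|\geq 1$ means the minimizer is not attained). First I would verify the enlarged step range still behaves like the quadratic upper bound: following the proof of \Cref{fact:qub:singlestep}, $\alpha\mapsto\cL(A(\lambda_{t-1}+\alpha v_t))$ is dominated on its sublevel interval $I$ by a quadratic whose unconstrained minimizer is $\alqubT{t}{1}=\gamma_t/C_t^4$, and convexity of that quadratic places $[0,2\alqubT{t}{1})$ inside $I$, so every $\alpha_t\in[0.5\alqub{t},1.5\alqub{t}]\subseteq(0,1.5\alqubT{t}{1}]$ is a genuine descent step; substituting such an $\alpha_t$ into the quadratic bound yields both a one-step decrease of order $\|A^\top\nabla\cL(A\lambda_{t-1})\|_\infty^2/\|\nabla\cL(A\lambda_{t-1})\|_1$ and
\begin{align*}
  \alpha_t\,\|A^\top\nabla\cL(A\lambda_{t-1})\|_\infty \;\leq\; 4\,(\delta_{t-1}-\delta_t)
\end{align*}
(the quadratic correction is $\tfrac34\nu\,\alpha_t\|A^\top\nabla\cL(A\lambda_{t-1})\|_\infty$ at most, and $\nu\le1$ leaves at least a quarter of the linear term). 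The first of these places these step sizes in the form required by \Cref{fact:general:opt}, so $\delta_t\downarrow 0$, and \Cref{fact:general:margins:hatgamma} and \Cref{fact:general:margins:big_norms} apply; in particular there are $T_0$ and $\hat\gamma>0$ with $\|A^\top\nabla\cL(A\lambda_{s-1})\|_\infty\geq\hat\gamma\,\delta_{s-1}$ for all $s>T_0$.

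For the numerator, since $\cL(A_+\lambda_t)\geq\inf_\mu\cL(A_+\mu)=\bar\cL_A$ (cf. the proof of \Cref{fact:general:margins:hatgamma}), every $i\in H(A)^c$ obeys $\ell(\bfe_i^\top A\lambda_t)\leq\sum_{k\in H(A)^c}\ell(\bfe_k^\top A\lambda_t)=m\cL(A\lambda_t)-m\cL(A_+\lambda_t)\leq m\delta_t$, and combining this with $\exp(\bfe_i^\top A\lambda_t)\leq C_1\,\ell(\bfe_i^\top A\lambda_t)$ gives $-\bfe_i^\top A\lambda_t\geq\ln(1/\delta_t)-\ln(C_1 m)$, which is positive and grows like $\ln(1/\delta_t)$ once $t$ is large.

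The crux is the denominator: the naive estimate $\|\lambda_t\|_1\leq\sum_s\alpha_s\leq 1.5\nu\sum_s\gamma_s$ is useless, since $\sum_s\gamma_s$ may grow polynomially in $t$. Instead I would divide the two facts from the first step: for $s>T_0$,
\begin{align*}
  \alpha_s \;\leq\; \frac{4}{\hat\gamma}\cdot\frac{\delta_{s-1}-\delta_s}{\delta_{s-1}} \;\leq\; \frac{4}{\hat\gamma}\bigl(\ln\delta_{s-1}-\ln\delta_s\bigr),
\end{align*}
using $1-x\leq-\ln x$; this telescopes, so that (the $v_s$ being signed basis vectors, the triangle inequality gives the left step) $\|\lambda_t\|_1\leq\sum_{s=1}^{T_0}\alpha_s+\tfrac{4}{\hat\gamma}\ln(\delta_{T_0}/\delta_t)\leq K+\tfrac{4}{\hat\gamma}\ln(1/\delta_t)$ for a fixed constant $K$. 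Combining the two bounds,
\begin{align*}
  \min_{i\in H(A)^c}\frac{-\bfe_i^\top A\lambda_t}{\|\lambda_t\|_1}\;\geq\;\frac{\ln(1/\delta_t)-\ln(C_1 m)}{K+\tfrac{4}{\hat\gamma}\ln(1/\delta_t)},
\end{align*}
and since $\ln(1/\delta_t)\to\infty$ the right-hand side tends to $\hat\gamma/4$; choosing the final constant $\hat\gamma/8$ and $T$ large enough that the bound exceeds it completes the proof.

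I expect the main obstacle to be exactly this denominator estimate: one must recognize that $\|\lambda_t\|_1$ should be controlled through the \emph{relative} loss decrease $\alpha_s\lesssim(\delta_{s-1}-\delta_s)/\delta_{s-1}$ rather than through $\sum_s\gamma_s$, so that the budget spent on $\|\lambda_t\|_1$ grows only logarithmically in $t$ — at the same rate as the numerator — with the quotient $\tfrac{4}{\hat\gamma}$ versus $1$ of the two leading constants leaving a positive margin. A secondary point is to confirm that \Cref{fact:general:opt}, \Cref{fact:general:margins:hatgamma}, and \Cref{fact:general:margins:big_norms} transfer to the enlarged step range, which they do because all three rest only on the quadratic-upper-bound per-step decrease established at the outset.
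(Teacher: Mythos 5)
Your proof is correct and takes essentially the same route as the paper: both invoke \Cref{fact:general:margins:hatgamma} to get $\|A^\top\nabla\cL(A\lambda_{t})\|_\infty \geq \hat\gamma\,\delta_{t}$, lower-bound the margin numerator off $H(A)^c$ by $\ln(1/\delta_t) - O(1)$, and show $\|\lambda_t\|_1 \lesssim \ln(1/\delta_t)$, with \Cref{fact:general:margins:big_norms} (or equivalently $\delta_t\to 0$) handling the additive slack. The only cosmetic difference is that you bound the denominator by telescoping $\alpha_s \lesssim (\delta_{s-1}-\delta_s)/\delta_{s-1} \leq \ln\delta_{s-1}-\ln\delta_s$ directly, whereas the paper routes through $\sum_i\gamma_i$ by exploiting $\alqub{i}\propto\gamma_i$; your version is slightly more generic (and in fact mirrors the paper's own argument in \Cref{fact:general:margins:wolfe}), but the underlying mechanism is identical.
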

\begin{proof}[Proof sketch]
    Consider the quadratic upper bound line search in
    \Cref{fact:sep:opt:quadub} and its proof.  It is unclear whether $0.5\alqub{i}$ or
    $1.5\alqub{i}$ give a better step, due to the term $\nu$.  However,
    since $\alqubT{i}{1}$ is the minimizer, symmetry grants that
    $0.5\alqub{t}$ is guaranteed to be a worse choice than anything in the
    specified interval.  As such,  plugging this in to the quadratic upper bound
    yields
    \[
        \cL(A\lambda_{t+1})
        \leq
        \cL(A\lambda_t)
        - \frac {c_0\gamma_{t+1} \|A^\top \nabla\cL(A\lambda_t)\|_\infty}{2}.
    \]
    for some constant $c_0>0$ depending on $C_1$ and not on $t$.

    Now choose $T_1$ according to
    \Cref{fact:general:margins:hatgamma};
    by the above and \Cref{fact:general:margins:hatgamma}, for any $t\geq T_1$,
    \begin{align*}
        &\cL(A\lambda_{t+1}) - \bar \cL_A
        \\
        &\leq
        \cL(A\lambda_t) - \bar \cL_A
        - \frac {\gamma_{t+1} c_0\|A^\top \nabla\cL(A\lambda_t)\|_\infty}{2}
        \\
        &\leq
        (\cL(A\lambda_t) - \bar \cL_A)\left(
        1  - \frac {\gamma_{t+1} c_0\|A^\top \nabla\cL(A\lambda_t)\|_\infty}
        {2(\cL(A\lambda_t - \bar \cL_A)}
        \right)
        \\
        &\leq
        (\cL(A\lambda_t) - \bar \cL_A)\left(
        1  - \frac {\gamma_{t+1} c_0 \hat \gamma}{2}
        \right),
    \end{align*}
    which, after recursive application, provides
    \[
        \cL(A\lambda_{t+1}) - \bar \cL_A
        \leq
        (\cL(A\lambda_{T_1}) - \bar \cL_A)\exp\left(
        -\frac {\hat \gamma c_0}{2} \sum_{i=T_1+1}^{t+1} \gamma_i
        \right).
    \]
    Since
    \begin{align*}
        \|\lambda_{t+1}\|_1
        &= \|\lambda_{T_1} + \sum_{i=T_1}^{t+1} \alpha_i v_i\|_1
        \\
        &\leq \|\lambda_{T_1}\|_1 + \sum_{i=T_1+1}^{t+1}\alpha_i
        \\
        &\leq \|\lambda_{T_1}\|_1 + 1.5\nu\sum_{i=T_1+1}^{t+1}\gamma_i,
    \end{align*}
    it follows that
    \begin{align*}
        &\cL(A\lambda_{t+1}) - \bar \cL_A
        \\
        &\leq
        (\cL(A\lambda_{T_1} - \bar \cL_A))\exp\left(
        -\frac {\hat \gamma c_0}{3\nu} (\|\lambda_{t+1}\|_1 - \|\lambda_{T_1}\|_1)
        \right).
    \end{align*}
    For any iteration $t$, let $b_t \in \{\bfe_i\}_{i=1}^{m_0}$ index
    any example in $[m]\setminus H(A)$ which achieves the worst margin (amongst
    elements off the hard core) for this iteration.
    Since the optimal error on this example is 0
    \citep[Theorem 5.9]{primal_dual_boosting_arxiv},
    for any $t > T_1$,
    \begin{align*}
        &\exp(b_t^\top A\lambda_t)
        \\
        &=
        \exp(b_t^\top A\lambda_t) - 0
        \\
        &\leq m C_{T_1}(\cL(A\lambda_t) - \bar \cL_A)
        \\
        &\leq
        m C_{T_1}(\cL(A\lambda_{T_1} - \bar \cL_A))\exp\left(
        -\frac {\hat \gamma c_0}{3\nu} (\|\lambda_{t}\|_1 - \|\lambda_{T_1}\|_1)
        \right)
        \\
        &=
        \exp\left(
            -\hat \gamma c_0\|\lambda_{t}\|_1/(3\nu)
        \right)
        \\
        &\quad\cdot\underbrace{C_t(\cL(A\lambda_{T_1} - \bar \cL_A))\exp(\hat\gamma c_0\|\lambda_{T_1}\|_1/(3\nu))}_{=: \exp(r)}.
    \end{align*}
    Applying $\ln$ and rearranging,
    \begin{align*}
        \frac {- b_t^\top A\lambda_t}{\|\lambda_t\|_1}
        \geq \frac {\hat \gamma c_0}{3\nu}
        - \frac {r}{\|\lambda_{t}\|_1}.
    \end{align*}
    To finish, by \Cref{fact:general:margins:big_norms}, there exists $T_2$ so that
    \[
        \|\lambda_i\|_1 \geq \frac {6r\nu}{\hat\gamma c_0}
    \]
    for every $i\geq T_2$,
    and setting $T := \max\{T_1, T_2\}$ gives the desired result.
\end{proof}

\begin{lemma}
    \label{fact:general:margins:wolfe}
    Consider the setting of \Cref{fact:general:margins},
    but specialized so that $\alpha_i \in \alwol{i}$.
    Then there exists $\hat\gamma>0$ and $T$ so that,
    for all $t\geq T$, all margins exceed $\hat\gamma$.
\end{lemma}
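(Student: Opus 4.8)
The plan is to follow the proof of \Cref{fact:general:margins:quadub} almost verbatim, using the first Wolfe condition \cref{eq:wolfe:1} in place of the quadratic upper bound. First I would invoke \Cref{fact:general:margins:hatgamma} to fix $T_1$ and $\hat\gamma_0>0$ with $\|A^\top\nabla\cL(A\lambda_t)\|_\infty\geq\hat\gamma_0(\cL(A\lambda_t)-\bar\cL_A)$ for all $t\geq T_1$. Substituting this into \cref{eq:wolfe:1} gives, for every $t\geq T_1$,
\[
    \cL(A\lambda_{t+1})-\bar\cL_A
    \leq(\cL(A\lambda_t)-\bar\cL_A)\bigl(1-(1-\nu/2)\hat\gamma_0\alpha_{t+1}\bigr)
    \leq(\cL(A\lambda_t)-\bar\cL_A)\exp\bigl(-(1-\nu/2)\hat\gamma_0\alpha_{t+1}\bigr),
\]
and iterating from $T_1$ yields $\cL(A\lambda_{t+1})-\bar\cL_A\leq(\cL(A\lambda_{T_1})-\bar\cL_A)\exp(-(1-\nu/2)\hat\gamma_0\sum_{i=T_1+1}^{t+1}\alpha_i)$. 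Unlike the $\alqub{t}$ case there is no need for a closed form of the step (and hence no ``$0.5$ versus $1.5$'' argument): the triangle inequality alone gives $\sum_{i=T_1+1}^{t+1}\alpha_i\geq\|\lambda_{t+1}\|_1-\|\lambda_{T_1}\|_1$, so the optimality gap decays at least like $\exp\bigl(-(1-\nu/2)\hat\gamma_0(\|\lambda_{t+1}\|_1-\|\lambda_{T_1}\|_1)\bigr)$.

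Next I would pass to the worst example off the hard core. For each $t$, let $b_t\in\{\bfe_i\}_{i\in H(A)^c}$ index an example attaining $\min_{i\in H(A)^c}(-\bfe_i^\top A\lambda_t)/\|\lambda_t\|_1$. By \citet[Theorem 5.9]{primal_dual_boosting_arxiv} one has $\bar\cL_A=\inf_\lambda\cL(A_+\lambda)$, so $\cL(A\lambda_t)-\bar\cL_A=(\cL(A_+\lambda_t)-\bar\cL_A)+\cL(A_0\lambda_t)\geq\cL(A_0\lambda_t)\geq m^{-1}\ell(b_t^\top A\lambda_t)$; combined with $\exp(b_t^\top A\lambda_t)\leq C_t\ell(b_t^\top A\lambda_t)$ and the fact that $t\mapsto C_t$ is finite and nonincreasing (here $\ell\in\bL_\infty$ is used only to keep $C_t$ bounded, not to force $C_t\to1$), this gives $\exp(b_t^\top A\lambda_t)\leq mC_{T_1}(\cL(A\lambda_t)-\bar\cL_A)$ for $t\geq T_1$. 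Chaining this with the decay above, taking logarithms, and dividing by $\|\lambda_t\|_1$ yields
\[
    \frac{-b_t^\top A\lambda_t}{\|\lambda_t\|_1}\geq(1-\nu/2)\hat\gamma_0-\frac{r}{\|\lambda_t\|_1},
\]
where $r:=\ln\bigl(mC_{T_1}(\cL(A\lambda_{T_1})-\bar\cL_A)\bigr)+(1-\nu/2)\hat\gamma_0\|\lambda_{T_1}\|_1$ does not depend on $t$. Finally, \Cref{fact:general:margins:big_norms} supplies $T_2$ with $\|\lambda_t\|_1\geq2\max\{r,0\}/((1-\nu/2)\hat\gamma_0)$ for all $t\geq T_2$, so with $T:=\max\{T_1,T_2\}$ and $\hat\gamma:=(1-\nu/2)\hat\gamma_0/2$ every example off the hard core has margin at least $\hat\gamma$ for all $t\geq T$.

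I expect the only mildly delicate point to be the inequality $\cL(A\lambda_t)-\bar\cL_A\geq m^{-1}\ell(b_t^\top A\lambda_t)$: it relies on the structure theorem of \citet{primal_dual_boosting_arxiv}, namely that $\bar\cL_A$ equals the (attained) minimum of $\cL$ restricted to $\im(A_+)$, so that the easy examples contribute their entire loss to the optimality gap. This is, however, exactly the fact already used in the proof of \Cref{fact:general:margins:quadub}; everything else is a transcription of that argument with \cref{eq:wolfe:1} substituted for the quadratic bound, together with applications of \Cref{fact:general:margins:hatgamma} and \Cref{fact:general:margins:big_norms}.
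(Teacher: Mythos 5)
Your proposal is correct and follows essentially the same route as the paper: invoke \Cref{fact:general:margins:hatgamma} for $T_1$ and $\hat\gamma_0$, feed this into the first Wolfe condition \cref{eq:wolfe:1} to get a per-step multiplicative decay of the optimality gap, relate $\sum\alpha_i$ to $\|\lambda_{t+1}\|_1$ by the triangle inequality, and finish as in the proof of \Cref{fact:general:margins:quadub} using \Cref{fact:general:margins:big_norms}. The paper's own proof is a sketch that explicitly defers the tail of the argument to \Cref{fact:general:margins:quadub}; you have merely transcribed that deferral in full, and your observation that the ``$0.5$ versus $1.5$'' reflection trick is unnecessary here (the Wolfe condition already gives progress linear in $\alpha_{t+1}$, no closed form needed) is exactly the simplification the paper's Wolfe branch also exploits.
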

\begin{proof}[Proof sketch]
    Choose $T_1$ according to
    \Cref{fact:general:margins:hatgamma}, and let
    $t\geq T_1$ be arbitrary.
    Using \Cref{fact:general:margins:hatgamma},
    and using the first Wolfe condition
    (\cref{eq:wolfe:1}) just as in the proof of \Cref{fact:sep:margin:wolfe},
    \begin{align*}
        &\cL(A\lambda_{t+1}) - \bar \cL_A
        \\
        &\leq \cL(A\lambda_t) - \bar \cL_A
        - \alpha_{t+1} (1-\nu/2) \|A^\top \nabla \cL(A\lambda_t)\|_\infty
        \\
        &=( \cL(A\lambda_t) - \bar \cL_A)\left(
        1 -
        \frac{\alpha_{t+1} (1-\nu/2) \|A^\top \nabla \cL(A\lambda_t)\|_\infty}
        {\cL(A\lambda_t)- \bar \cL_A}
        \right)
        \\
        &\leq ( \cL(A\lambda_t) - \bar \cL_A)\left(
        1 -
    \alpha_{t+1} (1-\nu/2)\hat \gamma\right)
        \\
        &\leq ( \cL(A\lambda_t) - \bar \cL_A)\exp\left(
    - \alpha_{t+1} (1-\nu/2)\hat \gamma\right)
    \end{align*}
    Applying this inequality recursively,
    \begin{align*}
        &\cL(A\lambda_{t+1}) - \bar \cL_A
        \\
        &\leq
        (\cL(A\lambda_{T_1} - \bar \cL_A))\exp\left(
            -(1-\nu/2) \hat \gamma \sum_{i=T_1}^{t+1} \alpha_i
        \right).
    \end{align*}
    Note next, for any $t_0$, that
    \[
        \|\lambda_{t+1}\|_1
        \leq \|\lambda_{t_0}\|_1 + \sum_{i = t_0+1}^{t+1} \alpha_i,
    \]
    whereby
    \begin{align*}
        &\cL(A\lambda_{t+1}) - \bar \cL_A
        \\
        &\leq
        (\cL(A\lambda_{T_1} - \bar \cL_A))
        \\
        &\quad\cdot
        \exp\left(
            -(1-\nu/2) \hat \gamma (\|\lambda_{t+1}\|_1 - \|\lambda_{t_0}\|_1)
        \right),
    \end{align*}
    and the remainder of the proof proceeds just as for the quadratic upper bound
    (cf. \Cref{fact:general:margins:quadub}).
\end{proof}

\begin{proof}[Proof sketch of \Cref{fact:general:margins}]
    The case of $\alqub{t}$ and $\alwol{t}$ are handled
    by \Cref{fact:general:margins:quadub} and \Cref{fact:general:margins:wolfe}.

    Now consider the case of $\alada{t}$.  Since $\gamma = 0$,
    \Cref{fact:general:gammat_shrinks}
    grants the existence of a large $T$ so that, for all $t\geq T$,
    $\gamma_t \leq 0.1$.  Thus, by
    \Cref{fact:quadub_vs_ada}, and considering $t$ sufficiently large that
    $C_t$ is almost 1, the problem reduces to the consideration
    of $\alqub{t}$; in particular, the conditions to apply
    \Cref{fact:general:margins:quadub}, but now for the step $\alada{t}$, are satisfied.
    Note that this also handles the case $\alopt{t}$, since,
    for $\alopt{t}$ and $\alada{t}$, it was assumed that $A$ is binary and $\ell = \exp$.
\end{proof}

\subsection{Miscellaneous Technical Material}
\begin{lemma}
    \label{fact:quadub_vs_ada}
    For any $r\in [0,1)$,
    \[
        r \leq \frac 1 2 \ln\left(\frac {1+r}{1-r}\right) \leq \frac r{1-r}.
    \]
\end{lemma}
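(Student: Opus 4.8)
The plan is to treat the two inequalities separately, in each case reducing to a comparison of derivatives of functions that agree at $r=0$. Write $\phi(r) := \tfrac12\ln\bigl(\tfrac{1+r}{1-r}\bigr) = \tfrac12(\ln(1+r)-\ln(1-r))$, so that $\phi(0)=0$ and, for $r\in[0,1)$, $\phi'(r) = \tfrac12\bigl(\tfrac1{1+r}+\tfrac1{1-r}\bigr) = \tfrac1{1-r^2}$. (All three denominators $1-r$, $1+r$, $1-r^2$ are strictly positive on $[0,1)$, so every manipulation below is legitimate.)

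For the lower bound, I would observe that $\phi'(r) = \tfrac1{1-r^2}\geq 1$ on $[0,1)$; since $\phi(0)=0$, integrating yields $\phi(r) = \int_0^r \phi'(s)\,ds \geq \int_0^r 1\,ds = r$. Equivalently, one may invoke the power series $\phi(r)=\sum_{n\geq 0} r^{2n+1}/(2n+1)$, valid for $|r|<1$, whose terms are all nonnegative for $r\geq 0$ and whose leading term is exactly $r$.

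For the upper bound, set $\psi(r):=\tfrac r{1-r}$, so $\psi(0)=0$ and $\psi'(r)=\tfrac1{(1-r)^2}$. It then suffices to show $\psi'\geq \phi'$ on $[0,1)$, since this makes $\psi-\phi$ nondecreasing and vanishing at $0$. Indeed,
\[
\psi'(r)-\phi'(r) = \frac1{(1-r)^2} - \frac1{(1-r)(1+r)} = \frac{(1+r)-(1-r)}{(1-r)^2(1+r)} = \frac{2r}{(1-r)^2(1+r)} \geq 0
\]
for $r\in[0,1)$, which finishes the argument.

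There is no genuine obstacle here — the statement is an elementary calculus fact, and the only point requiring a little care is that the denominators do not vanish on $[0,1)$ so that the derivative computations and the integration are valid; the endpoint $r=0$ is the equality case on both sides.
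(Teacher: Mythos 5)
Your proof is correct. For the lower bound your argument is essentially the paper's: the paper notes $g$ is convex with $g'(0)=1$ and concludes $g(r)\geq r$ from the tangent line at $0$, while you observe $\phi'(r)=1/(1-r^2)\geq 1$ and integrate — these are the same inequality on the derivative, packaged slightly differently. For the upper bound you take a genuinely different (if equally elementary) route: the paper rewrites $\tfrac{1+r}{1-r}=1+\tfrac{2r}{1-r}$ and applies the standard concavity bound $\ln(1+z)\leq z$ in one step, whereas you compare the derivatives of $\phi$ and $\psi(r)=r/(1-r)$ and integrate from the common value at $0$. Both are fine; the paper's version is a one-line algebraic trick that avoids introducing a second function, while yours is more systematic and mirrors the structure of your lower-bound argument, which some readers may find cleaner for its uniformity.
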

\begin{proof}
    Set $g(r) := \frac 1 2 \ln((1+r)/(1-r))$.  Note that
    \[
        g'(r) = (1-r^2)^{-1}
        \qquad
        \textup{and}
        \qquad
        g''(r) = \frac {2r}{(1-r^2)^2}.
    \]
    As such, $g$ is convex (along $[0,1)$) and $g'(0) = 1$, thus $g(r) \geq r$
    along $[0,1)$.  The second part follows from concavity of $\ln(\cdot)$:
    \[
        \frac 1 2 \ln\left(\frac {1+r}{1-r}\right)
        = \frac 1 2 \ln\left(1 + \frac {2r}{1-r}\right)
        \leq \frac 1 2 \left(\frac {2r}{1-r}\right).
        \qedhere
    \]
\end{proof}

\begin{lemma}
    \label{fact:general:gammat_shrinks}
    Under the conditions of \Cref{fact:general:margins},
    $\lim_{t\to\infty} \gamma_t = 0$.
\end{lemma}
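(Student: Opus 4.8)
The plan is to show that every iteration decreases $\cL$ by at least a fixed multiple of $\gamma_{t+1}^2$, and then to telescope the resulting bound in logarithmic scale against the strictly positive limit $\bar\cL_A$, so that $\sum_t\gamma_t^2<\infty$ and hence $\gamma_t\to 0$.

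First I would record two preliminary facts. The hypotheses of \Cref{fact:general:margins} force $|H(A)|\geq 1$, so the instance is not separable, i.e. $\gamma=0$; by LP duality there is $w\in\Delta_m$ with $A^\top w=0$, so for every $\lambda$ the vector $A\lambda$ has a nonnegative coordinate, giving $\cL(A\lambda)\geq \ell(0)/m>0$ and thus $\bar\cL_A>0$. Moreover, \Cref{fact:general:opt} applies to each of the step sizes considered in \Cref{fact:general:margins} and yields $\cL(A\lambda_t)\downarrow\bar\cL_A$; and $C_t\leq C_1<\infty$, since $C_\ell$ is nondecreasing and $\cL(A\lambda_t)\leq\cL(A\lambda_1)$.

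Next I would assemble, uniformly over the four admissible step sizes, the single-step estimate
\[
    \cL(A\lambda_{t+1})\ \leq\ \cL(A\lambda_t)\exp\!\left(-\frac{\nu(2-\nu)}{8C_1^6}\,\gamma_{t+1}^2\right).
\]
For $\alqub{t+1}$ this is \Cref{fact:qub:singlestep} with $\alpha=\alqub{t+1}$ (using $C_{t+1}\leq C_1$ and that the constant $2$ there is at most $8$). For $\alopt{t+1}$ with $\ell=\exp$ and $A$ binary, $\alopt{t+1}$ is the right endpoint of the interval $[\alqub{t+1},\alopt{t+1}]$ (using $\alqub{t+1}\leq\alopt{t+1}$ from \Cref{fact:singlestep:qub_opt_relation}), so \Cref{fact:qub:singlestep} applies again; the same covers $\alada{t+1}$ with $\ell=\exp$ and $A$ binary, since there $\alada{t+1}=\alopt{t+1}$ (alternatively, \Cref{fact:sep:opt:ada} with $C_{t+1}=1$ gives it directly). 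For $\alwol{t+1}$ the displayed inequality is precisely the per-step bound proved en route to \Cref{fact:sep:opt:wolfe}. The deliberately crude constant $8C_1^6$ is chosen so that one statement subsumes all four cases.

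Finally I would take logarithms of the displayed bound and telescope: summing $\ln\cL(A\lambda_{t+1})-\ln\cL(A\lambda_t)\leq -\frac{\nu(2-\nu)}{8C_1^6}\gamma_{t+1}^2$ over $t$ and using $\cL(A\lambda_t)\geq\bar\cL_A>0$ yields $\sum_{t}\gamma_t^2\leq \frac{8C_1^6}{\nu(2-\nu)}\ln\!\big(\cL(A\lambda_0)/\bar\cL_A\big)<\infty$, a bound independent of the number of terms, whence $\gamma_t\to 0$. The only genuine work here is more bookkeeping than difficulty, namely checking that a single per-step inequality holds across the heterogeneous step sizes (constrained Wolfe, quadratic upper bound, and the unconstrained optimal/AdaBoost step in the binary exponential-loss regime); the observation that makes this painless is that the latter two coincide in that regime and sit inside the interval $[\alqub{t},\alopt{t}]$ already handled by \Cref{fact:qub:singlestep}.
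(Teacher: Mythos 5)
Your proof is correct and takes essentially the same approach as the paper's: both derive a uniform per-step geometric decrease $\cL(A\lambda_{t+1}) \leq \cL(A\lambda_t)\exp(-c\gamma_{t+1}^2)$ (the paper states the equivalent form $\cL(A\lambda_{t+1}) \leq \cL(A\lambda_t) - \gamma_{t+1}^2\cL(A\lambda_t)/c$) from the single-step lemmas, use $C_t \leq C_1$ to make the constant $t$-independent, and invoke $\bar\cL_A > 0$ from nonseparability. The only cosmetic difference is the final limiting step: you telescope in log scale to conclude $\sum_t \gamma_t^2 < \infty$, whereas the paper deduces $\gamma_t^2 \to 0$ directly from $\cL(A\lambda_t) - \cL(A\lambda_{t+1}) \to 0$ for the monotone sequence converging to $\bar\cL_A$; your version is marginally stronger but the two arguments are interchangeable.
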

\begin{proof}[Proof sketch]
    As discussed in the proof of \Cref{fact:general:opt},
    every step size provides a guarantee of the type
    \[
        \cL(A\lambda_{t+1}) \leq \cL(A\lambda_{t}) -\frac{\gamma_t^2\cL(A\lambda_t)}{c}
    \]
    for some $c>0$ (independent of $t$).
    The result follows by rearranging this expression
    and using $\cL(A\lambda_t) \geq \bar\cL_A > 0$ (i.e., nonseparability)
    and $\cL(A\lambda_{t} - \cL(A\lambda_{t+1}) \to 0$
    (i.e., the convergence result, \Cref{fact:general:opt}).
\end{proof}

\end{document}